\theoremstyle{plain}
\newtheorem{theorem}{Theorem}[section]
\newtheorem{lemma}[theorem]{Lemma}
\newtheorem{corollary}[theorem]{Corollary}
\theoremstyle{definition}
\newtheorem{definition}[theorem]{Definition}
\newtheorem{example}[theorem]{Example}
\theoremstyle{remark}
\newtheorem{remark}[theorem]{Remark}
\DeclareMathOperator*{\argmin}{arg\,min}
\DeclareMathOperator*{\argmax}{arg\,max}
\def\1{\mathds{1}}
\def\E{\mathbb{E}}
\def\P{\mathbb{P}}
\def\R{\mathbb{R}}
\def\S{\mathcal{S}}
\def\A{\mathcal{A}}
\def\X{\mathcal{X}}
\def\C{\mathcal{C}}
\def\D{\mathcal{D}}
\def\bk{(k)}
\def\bkone{(k+1)}
\def\critic{\mathrm{critic}}
\def\actor{\mathrm{actor}}
\def\KL{\mathrm{KL}}
\def\algofullname{Dual Approximation Policy Optimization\xspace}
\def\algoacronym{DAPO\xspace}
\def\targ{\mathrm{targ}}
\def\AMPO{AMPO\xspace}
\newcommand{\mc}[1]{\mathcal{#1}}
\newcommand{\Sp}[1]{\left(#1\right)}
\newcommand{\Mp}[1]{\left[#1\right]}
\newcommand{\Bp}[1]{\left\{#1\right\}}
\newcommand{\abs}[1]{\left|#1\right|}
\newcommand{\Norm}[1]{\left\|#1\right\|}
\newcommand{\inner}[1]{\left\langle#1\right\rangle}
\newcommand{\matenv}[1]{\left[\begin{matrix}#1\end{matrix}\right]}
\newcommand{\dom}{\mathrm{dom}\,}
\newcommand{\intr}{\mathrm{int}\,}
\newcommand{\proj}{\mathrm{proj}}
\newcommand{\ones}{\mathbf{1}}
\newcommand{\mmid}{\ \middle|\ }
\newcommand{\revision}[1]{{\color{black} #1}}
\newcommand{\IfTwoColumnElse}[2]{%
    \if@twocolumn
        #1 % Code for two-column layout
    \else
        #2 % Code for one-column layout
    \fi
}
\title{\algofullname}
\author{% 
Zhihan Xiong\\
% Paul G. Allen School of Computer Science \& Engineering\\
University of Washington\\
Seattle, WA 98195, USA \\
\texttt{zhihanx@cs.washington.edu} \\
\And
Maryam Fazel\\
% Department of Electrical and Computer Engineering\\
University of Washington\\
Seattle, WA 98195, USA \\
\texttt{mfazel@uw.edu} \\
\And
Lin Xiao\\
% Fundamental AI Research (FAIR)\\
% Meta\\
FAIR at Meta\\
Seattle, WA 98109, USA \\
\texttt{linx@meta.com}\\
}
\begin{document}
\maketitle

\begin{abstract}
	We propose Dual Approximation Policy Optimization (DAPO), a framework that incorporates general function approximation into policy mirror descent methods. In contrast to the popular approach of using the $L_2$-norm to measure function approximation errors, DAPO uses the dual Bregman divergence induced by the mirror map for policy projection. This duality framework has both theoretical and practical implications: not only does it achieve fast linear convergence with general function approximation, but it also includes several well-known practical methods as special cases, immediately providing strong convergence guarantees.

%We propose Dual Mirror Policy Optimization (DMPO), a framework that incorporates general function approximation into policy mirror descent methods. In contrast to the popular approach of using the $L_2$-norm to measure function approximation errors, we use the Bregman divergence induced by the dual mirror map of the one used in policy projection. The compatibility of the primal and dual mirror maps has both theoretical and practical implications: not only does DMPO achieve fast linear convergence with general function approximation, but it also includes several well-known practical methods as special cases, immediately providing them with strong convergence guarantees.
\end{abstract}

\section{Introduction}
\label{sec:intro}

Policy gradient methods represent a paradigm shift in reinforcement learning from value-based methods \citep{watkins1989learning, puterman1994markov, bertsekas2015dynamic} to a more direct approach of policy optimization \citep{williams1992simple, sutton1999policy, konda1999actor}.
In particular, the natural policy gradient (NPG) method of \citet{kakade2001natural} inspired later development of trust region policy optimization (TRPO) \citep{schulman2015trust} and proximal policy optimization (PPO) \cite{schulman2017proximal}, both with great empirical success.
%\citep{schulman2015trust, schulman2017proximal}.

These successes ignited considerable efforts to understand policy gradient methods from a theoretical perspective. Among them, \citet{neu2017unified} first connected NPG with the mirror descent (MD) algorithm \citep{nemirovskij1983problem, beck2003mirror}, which led to a more general class of policy mirror descent (PMD) methods. 
%Then, building upon different techniques, the analysis of 
Convergence guarantees for tabular PMD methods progressed from sublinear convergence \citep{shani2020adaptive, agarwal2021theory} to linear convergence \citep{xiao2022convergence, lan2023policy, johnson2023optimal}. 
Then the linear convergence results were extended to PMD methods with linear function approximation \citep{yuan2022linear}, and more recently with general function approximation \citep{alfano2023novel}. 
%Meanwhile, \citet{tomar2020mirror} and \citet{vaswani2021general} successfully brought PMD into real-world applications.

However, the progresses of PMD on the empirical and theoretical fronts are more or less disjoint, especially concerning general function approximation.  
One one hand, \citet{tomar2020mirror} and \citet{vaswani2021general} derived practical algorithms from the MD principle, but with no or limited convergence guarantees. 
%proposed practical implementations of PMD under specific mirror maps, they contain either no theoretical guarantee or limited analysis based on general non-convex optimization techniques. 
On the other hand, \citet{alfano2023novel} proposed Approximate Mirror Policy Optimization (AMPO), a PMD framework that has  linear convergence guarantee with general function approximation, but has limited empirical success
(see our empirical study in Section~\ref{sec:experiments}).
%provably converges under general function approximation, as our experiment results in Section \ref{sec:experiments} indicate, it is very hard for AMPO to learn in complicated control tasks such as those in MuJoCo \citep{todorov2012mujoco}.

In this paper, we aim to bridge this gap between theory and practice by proposing \algofullname (\algoacronym), a new PMD framework that incorporates general function approximation. 
In contrast to AMPO, which uses the squared $L_2$-norm to measure the function approximation error and tries to minimize it for policy update, \algoacronym uses the \revision{\textit{dual Bregman divergence}} generated by the mirror map used for policy projection.

We present several instantiations of \algoacronym using different mirror maps and prove linear convergence rates for two variants, \algoacronym-$L_2$ equipped with the squared $L_2$-norm as mirror map, and \algoacronym-KL with the negative entropy.
We show that \algoacronym-KL includes two state-of-the-art practical algorithms as special cases: 
Soft Actor-Critic (SAC) of \citet{haarnoja2018soft2} and Mirror Descent Policy Optimization (MDPO) of \citet{tomar2020mirror}, thus immediately providing them with strong convergence guarantees.
We compare \algoacronym with SAC and AMPO on several standard MuJoCo benchmark tasks to demonstrate the effectiveness of 
%\revision{this primal-dual consistency}.
\revision{this duality framework}.

In addition, in order to work with negative entropy restricted on the simplex in the setting of general function approximation, we extend the MD theory to work with mirror maps whose gradient mapping and conjugate mapping are not inverses of each other, a technical contribution of independent interest. 

%as its special cases and achieves comparable empirical performance in MuJoCo-based control tasks. Meanwhile, we prove that \algoacronym achieves linear convergence as long as the chosen mirror map satisfies certain properties. Finally, we further extend our analysis to domains with continuous state and action spaces. 

%In addition, we emphasize one technical contribution: in order to work with negative entropy restricted on the simplex, we extend the MD algorithm to work with mirror maps whose gradient mapping and conjugate gradient mapping are not inverses of each other. 

%Finally, we compare \algoacronym with SAC and AMPO on several standard MuJoCo benchmark tasks to demonstrate the effectiveness of compatible mirror approximation.
\section{Preliminaries}
\label{sec:prelim}

%In this section, 
We first review the background of Markov decision processes (MDPs) 
% with finite state and action spaces 
and the general MD algorithm.
% Detailed related work is in Appendix \ref{sec:detailed_related_work} due to space constraint.

\subsection{Markov Decision Processes}
\label{sec:mdp_prelim}

Let $\Delta(\mc{X})=\Bp{p\in\R^{\abs{\X}}\mid \sum_{x\in\X}p_x=1 \text{ and } p_x\geq 0, \forall x}$ denote the probability simplex over an arbitrary finite set~$\mc{X}$. 
%In this paper, 
We consider an infinite-horizon Markov Decision Process (MDP), denoted as $\mc{M}=\Sp{\S, \A, \mc{P}, c, \gamma}$, where $\S$ is a finite state space, $\A$ is a finite action space, $\mc{P}:\S\times\A\mapsto\Delta(\S)$ is the transition kernel, $c:\S\times\A\mapsto[0, 1]$ is the single-step cost function and $\gamma\in(0,1)$ is the discount factor. A stationary policy is defined as a function $\pi:\S\mapsto\Delta(\A)$ such that $\pi_s$ is a probability distribution over $\A$ for each $s\in\S$. At each time $t$, an agent with policy $\pi$ takes an action $a_t\sim \pi_{s_t}$, which sends the MDP to the new state $s_{t+1}\sim\mc{P}(s_t, a_t)$ and
incurs a single-step cost $c(s_t,a_t)$. % and transitions to 
%single-step cost $c(s_t, a_t)$. 

Our main objective is to find a policy that minimizes the accumulated, discounted cost starting from an initial state distribution $\rho\in\Delta(\S)$. 
%value function\footnote{it should be ``cost'' function for minimization, but we stick with ``value'' function}, 
%under the policy $\pi$ and 
%the initial state distribution $\rho\in\Delta(\S)$ 
Formally, it is defined as 
$V_{\rho}^{\pi}=\E_{s\sim\rho}\Mp{V^\pi_s}$, where
%\begin{equation}\label{equ:def_V}
%    V_s^\pi=\E_{a_t\sim \pi_{s_t}}\biggl[\,\sum_{t=0}^{\infty}\gamma^tc(s_t, a_t)\mid s_0=s\,\biggr].
%\end{equation}
\begin{equation}\label{equ:def_V}
\textstyle
    V_s^\pi=\E_{a_t\sim \pi_{s_t}}\bigl[\,\sum_{t=0}^{\infty}\gamma^tc(s_t, a_t)\mid s_0=s\,\bigr].
\end{equation}
The corresponding Q-value function under policy $\pi$ and state-action pair $(s, a)$ is defined as
%\begin{equation}\label{equ:def_Q}
%    Q^{\pi}_{s, a}=\E_{a_t\sim \pi_{s_t}}\biggl[\,\sum_{t=0}^{\infty}\gamma^tc(s_t, a_t)\mid s_0=s, a_0=a\,\biggr].
%\end{equation}
\begin{equation}\label{equ:def_Q}
\textstyle
    Q^{\pi}_{s, a}=\E_{a_t\sim \pi_{s_t}}\bigl[\,\sum_{t=0}^{\infty}\gamma^tc(s_t, a_t)\mid s_0=s, a_0=a\,\bigr].
\end{equation}
We use $Q^{\pi}_s\in\R^{\abs{\A}}$ to denote the vector $\Mp{Q^{\pi}_{s, a}}_{a\in\A}$ and we immediately have $V_s^{\pi}=\inner{Q^{\pi}_s, \pi_s}$.

%\paragraph{State-visitation distribution and policy gradient.} 
With initial distribution $\rho\in\Delta(\S)$, we define the discounted state-visitation distribution under $\pi$ as
%\begin{equation}\label{equ:def_visitation}
%    d^{\pi}_{\rho, s}=(1-\gamma)\sum_{t=0}^{\infty}\gamma^t\,\P^{\pi}_{s_0\sim\rho}\Sp{s_t=s}, 
%\end{equation}
\begin{equation}\label{equ:def_visitation}
\textstyle
    d^{\pi}_{\rho, s}=(1-\gamma)\sum_{t=0}^{\infty}\gamma^t\,\P^{\pi}_{s_0\sim\rho}\Sp{s_t=s}, 
\end{equation}
where $\P^{\pi}_{s_0\sim\rho}\Sp{s_t=s}$ represents the probability that $s_t=s$ if the agent follows policy $\pi$ and the initial state $s_0$ is sampled from distribution $\rho$. 
%For this quantity, we define $\P^{\pi}_{s_0\sim\rho}(s_0=s)=\rho_s$. 
We can easily verify that $\sum_{s\in\S}d^{\pi}_{\rho, s}=1$ and thus $d^\pi_\rho\in\R^{\abs{\S}}$ is a valid probability distribution. Meanwhile, by truncating all terms with $t\geq 1$ in the sum in~\eqref{equ:def_visitation}, we obtain $d^{\pi}_{\rho, s}\geq(1-\gamma)\rho_s$ for any $s\in\S$.

The gradient of $V^{\pi}_{\rho}$ with respect to $\pi$ is given by the policy gradient theorem \citep{sutton1999policy} as
%\begin{equation}\label{eqn:policy-grad}
%\nabla_s V^{\pi}_{\rho}:=\frac{\partial V^{\pi}_{\rho}}{\partial \pi_s}=\frac{1}{1-\gamma}d^{\pi}_{\rho, s}Q^{\pi}_{s}. 
%\end{equation}
\begin{equation}\label{eqn:policy-grad}
\textstyle
\nabla_s V^{\pi}_{\rho}:=\frac{\partial V^{\pi}_{\rho}}{\partial \pi_s}=\frac{1}{1-\gamma}d^{\pi}_{\rho, s}Q^{\pi}_{s}\in \R^{\abs{\A}}. 
\end{equation}
Then, we define $\nabla V^{\pi}_{\rho}\in\R^{\abs{\S}\times\abs{\A}}$ as the concatenation of $\nabla_s V^{\pi}_{\rho}$ for all $s\in\S$.
%\linx{may need to write the form with function approximation directly. XXX more notations later.}

%Finally, we use $\pi^\star$ to denote arbitrary comparator policy and for policies with index or star such as $\pi^{\bk}$ and $V^\star$, we will use $V^{\bk}_\rho$ (%aka 
%respectively $V^\star_\rho$) as a shorthand for $V^{\pi^{\bk}}_{\rho}$ (%aka 
%respectively $V^{\pi^\star}_{\rho}$) and similarly for $Q^{\bk}_{s, a}$ and $d^{\bk}_{\rho}$ (%aka 
%$Q^\star_{s, a}$ and $d^\star_{\rho}$).
%aka means 'also known as', not the right term here

\subsection{Mirror Descent}
\label{sec:fmd_prelim}

%\linx{simplify intro of Legendre function, put details in the appendix. Focus on construction/decomposition of MD!}

Mirror descent (MD) is a general framework for the construction and analysis of optimization algorithms \citep{nemirovskij1983problem}.
%which covers the (projected) gradient descent method as a special case.
Its key machinery is a pair of conjugate mirror maps that map the iterates of an optimization algorithm back-and-forth between a primal space and a dual space.  
We follow the common practice of defining the mirror maps with the gradient mapping of a convex function of \emph{Legendre-type} \citep[Section~26]{Rockafellar1970book}.

% \subsubsection{Mirror descent (MD) algorithm} 

%To describe the MD algorithm, 
Let's first define \emph{Bregman divergence} and \emph{Bregman projection}.
Suppose that~$\Psi$ is a convex function of Legendre type.
It induces a Bregman divergence between any $x\in\dom\Phi$ and $y\in\intr(\dom\Phi)$:
%is defined as 
\begin{equation}
    \label{equ:bregman}
    D_{\Phi}(x, y)=\Phi(x) - \Phi(y) - \inner{\nabla\Phi(y), x - y}.
\end{equation}
%Since $\Phi$ is strictly convex, we immediately have $D_{\Phi}(x,y)\geq 0$ and $D_{\Phi}(x,y)=0$ if and only if $x=y$. 
%Bregman divergence serves as a measure of ``distance'' following the geometry induced by~$\Phi$.\footnote{It is not a real distance because in general it can be asymmetric, i.e., we may have $D_{\Phi}(x,y)\neq D_{\Phi}(y,x)$.} 
Let $\mc{C}\in\dom\Phi$ be a closed convex set. 
The \emph{Bregman projection} of any $y\in\intr(\dom\Phi)$ onto $\mc{C}$ is 
\begin{equation}\label{eqn:bregman-proj}
\proj^{\Phi}_{\mc{C}}(y) = \argmin_{x\in\mc{C}} D_{\Phi}(x, y).
\end{equation}
Properties of Bregman projection can be found in, e.g., \cite{Bauschke1997}.

% Use newpage to help move equation to the previous page
\newpage

Now consider the problem of minimizing a convex function $f:\X\to\R\cup\{\infty\}$ over a closed convex set~$\mc{C}\subset\dom\Phi$.
We use the presentation of MD given by \citet{bubeck2015book}: at each iteration~$k$,
\begin{enumerate} 
\item 
Given $x^{(k)}$, find $y^{(k+1)}$ such that 
\begin{equation} \label{eqn:md-dual-update}
\nabla\Phi(y^{(k+1)}) = \nabla\Phi(x^{(k)}) - \eta_k g^{(k)}.
\end{equation}
where $\eta_k$ is the step size
%at iteration~$k$, 
and $g^{(k)}$ is the gradient $\nabla f(x^{(k)})$ or a sub-gradient of~$f$ at $x^{(k)}$.
%or a stochastic gradient in the context of stochastic optimization.
\item Compute $x^{(k+1)} = \proj^{\Phi}_{\C}(y^{(k+1)})$.
\end{enumerate}

Define the \textit{conjugate function} of $\Phi$ as $\Phi^*(x^*)=\sup_{x\in\dom\Phi}\Bp{\inner{x, x^*}-\Phi(x)}$. Then, using the definition in~\eqref{eqn:bregman-proj} and the identity $\nabla\Phi^*(\nabla\Phi(x))=x$, we can express it more compactly as
\begin{equation}\label{eqn:md}
%x^{(k+1)}=\argmin_{x\in\mc{C}}D_{\Phi}\left(x, \nabla\Phi^*\bigl(\nabla\Phi(x^{(k)})-\eta_k g^{(k)}\bigr)\right).
x^{(k+1)}\!\!=\argmin_{x\in\mc{C}}D_{\Phi}\!\left(\!x, \nabla\Phi^*\!\bigl(\nabla\Phi(x^{(k)})\!-\!\eta_k g^{(k)}\bigr)\!\right),
\end{equation}
which can be further simplified to \citep{beck2003mirror,bubeck2012regret}
\begin{equation}\label{eqn:md-prox} 
x^{(k+1)}\! =\argmin_{x\in\mc{C}}\Bp{\eta_k\bigl\langle g^{(k)}, x\bigr\rangle + D_{\Phi}(x, x^{(k)})}. 
\end{equation}

%\subsubsection{Three examples of MD}

Next we discuss three examples of the MD algorithm for solving $\min_{x\in\Delta} f(x)$, where $\Delta$ is the simplex.
Each leads to a variant of the \algoacronym method we will present in Section~\ref{sec:compo}.
\vspace{1ex}
\begin{example}[Squared $L_2$-norm]\label{ex:squared-2norm}
Let $\Phi(x)=\frac{1}{2}\|x\|_2^2$, which is Legendre type with $\intr(\dom\Phi)=\dom\Phi=\R^n$. We have $\Phi^*(x^*)=\frac{1}{2}\|x^*\|_2^2$, $\nabla\Phi(x)=x$, $\nabla\Phi^*(x^*)=x^*$, and $\D_{\Phi}(x,y)=\frac{1}{2}\|x-y\|_2^2$.
In this case, the MD algorithm~\eqref{eqn:md} becomes the classical projected gradient method 
\begin{align*}
x^{(k+1)} 
%&= \proj_{\Delta}^{\|\cdot\|_2^2}\bigl(x^{(k)} - \eta_k \nabla f(x^{(k)})\bigr) \\
&=\argmin_{x\in\Delta} \bigl\|x-\bigl(x^{(k)} - \eta_k \nabla f(x^{(k)})\bigr) \bigr\|_2^2.
\end{align*}
\end{example}

\begin{example}[Negative entropy on $\R^n_+$]\label{ex:ent-Rn+}
%Consider the (generalized) negative entropy 
Consider the negative entropy 
%$\Phi(x) = \sum_i x_i\log(x_i)$ 
$\Phi(x) = \sum_i (x_i\log(x_i) - x_i)$ 
with $\dom \Phi=\R^n_+$ (and the convention $0\log0 = 0$).
It is of Legendre type, with 
%$\Phi^*(x^*) = \sum_i \exp(x^*_i-1)$, 
%$\nabla\Phi(x)=\log(x)+\ones$ and
%$\nabla\Phi^*(x^*)=\exp(x^*-\ones)$, 
$\Phi^*(x^*) = \sum_i \exp(x^*_i)$, 
$\nabla\Phi(x)=\log(x)$ and
$\nabla\Phi^*(x^*)=\exp(x^*)$, 
where $\log$ and $\exp$ apply component-wise to vectors.
For any $x\in\R^n_+$ and $y\in\R^n_{++}$, 
their Bregman divergence is the KL-divergence:
\begin{equation}\label{eqn:KL-div}
\textstyle D_{\Phi}(x,y) = \sum_i (x_i\log(x_i/y_i) - x_i + y_i).
\end{equation}
In this case, the Bregman projection of $y\in\R^n_{++}$ onto~$\Delta$ is
$\proj_\Delta^{\Phi}(y) = y/\|y\|_1 $
%and the MD algorithm
and ~\eqref{eqn:md} becomes 
\begin{equation}\label{eqn:exp-grad}
x^{(k+1)} = x^{(k)}\exp(-\eta_k g^{(k)})/ \|x^{(k)}\exp(-\eta_k g^{(k)})\|_1.
% x^{(k+1)} = \frac{x^{(k)}\exp(-\eta_k g^{(k)})}{\|x^{(k)}\exp(-\eta_k g^{(k)})\|_1}.
\end{equation}
\end{example}

\begin{example}[Negative entropy on $\Delta$]\label{ex:ent-simplex}
%Let $\phi$ be the negative entropy function, i.e., 
Let $\phi(x)=\sum_i (x_i\log(x_i)-x_i)$ and define
$\Phi(x) = \phi(x) + \delta(x|\Delta)$, where $\delta(\cdot|\Delta)$ is the indicator function of~$\Delta$, i.e., $\delta(x|\Delta)=0$ if $x\in\Delta$ and $+\infty$ otherwise. 
Apparently $\dom\Phi=\Delta$, which has an empty interior.
As a result, $\Phi$ is not of Legendre type and in fact is not differentiable (see Appendix~\ref{sec:legendre}).
However, the MD algorithm is still well-defined. 
Specifically, in~\eqref{eqn:md-dual-update} we interpret $\nabla\Phi(x^{(k)})$ as any subgradient in the subdifferential $\partial\Phi(x^{(k)})$, and find $y^{(k+1)}$ such that there exists some $\nabla\Phi(y^{(k+1)})\in\partial\Phi(y^{(k+1)})$ to make the equality hold.

Despite $\nabla\Phi(y)$ being multi-valued as a subgradient, the Bregman divergence~\eqref{equ:bregman} is still well defined (Corollary \ref{coro:breg_well_defined}).
% \[
% \langle\nabla\Phi(y),x-y\rangle 
% = \langle\nabla \phi(y)+c\ones,x-y\rangle
% = \langle\nabla \phi(y),x-y\rangle
% \]
% %for any $x, y\in\Delta$ and any $c\in\R$.
% for any $c\in\R$. (see general case in Lemma \ref{lem:Breg_well_defined})
As a result, 
%the Bregman divergence 
$D_\Phi$ is the same as~\eqref{eqn:KL-div}. Using the fact $x,y\in\Delta$, it can be simplified as
%$\textstyle D_{\Phi}(x,y) = \sum_i x_i\log(x_i/y_i)$.
\begin{equation}\label{eqn:KL-simplex}
\textstyle D_{\Phi}(x,y) = \sum_i x_i\log(x_i/y_i).
\end{equation}

In addition, we have 
$\Phi^*(x^*) = \log\left(\sum_i\exp(x^*_i)\right)$
with $\dom\Phi^*=\R^n$
\citep[Section~16]{Rockafellar1970book}. % and
Clearly, $\Phi^*$ is a differentiable function throughout $\R^n$ and
$$\nabla\Phi^*(x^*) = \exp(x^*)/\|\exp(x^*)\|_1.$$
In this case, the MD algorithm~\eqref{eqn:md} yields the same update as~\eqref{eqn:exp-grad}.
However, the projection step is no longer needed because the range of $\nabla\Phi^*$ is the interior of~$\Delta$ and we can simply express MD as
\[
x^{(k+1)}=\nabla\Phi^*\bigl(\nabla\Phi(x^{(k)})-\eta_k g^{(k)}\bigr).
\]
\end{example}

%\paragraph{Remark} 
\begin{remark}
Although Examples~\ref{ex:ent-Rn+} and~\ref{ex:ent-simplex} give the same update~\eqref{eqn:exp-grad}, 
there are subtle differences in the theory.
%their subtle difference play a crucial role in our development of policy optimization methods in the next section.
In particular, we have $\nabla\Phi^*=(\nabla\Phi)^{-1}$ in Example~\ref{ex:ent-Rn+} and the equivalence between~\eqref{eqn:md} and~\ref{eqn:md-prox} is easily established.
However, this is not the case for Example~\ref{ex:ent-simplex} because~$\Phi$ is not Legendre and nondifferentiable.
%In addition, with general function approximation, 
Consequently, we can no longer leverage the convex optimization machinery as in the tabular case \cite[e.g.,][]{xiao2022convergence,lan2023policy} for convergence analysis with general function approximation.
Instead, we extend the classical MD theory to work with mirror maps whose gradient mapping and conjugate mapping are not inverses of each other; see Appendix~\ref{sec:legendre}, Lemma~\ref{lem:affine}.
\end{remark}

%\revision{Meanwhile, as a technical contribution, we establish a novel relaxation of the Legendre-type function to facilitate our later analysis. The details of Legendre-type function and its relaxation are deferred to Appendix \ref{sec:legendre}.}

% \newpage
%\section{\algofullname}
\section{Policy Optimization with Dual Function Approximation}
\label{sec:compo}

%In this section, we present \algofullname (\algoacronym), a general framework for incorporating function approximation into mirror descent method for policy optimization. We derive three variants of CoMPO using the different mirror maps presented in Examples~\ref{ex:squared-2norm}-\ref{ex:ent-simplex} and make connections with related work. (XXX remove?)

%\revision{\textbf{Notation.} Starting from this section, we use $\pi^\star$ to denote arbitrary comparator policy and for policies with index or star such as $\pi^{\bk}$ and $\pi^\star$, we will use $V^{\bk}_\rho$ (respectively $V^\star_\rho$) as a shorthand for $V^{\pi^{\bk}}_{\rho}$ (respectively $V^{\pi^\star}_{\rho}$) and similarly for $Q^{\bk}_{s, a}$ and $d^{\bk}_{\rho}$ (respectively $Q^\star_{s, a}$ and $d^\star_{\rho}$).}

Recall the setting of MDP in Section~\ref{sec:mdp_prelim}.
In the tabular case, the policy mirror descent (PMD) method 
\citep{shani2020adaptive, lan2023policy, xiao2022convergence}
takes the form of~\eqref{eqn:md-prox}: 
\begin{equation}\label{eqn:pmd-tabular}
\pi^{\bkone}_{s}=\argmin_{\pi_s\in\Delta(\A)}\Bigl\{\eta_k\bigl\langle\widehat{Q}^{\bk}_{s}, \pi_s\bigr\rangle + D_{\Phi}(\pi_s, \pi^{\bk}_s)\Bigr\},
\quad s\in\S,
\end{equation}
%    \label{equ:stochastic_pmd}\\
%=&\argmin_{\pi_s\in\Delta(\A)}D_{\Phi}\Bigl(\pi_s, \nabla\Phi^*\bigl(\nabla\Phi(\pi^{\bk}_s) - \eta_k\widehat{Q}^{\bk}_{s}\bigr)\Bigr),
%\tag{By Lemma \ref{lem:mirror_equivalence}.}
%for each $s\in\S$, 
where $\widehat{Q}^{\bk}$ is some approximation of $Q^{(k)}$.
We note that $Q^{(k)}$ is not the gradient of the value function $V_\rho^\pi$ at $\pi^{(k)}$, which is given in~\eqref{eqn:policy-grad}; rather, it is a preconditioned gradient \citep{kakade2001natural}. 

%Note the above optimization problem's solution always exists since $\Psr(\A)\subset\R^{\abs{\A}}$ is a compact set.

When the size of the state-action space becomes large (possibly infinite), we have to resort to function approximation.
Specifically, let $\pi^\theta$
%$\pi^\theta:\Theta\to\Delta(\A)^{|\S|}$ 
be a differentiable mapping from the set of parameters $\Theta\subset\R^n$ to the set of stochastic policies.
The parameter update step corresponding to~\eqref{eqn:pmd-tabular} becomes
\begin{equation}\label{eqn:MDPO-theta-update}
\theta^{(k+1)} \!= \argmin_{\theta\in\Theta}\; 
\E_{s\sim d^{(k)}_\rho}\!\!\left[ \E_{a\sim \pi^{\theta}_s}\!\bigl[\widehat{Q}^{(k)}_{s,a}\bigr] \!+\! D_{\Phi}(\pi^{\theta}_s,\pi^{(k)}_s)\right],
\end{equation}
% \todo{notations introduced inline here.}
where $\pi^{(k)}$ means $\pi^{\theta^{(k)}}$, and $d^{(k)}_\rho$ and $\widehat{Q}^{(k)}_{s,a}$ are simple notations for $d^{\pi^{(k)}}_\rho$ and $\widehat{Q}^{\pi^{(k)}}_{s,a}$ respectively.
This approach is adopted by, e.g., \citet{tomar2020mirror} and \citet{vaswani2021general}.
However, the optimization problem is no longer convex in~$\theta$, and its convergence analysis becomes more challenging.

\citet{alfano2023novel} introduced Approximate Mirror Policy Optimization (AMPO), a framework that incorporates general parametrization into PMD with convergence guarantees.
A key instrument they introduced is the \emph{Bregman projected policy} class. 
The idea is to use a parametrized function $f^\theta:\S\times\A\to\R$ to approximate the dual update in~\eqref{eqn:md-dual-update}, which in the context of PMD is
$\nabla\Phi(\pi^{(k)})-\eta_k \widehat{Q}^{(k)}$.
Then follow the second step in MD to define the policy class
\[
\big\{\pi^\theta: \pi^\theta_s = \proj^\Phi_{\Delta(\A)}\bigl(\nabla\Phi^*(f^\theta_s)\bigr),~s\in\S\bigr\}, \quad \theta\in\Theta.
\]
For example, using the negative-entropy 
%mirror map 
(Example~\ref{ex:ent-Rn+} or~\ref{ex:ent-simplex}), it leads to the softmax policy class: 
\begin{equation}\label{eqn:soft-max-proj}
%\pi_{s,a}^\theta = \frac{\exp(f_{s,a}^\theta)}{\sum_{a'\in\mc{A}}\exp(f_{s,a'}^\theta)},
%\pi_{s,a}^\theta = \frac{\exp(f_{s,a}^\theta)}{\|\exp(f_s^\theta)\|_1},
\pi_{s,a}^\theta = \exp(f_{s,a}^\theta)/\|\exp(f_s^\theta)\|_1,
\quad (s,a)\in\S\times\A.
\end{equation}
While such policy classes are widely used in both theory and practice, recognizing them as the composition of a Bregman projection, a conjugate mirror map and a generic function approximation~$f^\theta$ (such as neural networks) allows more structured and sharper convergence analysis.

\begin{algorithm}[t]
\setstretch{1.1}
\caption{\algofullname (\algoacronym)}
\label{alg:compo}
\begin{algorithmic}[1]
    \STATE \textbf{Input:} Initialize policy $\pi^{(0)}$ with parameters $\theta^{(0)}$; mirror map $\Phi$
        \FOR{$k=0, \dots, K-1$}
        % \vspace{0.5ex}
            \STATE Find $\widehat{Q}^{\bk}$ that approximates $Q^{\bk}$ (Critic Update)
%            \STATE For each $s\in\mc{S}$, denote
%\[
%h^{(k+1)}_s = \nabla\Phi(\pi^{(k)}_s) - \eta_k \widehat{Q}^{\bk}_s
%\vspace{-3ex}
%\]
        % \vspace{0.5ex}
            \STATE 
% Let 
% $h^{(k+1)}_s = \nabla\Phi(\pi^{(k)}_s) - \eta_k \widehat{Q}^{\bk}_s$ for all $s\in\S$ and 
            Find $\theta^{\bkone}$ that (approximately) solves the problem
%\begin{equation}\label{eqn:loss_bapmd}
%    \!\,\min_{\theta}~\E_{s\sim d^{(k)}_{\rho}}\Bigl[D_{\Phi}\bigl(\nabla \Phi^*(f^{\theta}_s),\, \nabla\Phi^*(h^{\bkone}_s)\bigr)\Bigr]\!\!
%\vspace{-1ex}
%\end{equation}
% \[
 % \min_{\theta}~\E_{s\sim d^{(k)}_{\rho}}\Bigl[D_{\Phi}\bigl(\nabla \Phi^*(f^{\theta}_s),\, \nabla\Phi^*(h^{\bkone}_s)\bigr)\Bigr]
% \vspace{-2ex}
% \]
$$\min_{\theta\in\Theta}\; 
\E_{s\sim d^{(k)}_{\rho}}\Bigl[D_{\Phi^*}\bigl(\nabla\Phi(\pi^{(k)}_s) - \eta_k \widehat{Q}^{\bk}_s, \, f^{\theta}_s\bigr)\Bigr]$$
%where
%$h^{(k+1)}_s = \nabla\Phi(\pi^{(k)}_s) - \eta_k \widehat{Q}^{\bk}_s$, ~$s\in\S$
\label{line:find_theta}
            \STATE 
            Assign $\pi^{\bkone}_s = \proj_{\Delta(\mc{A})}^{\Phi}\bigl(\nabla\Phi^*(f_s^{\theta^{\bkone}})\bigr)$, ~$s\in\mc{S}$
            \label{line:projection}
        \ENDFOR
        %\STATE \textbf{return} $\pi^{(K)}$
    \end{algorithmic}
\end{algorithm}

Facilitated with the Bregman projected policy class,
extending PMD with function approximation rests upon how we approximate $\nabla\Phi(\pi^{(k)})-\eta_k\widehat{Q}^{(k)}$ 
(\emph{existing in the dual space}) using $f^\theta$.
% \todo{The citations where the author names does not appear in the sentence as part of English need to be put into \citep{} as a whole, not like \cite{alfano2023novel}.}
AMPO \citep{alfano2023novel} proposes to minimize
the expected $L_2$-distance between them, i.e., 
\begin{equation}\label{equ:loss_apmo}
    \min_{\theta}\;\E_{s\sim d^{(k)}_\rho}\Bigl[\bigl\|f^{\theta}_s - \bigl(\nabla\Phi(\pi^{\bk}_s) - \eta_k\widehat{Q}_s^{\bk}\bigr)\bigr\|_2^2\Bigr].
\end{equation}
On the other hand, \citet{lan2022policy} tries to minimize the expected (in state distribution) $L_{\infty}$-norm of the difference between 
$f^{\theta}_s$ and $\nabla\Phi(\pi^{\bk}_s)-\eta\widehat{Q}^{\bk}_s$.

% In contrast, we propose to first map both quantities back to the primal space and then minimize the Bregman divergence between them.
% To simplify presentation, let's define
% \[
% h^{(k+1)}_s = \nabla\Phi(\pi^{(k)}_s) - \eta_k \widehat{Q}^{\bk}_k,
% \quad s\in\S.
% \]
\revision{In contrast, we propose to use the corresponding \textit{dual Bregman divergence} $D_{\Phi^*}$ to measure their similarity in the dual space.
%just as we measure the similarity in the primal space using the Bregman divergence. 
In particular, Our method finds $\theta^{(k+1)}$ by (approximately) solving
\begin{equation}\label{eqn:compo-update}
\min_{\theta\in\Theta}\; 
\E_{s\sim d^{(k)}_{\rho}}\Bigl[D_{\Phi^*}\bigl(\nabla\Phi(\pi^{(k)}_s) - \eta_k \widehat{Q}^{\bk}_s, \, f^{\theta}_s\bigr)\Bigr],
\end{equation}
where $d^{(k)}_\rho$ can be replaced with other distributions to accommodate the scenario of off-policy training. Here, we can see that the similarity between the two dual vectors
$f^{\theta}_s$ and $\nabla\Phi(\pi^{\bk}_s)-\eta\widehat{Q}^{\bk}_s$
are measured by the Bregman divergence of $\Phi^*$, which naturally lives in the dual space. 
%Correspondingly, we use the Bregman divergence of $\Phi$ for policy projection, thus forming a primal-dual consistency.
Together with the Bregman divergence of~$\Phi$ used in policy projection, they form a complete duality framework.
}
% Our method finds $\theta^{(k+1)}$ by (approximately) solving
% \begin{equation}\label{eqn:compo-update}
% %\theta^{(k+1)} = \argmin_{\theta\in\Theta}\; 
% \min_{\theta\in\Theta}\; 
% \E_{s\sim d^{(k)}_{\rho}}\Bigl[D_{\Phi}\bigl(\nabla \Phi^*(f^{\theta}_s),\, \nabla\Phi^*(h^{\bkone}_s)\bigr)\Bigr],
% \end{equation}
% where $d^{(k)}_\rho$ can be replaced with other distributions to accommodate the scenario of off-policy training. 
A complete description of our method is given as Algorithm~\ref{alg:compo}, 
% \todo{Maybe a better name is Dual Mirror Policy Optimization (DMPO).}
and we call it \algofullname (\algoacronym). 
% where ``Dual Approximation'' reflects the fact that \revision{we use the dual Bregman divergence for function approximation and the corresponding primal Bregman divergence for the policy projection.}
% we use the same mirror map (and its dual map) in the Bregman divergences
%between $\nabla\Phi^*(f^\theta)$ and $\nabla\Phi^*({h^{(k+1)})$.
% for measuring function approximation and for the policy projection.

\subsection{Instantiations of \algoacronym}
We give three instantiations of \algoacronym using the three mirror maps given in Examples~\ref{ex:squared-2norm}-\ref{ex:ent-simplex}. 
\revision{In deriving these instantiations as well as implementing the algorithms, instead of directly using the dual Bregman divergence $D_{\Phi^*}$, it is often more convenient to use the following identity:
\begin{equation}
    \label{equ:bregman_duality}
    D_{\Phi^*}\bigl(\nabla\Phi(\pi^{(k)}_s) - \eta_k \widehat{Q}^{\bk}_s, \, f^{\theta}_s\bigr)=D_{\Phi}\bigl(\nabla\Phi^*(f^{\theta}_s),\, \nabla\Phi^*(\nabla\Phi(\pi^{(k)}_s) - \eta_k \widehat{Q}^{\bk}_s)\bigr).
\end{equation}
See Corollary \ref{coro:dual_bregman} for a proof.
% \todo{We need to give a reference here, and maybe give the proof in the appendix since it should be very simple!}
This identity will also facilitate our convergence analysis later.}

\textbf{\algoacronym-$L_2$.}
With $\Phi$ being the squared $L_2$-norm mirror map described in Example~\ref{ex:squared-2norm}, the approximation problem in~\eqref{eqn:compo-update} (same as line~\ref{line:find_theta} in Algorithm~\ref{alg:compo}) becomes
\begin{equation}\label{eqn:compo-L2-loss}
\min_{\theta}\; \E_{s\sim d^{(k)}_\rho}\left[ \bigl\|f_s^{\theta} - \pi^{(k)}_s + \eta_k\widehat{Q}_s^{(k)}\bigr\|_2^2\right], 
\end{equation}
and Line~\ref{line:projection} of Algorithm~\ref{alg:compo} is the Euclidean projection
\[
\pi^{(k+1)}_s = \argmin_{\pi\in\Delta(\mc{A})}\; \bigl\|\pi - f_s^{(k+1)}\bigr\|_2^2 .
\]
Here we have used the simpler notation $f_s^{(k+1)}$ for $f_s^{\theta^{(k+1)}}$.
% We present its convergence analysis in Section~\ref{sec:analysis-L2}

\textbf{\algoacronym-KL$^*$.}
With $\Phi$ being the negative entropy defined on $\R^{|\A|}_+$ (see Example~\ref{ex:ent-Rn+}), we have
\begin{align*}
\nabla\Phi^*(f_s^{(k+1)}) &= \exp(f_{s,a}^\theta), \\
%\qquad 
\nabla\Phi^*\bigl(\nabla\Phi(\pi^{(k)})-\eta_k\widehat{Q}^{(k)}\bigr) &= \pi^{(k)}_s \exp\bigl(-\eta_k\widehat{Q}^{(k)}_s\bigr) .
\end{align*}
Using the identity~\eqref{equ:bregman_duality}, we can write the loss in the approximation problem~\eqref{eqn:compo-update} as
\begin{equation}
    \label{eqn:compo-KL*-loss}
    %\min_{\theta\in\Theta}\;
    \E_{s\sim d^{(k)}_\rho} \! \Mp{D_{\KL}\Sp{\exp(f^\theta_s) \big\| \pi_s^{\bk}\exp\bigl(-\eta_k\widehat{Q}^{\bk}_s\bigr)}},
\end{equation}
where $D_{\KL}$ is given by~\eqref{eqn:KL-div}.
Policy projection as in~\eqref{eqn:soft-max-proj} is necessary to obtain $\pi^{(k+1)}$ because $\nabla\Phi^*(f_s^{(k+1)})$ is not in the simplex in general.
\algoacronym-KL$^*$ has disadvantages in both theory and practice compared with its close variant \algoacronym-KL, which we will explain next.

\textbf{\algoacronym-KL.} 
With $\Phi$ being the negative entropy restricted on $\Delta(\A)$ (see Example~\ref{ex:ent-simplex}), the range of $\nabla\Phi^*$ is $\Delta(\A)$, thus the projection step (Line~\ref{line:projection} of Algorithm~\ref{alg:compo}) becomes redundant. Here, we have
\begin{align*}
\pi^{(k+1)}_s = \nabla\Phi^*(f_s^{(k+1)}) &= \exp(f_{s,a}^\theta)/\|\exp(f_s^\theta)\|_1 , \\
\nabla\Phi^*\bigl(\nabla\Phi(\pi^{(k)})-\eta_k\widehat{Q}^{(k)}\bigr) &= \pi^{(k)}_s \exp\bigl(-\eta_k\widehat{Q}^{(k)}_s\bigr) / Z^{(k)}_s,
\end{align*}
where 
%$Z^{(k)}_s=\sum_{a\in\A} \pi^{(k)}_{s,a} \exp\bigl(-\eta_k\widehat{Q}^{(k)}_{s,a}\bigr) $.
$Z^{(k)}_s= \bigl\|\pi^{(k)}_s \exp\bigl(-\eta_k\widehat{Q}^{(k)}_s\bigr)\bigr\|_1$.
Again using~\eqref{equ:bregman_duality}, the loss in~\eqref{eqn:compo-update} becomes
\begin{equation}
    \label{eqn:compo-KL-loss}
    %\min_{\theta\in\Theta}\;
    \E_{s\sim d^{(k)}_\rho} \! \Mp{D_{\KL}\Sp{\pi^{\theta}_s\ \Big\|\, \pi_s^{\bk}\exp\bigl(-\eta_k\widehat{Q}^{\bk}_s\bigr)/Z^{\bk}_s}},
\end{equation}
where $D_{\KL}$ is given by~\eqref{eqn:KL-simplex}.
There are several distinctions between \algoacronym-KL and
\algoacronym-KL$^*$.
\begin{itemize} %\itemsep 0pt
    \item 
    %In practice, the full policy parametrization $\pi^\theta$ is easy to implement as composition of soft-max and $f^\theta$. 
    The approximation loss in~\eqref{eqn:compo-KL-loss} is in terms of the full policy parametrization $\pi^\theta$ (normalized over the simplex), matching the implementation of several popular algorithms \citep{tomar2020mirror,vaswani2021general}.
    In contrast, the loss in~\eqref{eqn:compo-KL*-loss} is in terms of the unnormalized entity $\exp(f^\theta)$, which will suffer additional loss after policy projection. 
    \item In theory,we are able to provide a competitive convergence analysis of \algoacronym-KL (see Section~\ref{sec:analysis-KL}) thanks to the fact that the two arguments in $D_{\KL}$ in~\eqref{eqn:compo-KL-loss} are both on the simplex, which is not the case in~\eqref{eqn:compo-KL*-loss}.
\end{itemize}
For these reasons, we will only consider \algoacronym-KL from now on. However, we think it is necessary to expose the subtleties between the two variants, because many works on policy mirror descent methods 
\cite[e.g.][]{alfano2023novel}, 
assumes $\nabla\Phi^*=(\nabla\Phi)^{-1}$.
%the stronger relation~\eqref{eqn:conj-inv}.
We demonstrate that the more nuanced extension of the MD theory (Lemma~\ref{lem:affine}) is crucial for developing and analyzing practical algorithms.

%\subsection{Comparison with AMPO}
\subsection{Comparison with AMPO, MDPO and FMA-PG}
\label{sec:compare_ampo}

AMPO \citep{alfano2023novel} replaces the minimization problem in Line~\ref{line:find_theta} of Algorithm~\ref{alg:compo} by~\eqref{equ:loss_apmo} regardless of the mirror map used in policy projection.
% Below discussions are not necessary?
%The $L_2$ regression~\eqref{equ:loss_apmo} follows from the \emph{compatible function approximation} framework developed by~\citet{sutton1999policy}, \citet{kakade2001natural} and more recently \citet{agarwal2021theory}. A similar approach has also appeared in~\citet{liu2019neural}.
%We notice that it is not quite the same as the loss~\eqref{eqn:compo-L2-loss} used by \algoacronym-$L_2$, because~\eqref{equ:loss_apmo} allows a different mirror map applied to $\pi^{(k)}$. 
%
More concretely, let $\Phi_1$ be the negative entropy on $\R^n_+$ and $\Phi_2$ be the squared $L_2$ norm. 
Then AMPO's approximation loss can be written as 
\begin{equation}\label{eqn:ampo-phi-1-2}
\E_{s\sim d^{(k)}_\rho}\! \Mp{D_{\Phi_2^*}\!\Sp{\nabla\Phi_1(\pi^{\bk}_s) - \eta_k\widehat{Q}_s^{\bk}, f^{\theta}_s}},
\end{equation}
and $\Phi_1$ is again used in the policy projection step. 
In theory, as long as the approximation error is small, it is possible to establish convergence of the method \citep{alfano2023novel}.
However, such a mismatch, or \revision{inconsistency, between approximations in primal and dual spaces} may cause problems when the approximation error cannot be made sufficiently small. 
This is precisely the case in practice, where we can only afford to run at most a few steps of the stochastic gradient method to reduce the approximation error.
The importance of the consistency between the two mirror maps has also been pointed out by \citet{tomar2020mirror}.

In Section~\ref{sec:experiments}, we demonstrate that on standard benchmarks \algoacronym-KL obtains state-of-the-art performance with only one step of stochastic gradient method in reducing the approximation loss~\eqref{eqn:compo-update}, comparable to SAC \citep{haarnoja2018soft}.
On the other hand, we could not get AMPO competitive with many numbers of stochastic gradient steps.

%\subsection{Connection to MDPO, FMA-PG}
%\label{sec:connection-mdpo-sac}

The Mirror Descent Policy Optimization (MDPO) method of \citet{tomar2020mirror} is based on minimizing over~$\theta$ directly in the formulation~\eqref{eqn:MDPO-theta-update}. If $\pi^\theta$ belongs to the softmax class of~\eqref{eqn:soft-max-proj} and $D_{\Phi}$ is the KL-divergence, then it is equivalent to \algoacronym-KL.
Therefore, our convergence analysis in Section~\ref{sec:analysis-KL} directly applies to MDPO, which is not provided by \citet{tomar2020mirror}.

The Functional Mirror Ascent (FMA-PG) framework of~\citet{vaswani2021general} also takes the form~\eqref{eqn:MDPO-theta-update}. However, similar to MDPO, \citet{vaswani2021general} did not exploit any composition structure of the parametrization $\pi^\theta$ or the MDP structure. Rather, they conducted convergence analysis based on the general theory for smooth, non-convex optimization, which leads to considerably weaker results. 

% Soft Actor-Critic (SAC) \cite{haarnoja2018soft} is a very popular reinforcement learning algorithm, which was developed under the framework of entropy-regularized reinforcement learning.
% Specifically, its objective function is obtained by replacing the cost $c(s_t,a_t)$ in the definition~\eqref{equ:def_V} with
% $c(s_t,a_t)+\tau\log(\pi(a_t|s_t))$, where $\tau>0$ is the regularization strength.
% The actor update step of SAC aim to find $\theta^{(k+1)}$ that minimizes the following loss 
% \begin{equation}\label{eqn:sac-KL}
% \E_{s\sim\mc{P}}\! \Mp{D_{\KL}\Sp{\pi^{\theta}_s\ \Big\|\, \exp\bigl(-\eta_k\widehat{Q}^{\bk}_s\bigr)/Z^{\bk}_s}}.
% \end{equation}
% \citet{tomar2020mirror} compared~\eqref{eqn:sac-KL} with~\eqref{eqn:compo-KL-loss} and pointed out that SAC is similar to MDPO (same as \algoacronym-KL) by replacing the $\pi^{(k)}$ with the uniform distribution. In Appendix~\ref{sec:sac_comparison}, we prove a much stronger result, showing that SAC can be cast as a special case of \algoacronym-KL. 
% The gist is that under entropy regularization, there are two different ways of defining the $Q$-function: one of them leads to~\eqref{eqn:compo-KL-loss} and the other leads to~\eqref{eqn:sac-KL}, and they are equivalent if $\eta_k=1/\tau$.
% As an immediate consequence, our convergence analysis for \algoacronym-KL in Section~\ref{sec:analysis-KL} applies to SAC.

\revision{
%\subsection{An Equivalence between \algoacronym-KL and SAC}}
\subsection{SAC as a special case of \algoacronym-KL}}
\label{sec:equivalence}
% \todo{Lin: change it into a subsection with title to ``SAC as a special case of CoMPO-KL.''}

Soft Actor-Critic (SAC) \citep{haarnoja2018soft2} is a very popular reinforcement learning algorithm, which was developed under the framework of entropy-regularized reinforcement learning. \citet{tomar2020mirror} compared SAC's actor update loss function with~\eqref{eqn:compo-KL-loss} and pointed out that SAC is similar to MDPO 
(same as \algoacronym-KL, as we discussed above) 
by replacing the previous iterate $\pi^{(k)}$ with the uniform distribution. Here, we will prove a much stronger result, showing that by choosing the learning rate $\eta_k$ appropriately, Equation~\eqref{eqn:compo-KL-loss} become exactly SAC's policy update rule.

To prove this, we will first briefly introduce the framework of entropy-regularized reinforcement learning and then derive the corresponding \algoacronym-KL algorithm under this framework. 
\footnote{See \citet{cen2022fast} and \citet{cayci2021linear} for backgrounds on entropy-regularized reinforcement learning.} 
With a regularization parameter $\tau>0$, the regularized value function in this framework is
\begin{equation*}
    \textstyle
     V^\pi_{\tau, \rho} = \E_{a_t\sim\pi_{s_t}}\bigl[\sum_{t=0}^{\infty}\gamma^t\Sp{ c(s_t, a_t)+\tau\log\pi(a_t\mid s_t)} ~|~  s_0\sim\rho\bigr].
\end{equation*}
% \begin{align*}
% \textstyle
%     V^\pi_{\tau, \rho}
%     % =&\underset{\substack{s_0\sim\rho \\ a_t\sim\pi_{s_t}}}{\E}\Mp{\sum_{t=0}^{\infty}\gamma^t c(s_t, a_t)}+\tau \underset{\substack{s_0\sim\rho \\ a_t\sim\pi_{s_t}}}{\E}\Mp{\sum_{t=0}^{\infty}\gamma^t \log\pi(a_t\mid s_t)}\\
%     =&\underset{\substack{s_0\sim\rho \\ a_t\sim\pi_{s_t}}}{\E}\Mp{\sum_{t=0}^{\infty}\gamma^t\Sp{ c(s_t, a_t)+\tau\log\pi(a_t\mid s_t)}}.
% \end{align*}
% Therefore, it can be equivalently treated as replacing the original cost function by a regularized cost function $c_{\tau}^{\pi}(s, a)=c(s, a)+\tau\log\pi_{s, a}$.
%
Then, we can similarly define the Q-value function as
\begin{equation}\label{equ:def_Q_entropy}
    \textstyle
    Q^{\pi}_{\tau}(s, a)=\E_{a_t\sim \pi_{s_t}}\bigl[\sum_{t=0}^{\infty}\gamma^t \Sp{ c(s_t, a_t)+\tau\log\pi(a_t\mid s_t)} ~|~ s_0=s, a_0=a\bigr].
\end{equation}
% $$Q^{\pi}_{\tau}(s, a)=\E_{a_t\sim \pi_{s_t}}\Mp{\sum_{t=0}^{\infty}\gamma^t c_{\tau}(s_t, a_t)\mid s_0=s, a_0=a}.$$
% Clearly, for any $(s, a)\in\S\times\A$, $Q^{\pi}_{\tau}(s, a)$ and $V^{\pi}_{\tau}(s)$ satisfy
% % \fontsize{9.5}{9.5}
% \begin{equation}
%     \label{equ:def_Q_entropy}
%     \begin{cases}
%         Q^{\pi}_{\tau}(s, a) = c(s, a) + \tau\log\pi(a\mid s) + \gamma\E_{s'\sim\mc{P}(s, a)}\Mp{V^{\pi}_{\tau}(s')},\\
%         V^{\pi}_{\tau}(s) = \E_{a'\sim\pi_s}\Mp{Q^\pi_{\tau}(s, a')}.
%     \end{cases}
% \end{equation}
% \normalsize
As shown in \citet{cayci2021linear}, the policy gradient for this entropy-regularized value function is
\begin{equation*}
    \textstyle
    \nabla_s V^{\pi}_{\tau, \rho}=\frac{1}{1-\gamma}d^{\pi}_{\rho, s}Q^{\pi}_{\tau, s}\in\R^{\abs{\A}}.
\end{equation*}
% $$\nabla_s V^{\pi}_{\tau, \rho}=\frac{1}{1-\gamma}d^{\pi}_{\rho, s}Q^{\pi}_{\tau, s}\in\R^{\abs{\A}}.$$
Therefore, we can obtain the corresponding \algoacronym algorithm by using this policy gradient. 
% That is, with mirror map $\Phi$, the policy update rule in line \ref{line:find_theta} of Algorithm \ref{alg:compo} becomes
% $$\theta^{\bkone}\in\argmin_{\theta}\E_{s\sim d^{\bk}_{\rho}}\Mp{D_{\Phi^*}\Sp{f^{\theta}_s, \nabla\Phi(\pi^{\bk}_{s} - \eta_k Q^{\bk}_{\tau, s}}},$$
% where we take the exact Q-value function for simplicity. 
Setting~$\Phi$ as the negative entropy, we obtain the corresponding \algoacronym-KL update rule as
\begin{equation}
    \label{equ:sac_like_loss_regularize}
    \theta^{\bkone}\in\argmin_{\theta}\E_{s\sim d^{\bk}_{\rho}}\Mp{D_{\KL}\Sp{\pi^{\theta}_{s}\ \left\|\  \pi^{\bk}_{s}\exp\Sp{-\eta_k Q^{\bk}_{\tau, s}} / Z^{\bk}_{s}\right.}},
\end{equation}
where $Z^{\bk}_{s}$ is the normalization factor and we take the exact Q-value function for simplicity.

Now, we switch our attention to the SAC algorithm in \citet{haarnoja2018soft2}. 
The subtlety is that the soft Q-value function used in \citet{haarnoja2018soft2} is defined differently from the one in equation~\eqref{equ:def_Q_entropy}.
%even though we use the same notation. 
To be more clear, we denote it as $q^{\pi}_{\tau}$, which for any $(s, a)\in\S\times\A$ is defined as
\begin{equation}
    \label{equ:def_Q_entropy_alter}
    \begin{cases}
        q^{\pi}_{\tau}(s, a) = c(s, a) + \gamma\E_{s'\sim\mc{P}(s, a)}\Mp{V^{\pi}_{\tau}(s')},\\
        V^{\pi}_{\tau}(s) = \E_{a'\sim\pi_{s}}\Mp{\tau\log\pi(a'\mid s) + q^\pi_{\tau}(s, a')}.
    \end{cases}
\end{equation}
% \normalsize
Note that the definition of $V^{\pi}_{\tau}$ remains unaffected. Then, we can immediately obtain the relation $q^{\pi}_{\tau}(s, a)=Q^{\pi}_{\tau}(s, a) -\tau\log\pi(a \mid s)$.
%for any $(s, a)\in\S\times\A$.
As a result, 
%we can see that 
the policy update rule in SAC is\footnote{The original proposition of SAC in \citet{haarnoja2018soft2} uses $\exp(q^{\bk}_{\tau, s}/\tau)$ instead of $\exp(-q^{\bk}_{\tau, s}/\tau)$ because it considers reward maximization instead of cost minimization.}
\begin{align*}
    \theta^{\bkone}\in& \argmin_{\theta}\E_{s\sim d^{\bk}_{\rho}}\Mp{D_{\KL}\Sp{\pi^{\theta}_{s}\ \middle\|\  \exp\Sp{-\nicefrac{q^{\bk}_{\tau, s}}{\tau}} / Z^{\bk}_{s}}}\\
    =&\argmin_{\theta}\E_{s\sim d^{\bk}_{\rho}}\Mp{D_{\KL}\Sp{\pi^{\theta}_{s}\ \middle\|\  \exp\Sp{-\nicefrac{Q^{\bk}_{\tau, s}}{\tau} + \log\pi^{\bk}_{s}} / Z^{\bk}_{s}}}\\
    =&\argmin_{\theta}\E_{s\sim d^{\bk}_{\rho}}\Mp{D_{\KL}\Sp{\pi^{\theta}_{s}\ \middle\|\  \pi^{\bk}_{s}\exp\Sp{-\nicefrac{Q^{\bk}_{\tau, s}}{\tau}} / Z^{\bk}_{s}}},
\end{align*}
which is exactly the same as the update rule in Eq. \eqref{equ:sac_like_loss_regularize} if we take $\eta_k=\frac{1}{\tau}$ for any $k$. Therefore, we conclude that \textit{SAC's update rule can be obtained by taking $\eta_k=\frac{1}{\tau}$ for any $k$ in \algoacronym-KL for an entropy-regularized MDP.} As an immediate consequence, we can have a tight convergence rate analysis for SAC, as given in Section~\ref{sec:analysis-sac}.
% As an immediate consequence, our convergence analysis for \algoacronym-KL in Section~\ref{sec:analysis-KL} applies to SAC.
\bigskip

\section{Convergence Analysis}
\label{sec:convergence}
In this section, we present the convergence analysis of \algoacronym-KL and SAC. 
These results are nontrivial extensions of similar results for PMD method in the tabular case \citep{xiao2022convergence} and with the log-linear policy class \citep{yuan2022linear}.
The analysis of \algoacronym-$L_2$ is deferred to Appendix~\ref{sec:analysis-L2}.

%\subsection{Squared $L_2$-norm} 
\subsection{Analysis of \algoacronym-KL}
\label{sec:analysis-KL}

We make following three assumptions about running Algorithm~\ref{alg:compo}, with initial distribution $\rho\in\Delta(\S)$.
\vspace{-1em}
 \begin{enumerate}\itemsep 0pt
    \item[(\textsf{A1})] 
    There exist constants $\epsilon_{\critic}, \epsilon_{\actor}>0$ such that for every iteration~$k$, it holds that
        \begin{equation}
            \label{equ:error_assumption}
            \begin{split}
                &\E_{s\sim d^{(k)}_\rho} \!
                \Bigl[\bigl\|\widehat{Q}^{\bk}_s-Q^{\bk}_s\bigr\|_{\infty}\Bigr]\leq\epsilon_{\critic},\\
                % &\E_{s\sim d^{(k)}_\rho} \!
                % \Bigl[\bigl\|f^{\bkone}_s\!-\bigl(\pi^{\bk}_s\! + \eta_k\widehat{Q}^{\bk}_s\bigr)\bigr\|_2^2\Bigr]\leq 2\eta_k^2\epsilon_{\actor},
                & \E_{s\sim d^{(k)}_\rho} \! \Mp{D_{\KL}\Sp{\pi^{(k+1)}_s \Big\|\, \pi_s^{\bk}\exp\bigl(-\eta_k\widehat{Q}^{\bk}_s\bigr) / Z^{\bk}_s}} \leq \eta_k\epsilon_{\actor}
            \end{split}
        \end{equation}

    \item[(\textsf{A2})] 
    %With initial distribution $\rho$, 
    There exists a constant $\vartheta_{\rho}\geq 1$ such that for any $k$, 
    $$\max\Bp{\Norm{\frac{d^{\star}_{\rho}}{d^{\bkone}_{\rho}}}_{\infty}, \Norm{\frac{d^{\bkone}_{\rho}}{d^{\bk}_{\rho}}}_{\infty}, \Norm{\frac{d^{\bkone}_{d^\star_\rho}}{d^{\bk}_{\rho}}}_{\infty}, \Norm{\frac{d^{\bkone}_{d^\star_\rho}}{d^{\star}_{\rho}}}_{\infty}}\leq\vartheta_{\rho}.$$
    % $$\Norm{\frac{d^{\star}_{\rho}}{d^{\bkone}_{\rho}}}_{\infty}\overset{\mathrm{def}}{=}~\max_{s\in\S}\frac{d^{\star}_{\rho, s}}{d^{\bkone}_{\rho, s}}~\leq~\vartheta_{\rho},\qquad \Norm{\frac{d^{\bkone}_{\rho}}{d^{\bk}_{\rho}}}_{\infty}\leq~\vartheta_{\rho},\qquad \Norm{\frac{d^{\bkone}_{d^\star_\rho}}{d^{\bk}_{\rho}}}_{\infty}\leq\vartheta_{\rho},\qquad \Norm{\frac{d^{\bkone}_{d^\star_\rho}}{d^{\star}_{\rho}}}_{\infty}\leq\vartheta_{\rho}$$

    \item[(\textsf{A3})] There exists a constant $C_{\rho}>0$ such that for any $k$,
    $$\max_{s\in\mathrm{supp}\bigl(d^{\bk}_\rho\bigr)}\Bp{\Norm{\frac{\pi^\star_s}{\pi^{\bkone}_s}}_{\infty}, ~\Norm{\frac{\pi^{\bk}_s}{\pi^{\bkone}_s}}_{\infty}}\leq C_{\rho}.$$
\end{enumerate}
\vspace{-0.5em}

Here, in (\textsf{A1}), we assume that $\widehat{Q}^{\bk}$ is a good enough approximation of $Q^{\bk}$, which is a problem that has been extensively studied both theoretically and empirically \citep{li2023policy, chen2022target, fujimoto2018addressing}. 
We also assume that the parameterized function $f^{\theta}$ is powerful enough to approximate the dual vector $\nabla \Phi(\pi^{\bk})-\eta_k\widehat{Q}^{\bk}$, which is a common assumption for studying function approximation \citep{alfano2023novel, lan2022policy, agarwal2021theory}. 
% The scaling coefficient $\eta_k^2$ is consistent with Assumption~(\textsf{A1)} 
% in \citet{alfano2023novel} as they assume the $L_2$-error for approximating 
% $\eta_k^{-1}\nabla\Phi(\pi^{(k)})-\widehat{Q}^{(k)}$ is bounded by $\epsilon_{\critic}$.

Then, (\textsf{A2}) assumes that the distribution mismatch coefficient is bounded, which is often needed for analyzing policy gradient methods \citep{xiao2022convergence, yuan2022linear} and can be satisfied if we take $\rho=\textsf{Unif}(\S)$ (see Lemma \ref{lem:dis_mismatch_bound}). Meanwhile, (\textsf{A3}) is an assumption on policy evolution. It holds, for example, when we apply \algoacronym-KL with entropy regularization
\citep{cayci2021linear, cen2022fast}
and it covers the case of SAC (see discussion in Section~\ref{sec:equivalence}).

%With everything above, we present our convergence result as the following theorem.
Under these assumptions, we have the following theorem and the proof is given in Appendix \ref{sec:proof_general}.
\begin{restatable}[Linear Convergence of \algoacronym-KL]{theorem}{convkl}
    \label{theo:convergence_kl_bapmd}
    Consider Algorithm \ref{alg:compo} with initial policy $\pi^{(0)}$, initial distribution $\rho\in\Delta(\S)$ and $\Phi$ being the negative entropy restricted on $\Delta(\A)$. Suppose Assumptions (\textsf{A1}), (\textsf{A2}) and (\textsf{A3}) hold and the step sizes satisfy $\eta_0>1$ and 
    %$\eta_{k+1}\geq\frac{\vartheta_{\rho}}{\vartheta_{\rho}-1}\eta_k$ for any $k\in[K]$. 
$\eta_{k+1}\geq\left(\vartheta_{\rho}/(\vartheta_{\rho}-1)\right)\eta_k$ for all $k\geq 0$. 
    Then, for any comparator policy $\pi^{\star}$, 
    %with $D_0^\star=\E_{s\sim d^\star_{\rho}}\Mp{D_{\Phi}(\pi_s^\star, \pi_s^{(0)})}$, 
    we have
    \IfTwoColumnElse{
        \begin{align*}
            &V^{(K)}_{\rho}-V^\star_\rho\leq\Sp{1-\frac{1}{\vartheta_{\rho}}}^{\!\!K}\!\!\left(V^{(0)}_{\rho}-V^\star_\rho + \frac{D^\star_0 / (\vartheta_{\rho}-1)}{(1-\gamma)\eta_0} \right)\\
            &\quad  +\frac{\vartheta_{\rho}^2(1+C_{\rho})\Sp{\epsilon_{\actor} + \sqrt{2\epsilon_{\actor}}} + 2\vartheta_{\rho}\epsilon_{\critic}}{1-\gamma}.
        \end{align*}
    }{
        $$V^{(K)}_{\rho}-V^\star_\rho\leq\Sp{1-\frac{1}{\vartheta_{\rho}}}^K\left(V^{(0)}_{\rho}-V^\star_\rho + \frac{D^\star_0 / (\vartheta_{\rho}-1)}{(1-\gamma)\eta_0} \right)  +\frac{\vartheta_{\rho}^2\psi(\epsilon_{\actor}) + 2\vartheta_{\rho}\epsilon_{\critic}}{1-\gamma},$$
    }
    where $\psi(x)=(1+C_\rho)\Sp{x+\sqrt{2x}}$ for $x\geq 0$.
\end{restatable}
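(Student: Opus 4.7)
The plan is to establish a one-step descent inequality by combining a three-point property of the KL divergence with the Performance Difference Lemma, then telescope the result using the step-size condition to produce the contraction factor $1-1/\vartheta_\rho$.

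First, I would introduce the \emph{ideal} one-step update $\widetilde{\pi}^{(k+1)}_s = \pi^{(k)}_s\exp(-\eta_k \widehat{Q}^{(k)}_s)/Z^{(k)}_s$, which is precisely the minimizer of the inner argument of $D_{\KL}$ in assumption (\textsf{A1}), so the actor error controls $\E_{s\sim d^{(k)}_\rho}[D_{\KL}(\pi^{(k+1)}_s\,\|\,\widetilde\pi^{(k+1)}_s)]$ by $\eta_k\epsilon_{\actor}$. Because $\Phi$ is the negative entropy restricted to $\Delta(\A)$ (not of Legendre type), I would invoke the extended MD machinery of Lemma~\ref{lem:affine} to justify the three-point identity
\[
\eta_k\bigl\langle\widehat{Q}^{(k)}_s,\widetilde\pi^{(k+1)}_s-\pi^\star_s\bigr\rangle = D_{\KL}(\pi^\star_s\|\pi^{(k)}_s)-D_{\KL}(\pi^\star_s\|\widetilde\pi^{(k+1)}_s)-D_{\KL}(\widetilde\pi^{(k+1)}_s\|\pi^{(k)}_s)
\]
for every $s$ where $\pi^\star_s$ is absolutely continuous with respect to $\pi^{(k)}_s$. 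I would then transfer this identity from $\widetilde\pi^{(k+1)}$ to the actual iterate $\pi^{(k+1)}$, paying two types of error: an inner-product error $\eta_k\langle\widehat{Q}^{(k)}_s,\widetilde\pi^{(k+1)}_s-\pi^{(k+1)}_s\rangle$, bounded via Pinsker's inequality by $\eta_k\|\widehat{Q}^{(k)}_s\|_\infty\sqrt{2D_{\KL}(\pi^{(k+1)}_s\|\widetilde\pi^{(k+1)}_s)}$, and a divergence-gap error $D_{\KL}(\pi^\star_s\|\pi^{(k+1)}_s)-D_{\KL}(\pi^\star_s\|\widetilde\pi^{(k+1)}_s)$, which after rewriting expands into $\langle\log(\widetilde\pi^{(k+1)}_s/\pi^{(k+1)}_s),\pi^\star_s\rangle$; bounding this uses (\textsf{A3}) to produce the $C_\rho$ factor and Pinsker again to produce the $\sqrt{\epsilon_{\actor}}$ dependence.

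Next I would apply the Performance Difference Lemma, $V^{(k+1)}_\rho-V^\star_\rho=\tfrac{1}{1-\gamma}\E_{s\sim d^{(k+1)}_\rho}[\langle Q^\star_s,\pi^{(k+1)}_s-\pi^\star_s\rangle]$, and add and subtract $Q^{(k)}$ and $\widehat{Q}^{(k)}$ to connect the left-hand side with $\langle\widehat{Q}^{(k)}_s,\widetilde\pi^{(k+1)}_s-\pi^\star_s\rangle$ plus residual terms that are absorbed into $\epsilon_{\critic}$ and the actor error. Here the distribution-mismatch assumption (\textsf{A2}) is used several times to swap between $d^{(k+1)}_\rho$, $d^{(k)}_\rho$, and $d^{(k+1)}_{d^\star_\rho}$, each swap contributing a factor of $\vartheta_\rho$; this is the origin of the $\vartheta_\rho^2$ in the final error term. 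Defining $D^\star_k:=\E_{s\sim d^\star_\rho}[D_{\KL}(\pi^\star_s\|\pi^{(k)}_s)]$, the resulting one-step inequality has the shape
\[
V^{(k+1)}_\rho-V^\star_\rho \le \Bigl(1-\tfrac{1}{\vartheta_\rho}\Bigr)(V^{(k)}_\rho-V^\star_\rho) + \tfrac{1}{(1-\gamma)\eta_k}D^\star_k - \tfrac{\vartheta_\rho-1}{\vartheta_\rho}\cdot\tfrac{1}{(1-\gamma)\eta_{k+1}}D^\star_{k+1} + \tfrac{\vartheta_\rho\,\psi(\epsilon_{\actor})+\epsilon_{\critic}}{1-\gamma}.
\]
Finally, the step-size condition $\eta_{k+1}\ge\bigl(\vartheta_\rho/(\vartheta_\rho-1)\bigr)\eta_k$ is precisely what is needed to make the coefficient of $D^\star_{k+1}$ match $1/((1-\gamma)\eta_{k+1})$ after contraction, so telescoping the recursion yields $(1-1/\vartheta_\rho)^K$ in front of the initial gap plus $D^\star_0/((\vartheta_\rho-1)(1-\gamma)\eta_0)$, while the geometric series bound on the per-step error terms gives the stated $\vartheta_\rho^2(1+C_\rho)(\epsilon_{\actor}+\sqrt{2\epsilon_{\actor}})+2\vartheta_\rho\epsilon_{\critic}$.

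The main obstacle will be the \emph{transfer step}, that is, moving from the clean three-point identity at $\widetilde\pi^{(k+1)}$ to an inequality at the realized iterate $\pi^{(k+1)}$ while keeping the error terms sublinear in $\epsilon_{\actor}$. The divergence-gap term $D_{\KL}(\pi^\star\|\pi^{(k+1)})-D_{\KL}(\pi^\star\|\widetilde\pi^{(k+1)})$ cannot be bounded by Pinsker alone because $\pi^\star_s$ can place mass where $\pi^{(k+1)}_s$ is small; it is exactly here that (\textsf{A3}) is indispensable, and careful bookkeeping with $C_\rho$ is required to extract the final $(1+C_\rho)(\epsilon_{\actor}+\sqrt{2\epsilon_{\actor}})$ dependence. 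A secondary subtlety is ensuring that every use of the subgradient identity is valid in the non-Legendre setting of Example~\ref{ex:ent-simplex}, which is handled by appealing to Lemma~\ref{lem:affine} rather than the textbook equality $\nabla\Phi^*=(\nabla\Phi)^{-1}$.
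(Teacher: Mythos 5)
Your overall architecture is the same as the paper's: the paper also compares against the ideal iterate $\widetilde\pi^{\bkone}_s=\pi^{\bk}_s\exp\bigl(-\eta_k\widehat{Q}^{\bk}_s\bigr)/Z^{\bk}_s$, justifies the three-point manipulations in the non-Legendre setting via Lemma~\ref{lem:affine}, uses (\textsf{A3}) together with a Pinsker-type bound (Lemma~\ref{lem:absolute_kl_bound}) to produce the $(1+C_\rho)\bigl(x+\sqrt{2x}\bigr)$ dependence, invokes the performance difference lemma with two applications of (\textsf{A2}) to generate the $\vartheta_\rho^2$, and telescopes exactly as you describe, with the step-size condition $\eta_{k+1}\geq\bigl(\vartheta_\rho/(\vartheta_\rho-1)\bigr)\eta_k$ matching the $D^\star_{k+1}$ coefficient (this is packaged as Lemmas~\ref{lem:modified_pytha}, \ref{lem:kl_pytha}, \ref{lem:base_relation} and Theorem~\ref{theo:unified_theorem}). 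However, there is a genuine gap in your transfer step, and it sits precisely at the point you flagged as the main obstacle.

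The problem is the H\"older/Pinsker split of the inner-product error. You bound $\eta_k\bigl\langle\widehat{Q}^{\bk}_s,\widetilde\pi^{\bkone}_s-\pi^{\bkone}_s\bigr\rangle$ by $\eta_k\Norm{\widehat{Q}^{\bk}_s}_\infty\sqrt{2D_{\KL}\bigl(\pi^{\bkone}_s\,\|\,\widetilde\pi^{\bkone}_s\bigr)}$. Under (\textsf{A1}) the actor error scales as $\E\bigl[D_{\KL}\bigr]\leq\eta_k\epsilon_{\actor}$, so by Jensen this term is of order $\eta_k\Norm{\widehat{Q}^{\bk}}_\infty\sqrt{2\eta_k\epsilon_{\actor}}$; after dividing the one-step inequality by $(1-\gamma)\eta_k$ to form the recursion, the residual is of order $\Norm{\widehat{Q}^{\bk}}_\infty\sqrt{\eta_k\epsilon_{\actor}}$, which \emph{grows} like $\sqrt{\eta_k}$. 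Since the theorem forces $\eta_k\geq\eta_0\bigl(\vartheta_\rho/(\vartheta_\rho-1)\bigr)^k\to\infty$, this contribution diverges: even after weighting step $k$ by the contraction factor $(1-1/\vartheta_\rho)^{K-1-k}$, the sum behaves like $(1-1/\vartheta_\rho)^{K-1}\sum_{k}(1-1/\vartheta_\rho)^{-3k/2}\approx(1-1/\vartheta_\rho)^{-(K-1)/2}$, so no constant error floor of the stated form $\vartheta_\rho^2(1+C_\rho)\bigl(\epsilon_{\actor}+\sqrt{2\epsilon_{\actor}}\bigr)/(1-\gamma)$ can be extracted. The paper's proof avoids this by never isolating that inner product: the approximate Pythagorean theorem (Lemma~\ref{lem:modified_pytha}) applied at the realized iterate, with $\nabla\Phi(\widetilde\pi^{\bkone}_s)$ replaced through Lemma~\ref{lem:affine} by $\nabla\Phi(\pi^{\bk}_s)-\eta_k\widehat{Q}^{\bk}_s$, yields the linear term directly as $\eta_k\bigl\langle\widehat{Q}^{\bk}_s,\pi^{\bkone}_s-\pi_s\bigr\rangle$, and the \emph{entire} approximation error enters only through $\psi_s\bigl(D_{\KL}(\pi^{\bkone}_s\|\widetilde\pi^{\bkone}_s)\bigr)$ with $\psi_s(x)=(1+C_{\rho,s})\bigl(x+\sqrt{2x}\bigr)$; the near-homogeneity $\psi(\eta_k x)\leq\eta_k\psi(x)$ for $\eta_k\geq1$ is exactly what makes division by $\eta_k$ harmless under growing step sizes. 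Your divergence-gap handling (expanding $D_{\KL}(\pi^\star\|\pi^{\bkone})-D_{\KL}(\pi^\star\|\widetilde\pi^{\bkone})$ and using (\textsf{A3})) is sound and mirrors the paper's $C_\rho$ mechanism; only the H\"older split must be abandoned and the argument reorganized so that $\widehat{Q}^{\bk}$ is always paired with $\pi^{\bkone}_s-\pi_s$ rather than with $\widetilde\pi^{\bkone}_s-\pi^{\bkone}_s$.
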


\begin{remark}
Theorem~\ref{theo:convergence_kl_bapmd} obtains linear convergence (up to an error floor dictated by $\epsilon_{\actor}$ and $\epsilon_{\critic}$) by employing a geometrically growing learning rate.  
We can also obtain $O(1/K)$ sublinear rates with a constant step size following similar proof techniques.
We omit the details as such results can be found in, for example, \citet{xiao2022convergence,yuan2022linear,alfano2023novel}.
\end{remark}

\revision{
\subsection{Analysis of SAC}
\label{sec:analysis-sac}
}
Although we have shown that SAC is a special case of \algoacronym-KL in Section \ref{sec:equivalence}, its convergence analysis is somewhat different from the above, as it is formulated under the framework of entropy-regularized reinforcement learning. Specifically, the key difference in analysis lies in the following modified performance difference lemma.
\begin{restatable}[Modified Performance Difference Lemma]{lemma}{mperdiff}
    \label{lem:sac_performance_diff}
	For any two policies $\pi, \tilde{\pi}:\S\mapsto\Delta(\A)$, initial distribution $\rho\in\Delta(\S)$ and regularization strength $\tau>0$, it holds that
        \begin{align*}
            V_{\tau, \rho}^{\pi} - V_{\tau, \rho}^{\tilde{\pi}} &= \frac{1}{1-\gamma}\E_{s\sim d^{\pi}_{\rho}}\Mp{\inner{Q^{\tilde{\pi}}_{\tau, s}, \pi_s - \tilde{\pi}_s} + \tau D_{\KL}(\pi_{s}\|\tilde{\pi}_{s})}\\
            &= \frac{1}{1-\gamma}\E_{s\sim d^{\tilde{\pi}}_{\rho}}\Mp{\inner{Q^{\pi}_{\tau, s}, \pi_s - \tilde{\pi}_s} - \tau D_{\KL}(\tilde{\pi}_s \| \pi_{s})}.
        \end{align*}
\end{restatable}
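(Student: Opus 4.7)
The plan is to mimic the classical performance-difference argument of Kakade--Langford, but carefully book-keep the additional entropy terms that arise from the regularized value and Q-functions. I would first establish a regularized Bellman identity for the baseline policy $\tilde{\pi}$, namely
\[
Q^{\tilde{\pi}}_{\tau}(s,a) = c(s,a) + \tau\log\tilde{\pi}(a\mid s) + \gamma\,\E_{s'\sim\mc{P}(s,a)}\bigl[V^{\tilde{\pi}}_{\tau}(s')\bigr],
\]
which is just unfolding one step of the definition in~\eqref{equ:def_Q_entropy}, and the dual identity $V^{\tilde{\pi}}_{\tau,s}=\inner{Q^{\tilde{\pi}}_{\tau,s},\tilde{\pi}_s}$.

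Next I would write $V^\pi_{\tau,s_0}-V^{\tilde{\pi}}_{\tau,s_0}$ as a trajectory expectation under $\pi$, then insert a telescoping $\gamma^t V^{\tilde{\pi}}_{\tau,s_t}-\gamma^{t+1}V^{\tilde{\pi}}_{\tau,s_{t+1}}$ inside the summation. Because the trajectory rolls out under $\pi$, the first integrand at time $t$ is $c(s_t,a_t)+\tau\log\pi(a_t\mid s_t)$; substituting the regularized Bellman identity above then replaces $c(s_t,a_t)+\gamma V^{\tilde{\pi}}_{\tau,s_{t+1}}$ by $Q^{\tilde{\pi}}_\tau(s_t,a_t)-\tau\log\tilde{\pi}(a_t\mid s_t)$, after taking the conditional expectation over $s_{t+1}$. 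The per-step integrand then collapses, after a further expectation over $a_t\sim\pi_{s_t}$, into
\[
\inner{Q^{\tilde{\pi}}_{\tau,s_t},\pi_{s_t}-\tilde{\pi}_{s_t}} + \tau\,\E_{a\sim\pi_{s_t}}\!\bigl[\log\pi(a\mid s_t)-\log\tilde{\pi}(a\mid s_t)\bigr],
\]
and the log-ratio expectation is precisely $\tau D_{\KL}(\pi_{s_t}\|\tilde{\pi}_{s_t})$. Finally, I would convert the discounted sum over $t$ into an expectation over $d^\pi_\rho$ via definition~\eqref{equ:def_visitation}, yielding the first equality.

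The second equality would come from reapplying the first identity with the roles of $\pi$ and $\tilde{\pi}$ swapped to produce $V^{\tilde{\pi}}_{\tau,\rho}-V^\pi_{\tau,\rho}$, then negating both sides and using the antisymmetry $\pi_s-\tilde{\pi}_s=-(\tilde{\pi}_s-\pi_s)$. Only the sign of the KL term is asymmetric, which explains the minus sign that appears in front of $\tau D_{\KL}(\tilde{\pi}_s\|\pi_s)$.

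I expect the main obstacle to be the clean handling of the two distinct log-policy factors: the $\tau\log\pi$ terms come directly from the definition of $V^\pi_\tau$ along the $\pi$-trajectory, whereas the $\tau\log\tilde{\pi}$ terms are produced only after substituting the regularized Bellman identity for $\tilde{\pi}$. Keeping track of which expectation (over $a_t$ versus over $s_{t+1}$) is being taken at each stage, and ensuring that the log-ratio assembles into a genuine KL divergence rather than an unsigned cross term, is the delicate bookkeeping step. Everything else reduces to the same telescoping and visitation-distribution manipulations used in the unregularized performance-difference lemma.
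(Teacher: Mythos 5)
Your proposal is correct and follows essentially the same route as the paper's proof: expanding $V^{\pi}_{\tau,\rho}$ along $\pi$-trajectories, telescoping the discounted $V^{\tilde{\pi}}_{\tau}$ terms, substituting the regularized Bellman identity so the per-step term becomes the regularized advantage plus the log-ratio $\tau\log\frac{\pi(a_t\mid s_t)}{\tilde{\pi}(a_t\mid s_t)}$, which assembles into $\tau D_{\KL}(\pi_s\|\tilde{\pi}_s)$ under $a_t\sim\pi_{s_t}$, and finally converting the discounted sum into an expectation over $d^{\pi}_{\rho}$. The paper obtains the second equality exactly as you describe, by expanding $V^{\tilde{\pi}}_{\tau,\rho}$ with the roles of $\pi$ and $\tilde{\pi}$ swapped and negating, which is where the asymmetric sign on $\tau D_{\KL}(\tilde{\pi}_s\|\pi_s)$ comes from.
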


The proof is given in Appendix \ref{sec:proof_sac}. Using Lemma \ref{lem:sac_performance_diff}, the convergence guarantee of \algoacronym-KL with entropy regularization (and thus SAC) is summarized in the following theorem.

\begin{restatable}[Sublinear Convergence of SAC]{theorem}{convsac}
    \label{theo:convergence_sac}
    Consider running Algorithm \ref{alg:compo} for entropy-regularized reinforcement learning with initial policy $\pi^{(0)}$, regularization strength $\tau$, initial distribution $\rho\in\Delta(\S)$ and $\Phi$ being the negative entropy restricted on $\Delta(\A)$. Suppose Assumptions (\textsf{A1}), (\textsf{A2}) and (\textsf{A3}) hold and the step sizes satisfy $\eta_k=\eta\leq \frac{1}{\tau\vartheta_\rho}$ for any $k$. 
    Then, for any comparator policy $\pi^{\star}$, we have
    $$\frac{1}{K}\sum_{k=0}^{K-1}\Sp{V^{\bk}_{\tau, \rho} - V^{\star}_{\tau, \rho}}
    ~\leq~ \frac{1}{K}\Biggl(\frac{D^\star_0}{(1-\gamma)\eta}+\frac{V^{(0)}_{\tau, d^\star_\rho}}{1-\gamma}\Biggr) + \frac{\vartheta_{\rho}\psi(\epsilon_{\actor})+ (2-\gamma)\vartheta_\rho\epsilon_{\critic}}{(1-\gamma)^2},$$
    where $D^{\star}_0=\E_{s\sim d^\star_{\rho}}\bigl[D_{\KL}\bigl(\pi^\star_s ~\|~ \pi^{(0)}_s\bigr)\bigr]$ and $\psi(x)=(1+C_\rho)\Sp{x+\sqrt{2x}}$ for $x\geq 0$.
\end{restatable}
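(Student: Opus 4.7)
The plan is to combine the modified performance difference lemma (Lemma \ref{lem:sac_performance_diff}) with a three-point identity for the KL Bregman divergence, handle the actor and critic errors via Assumption (\textsf{A1}) and the distribution mismatch via (\textsf{A2}), and then telescope the resulting per-iteration inequality with a constant step size. The sublinear $O(1/K)$ rate is a direct consequence of not having geometric step-size growth, in contrast to Theorem~\ref{theo:convergence_kl_bapmd}.

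First, I would apply the second form of Lemma~\ref{lem:sac_performance_diff} at $\pi=\pi^{(k)}$ and $\tilde\pi=\pi^\star$, giving
\begin{equation*}
(1-\gamma)\bigl(V^{(k)}_{\tau,\rho} - V^{\star}_{\tau,\rho}\bigr)
=\E_{s\sim d^{\star}_{\rho}}\!\Bigl[\bigl\langle Q^{(k)}_{\tau,s},\, \pi^{(k)}_s - \pi^{\star}_s\bigr\rangle - \tau D_{\KL}(\pi^{\star}_s\,\|\,\pi^{(k)}_s)\Bigr].
\end{equation*}
The inner product is then split as $\langle Q^{(k)}_{\tau,s}, \pi^{(k)}_s-\pi^{(k+1)}_s\rangle + \langle Q^{(k)}_{\tau,s}, \pi^{(k+1)}_s-\pi^{\star}_s\rangle$; the first term will be absorbed into a telescoping descent, while the second is handled by the mirror-descent three-point lemma applied to the exact proximal minimizer $\tilde\pi^{(k+1)}_s := \pi^{(k)}_s\exp(-\eta Q^{(k)}_{\tau,s})/Z^{(k)}_s$:
\begin{equation*}
\eta\bigl\langle Q^{(k)}_{\tau,s},\, \tilde\pi^{(k+1)}_s-\pi^{\star}_s\bigr\rangle
\leq D_{\KL}(\pi^{\star}_s\|\pi^{(k)}_s) - D_{\KL}(\pi^{\star}_s\|\tilde\pi^{(k+1)}_s) - D_{\KL}(\tilde\pi^{(k+1)}_s\|\pi^{(k)}_s).
\end{equation*}
Summing against $d^{\star}_{\rho}$ and combining with the performance-difference identity, the $-\tau D_{\KL}(\pi^\star\|\pi^{(k)})$ term is cancelled by $\frac{1}{\eta}D_{\KL}(\pi^\star\|\pi^{(k)})$ up to a distribution mismatch factor; here the step-size condition $\eta\leq 1/(\tau\vartheta_\rho)$ is exactly what is needed to absorb the $\tau D_{\KL}$ regularizer after transferring from $d^{\star}_{\rho}$ to $d^{(k+1)}_{\rho}$ via (\textsf{A2}).

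Next I would bridge the exact minimizer $\tilde\pi^{(k+1)}$ and the approximate update $\pi^{(k+1)}$. By (\textsf{A1}), $\E_{s\sim d^{(k)}_\rho}[D_{\KL}(\pi^{(k+1)}_s\|\tilde\pi^{(k+1)}_s)] \leq \eta\epsilon_{\actor}$, and Pinsker's inequality converts this into an $L_1$-bound on $\pi^{(k+1)}-\tilde\pi^{(k+1)}$; combined with the boundedness of $Q^{(k)}_\tau$ (on the order of $1/(1-\gamma)$) and Assumption (\textsf{A3}) to convert between $\pi^{(k+1)}$ and $\tilde\pi^{(k+1)}$ inside KL divergences, this yields an error term of the form $\vartheta_\rho\psi(\epsilon_{\actor})/(1-\gamma)$. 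The critic error is handled by writing $\langle Q^{(k)}_{\tau,s} - \widehat Q^{(k)}_{\tau,s}, \cdot\rangle$ and applying Hölder with the bound in (\textsf{A1}), producing the $(2-\gamma)\vartheta_\rho\epsilon_{\critic}/(1-\gamma)^2$ term after accounting for the two places (performance-difference and three-point inequality) where $Q^{(k)}_\tau$ appears.

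Finally, I would sum the resulting per-iteration inequality
\begin{equation*}
(1-\gamma)\bigl(V^{(k)}_{\tau,\rho}-V^{\star}_{\tau,\rho}\bigr)
\leq \frac{1}{\eta}\bigl(D^{\star}_{k} - D^{\star}_{k+1}\bigr) + \text{term in } V^{(k)}-V^{(k+1)} + \text{error},
\end{equation*}
with $D^{\star}_k := \E_{s\sim d^{\star}_{\rho}}[D_{\KL}(\pi^{\star}_s\|\pi^{(k)}_s)]$, from $k=0$ to $K-1$, divide by $K$, and bound the telescoped value-function gap by $V^{(0)}_{\tau, d^{\star}_\rho}/(1-\gamma)$ (since $V^{(K)}_{\tau,d^{\star}_\rho}\geq 0$ after the cost lower bound, with the regularization only shifting by a constant). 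The main obstacle I anticipate is the careful bookkeeping in the third paragraph: the approximation-error translations interact nontrivially with the entropy-induced $\tau D_{\KL}(\pi^{\star}\|\pi^{(k)})$ term, and one must apply (\textsf{A2}) and (\textsf{A3}) at precisely the right places so that the cancellations required by the step-size condition $\eta\tau\vartheta_\rho\leq 1$ actually go through and produce clean $\vartheta_\rho$-dependence rather than $\vartheta_\rho^2$ in the error floor.
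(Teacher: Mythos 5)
Your overall skeleton---Lemma \ref{lem:sac_performance_diff} plus a three-point inequality for the KL proximal step, errors handled via (\textsf{A1})--(\textsf{A3}), then telescoping with constant $\eta$---is the same as the paper's, and your bridge from the exact minimizer to $\pi^{\bkone}$ via Pinsker and (\textsf{A3}) is in substance the paper's approximate Pythagorean machinery (Lemmas \ref{lem:kl_pytha}, \ref{lem:absolute_kl_bound} and \ref{lem:base_relation}). But your second paragraph misidentifies what the step-size condition $\eta\leq 1/(\tau\vartheta_\rho)$ is for, and this is a genuine gap rather than loose bookkeeping. The term $-\tau D_{\KL}(\pi^\star_s\,\|\,\pi^{\bk}_s)$ produced by the second form of Lemma \ref{lem:sac_performance_diff} enters with a \emph{favorable} sign: since you are upper-bounding $V^{\bk}_{\tau,\rho}-V^{\star}_{\tau,\rho}$, it can simply be dropped (the paper carries it along as $-(1-\gamma)\tau D^\star_k$ and discards it at the end); it needs no cancellation against $\frac{1}{\eta}D_{\KL}(\pi^\star_s\,\|\,\pi^{\bk}_s)$, which in any case is fully consumed by the telescoping $D^\star_k - D^\star_{k+1}$. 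The term that actually threatens the proof hides in your ``telescoping descent'' step: converting $\E_{s\sim d^{\bkone}_{d^\star_\rho}}\bigl[\langle \widehat{Q}^{\bk}_{\tau,s},\,\pi^{\bkone}_s-\pi^{\bk}_s\rangle\bigr]$ into $V^{\bkone}_{\tau,d^\star_\rho}-V^{\bk}_{\tau,d^\star_\rho}$ via the \emph{first} form of Lemma \ref{lem:sac_performance_diff} leaves behind an uncancelled $+\tau\,D_{\KL}(\pi^{\bkone}_s\,\|\,\pi^{\bk}_s)$ between \emph{consecutive iterates} in the final per-iteration inequality. This term neither telescopes nor is an $\epsilon$-error, so without a device to absorb it the argument does not close.

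The paper's device is exactly this: when applying the three-point/Pythagorean inequality with comparator $\pi^\star$, it does \textbf{not} drop $D_{\KL}(\pi^{\bkone}_s\,\|\,\pi^{\bk}_s)$ (the proof explicitly flags this as the key difference from Theorem \ref{theo:unified_theorem}); after transferring the entropy term from $d^{\bkone}_{d^\star_\rho}$ to $d^{\star}_{\rho}$ via (\textsf{A2}) (cost: a factor $\vartheta_\rho$), the final inequality carries $\bigl(\tau\vartheta_\rho - \frac{1}{\eta}\bigr)\E_{s\sim d^\star_\rho}\bigl[D_{\KL}(\pi^{\bkone}_s\,\|\,\pi^{\bk}_s)\bigr]$, which is nonpositive precisely when $\eta\leq 1/(\tau\vartheta_\rho)$. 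Note also that the retained divergence must be for the \emph{actual} iterate $\pi^{\bkone}$, which Lemma \ref{lem:base_relation} delivers directly; your three-point inequality instead retains $-D_{\KL}(\tilde\pi^{\bkone}_s\,\|\,\pi^{\bk}_s)$ for the exact minimizer, so you would owe an additional comparison between $D_{\KL}(\pi^{\bkone}_s\,\|\,\pi^{\bk}_s)$ and $D_{\KL}(\tilde\pi^{\bkone}_s\,\|\,\pi^{\bk}_s)$ on top of your Pinsker bridge. A further minor slip: (\textsf{A1}) bounds the divergence of $\pi^{\bkone}$ to the target built from $\widehat{Q}^{\bk}_{\tau}$, not to your $\tilde\pi^{\bkone}$ built from the exact $Q^{\bk}_{\tau}$; since you handle the critic error separately this is repairable, but as written your citation of (\textsf{A1}) does not apply to the object you defined.
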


The full proof is given in Appendix \ref{sec:proof_sac}. To the best of our knowledge, this is the first convergence rate analysis of the SAC algorithm under general function approximation.
\section{Experiments}
\label{sec:experiments}

\begin{figure*}[t]
    \centering
    \includegraphics[width=\linewidth]{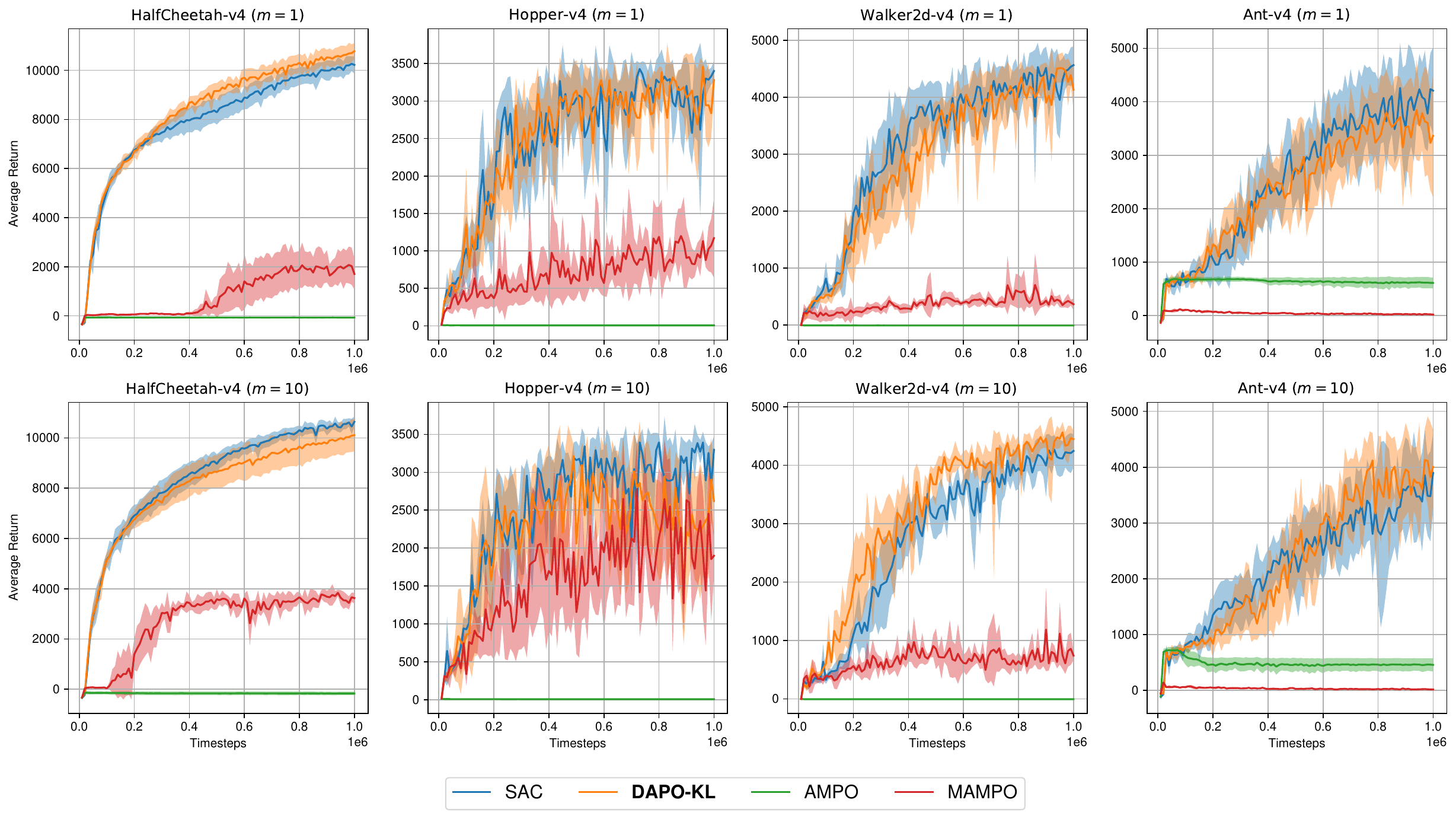}
    \caption{Average return curves on MuJoCo benchmarks. Each curve is averaged over 5 random seeds and the shaded area represents the $95\%$ confidence interval. Here~$m$ represents the number of stochastic gradient steps in each policy update iteration.}
    \label{fig:exp_results}
\vspace{-1ex}
\end{figure*}

In this section, we present our experiment results on several standard MuJoCo benchmark tasks \citep{todorov2012mujoco}. 
%Since in continuous space, to the best of our knowledge, \algoacronym-KL is the only implementation-friendly version of \algoacronym, we only implement Algorithm \ref{alg:compo} under this choice of mirror map.\footnote{Although the instantiated algorithm is very similar to MDPO in \citet{tomar2020mirror}, we keep the name ``\algoacronym-KL'' to be consistent with this paper's content.} 
We compare the performance of \algoacronym-KL, SAC \citep{haarnoja2018soft}, and AMPO \citep{alfano2023novel}. 
% The implementation is based on the framework of SAC in \citet{haarnoja2018soft} and the full details are given in Appendix \ref{sec:implementation}.
%We implemented AMPO according to the best of our understanding because \citet{alfano2023novel} does not contain numeric experiments on MuJoCo tasks. 

In order to demonstrate the importance of \revision{having primal-dual consistency}, we modify AMPO to enforce it albeit in a naive way.
Specifically, as discussed in Section \ref{sec:compare_ampo}, AMPO's approximation loss can be expressed as~\eqref{eqn:ampo-phi-1-2} 
%\[
%\E_{s\sim d^{(k)}_\rho}\! \Mp{D_{\Phi_2}\!\Sp{\nabla\Phi_2^*(f^{\theta}_s), \nabla\Phi_2^*\bigl(\nabla\Phi_1(\pi^{\bk}_s) - \eta_k\widehat{Q}_s^{\bk}\bigr)}},
%\]
with $\Phi_1$ being negative entropy and $\Phi_2$ being  squared $L_2$-norm. Therefore, a naive way to enforce \revision{primal-dual consistency} is to replace $\Phi_1$ by $\Phi_2$ in~\eqref{eqn:ampo-phi-1-2}, which gives
\vspace{-1ex}
\begin{equation}
    \label{equ:loss_modified_ampo}
    \E_{s\sim d^{\bk}_{\rho}}\Bigl[\bigl\|f^{\theta}_s - \bigl(\pi^{\bk}_s - \eta_k\widehat{Q}_s^{\bk}\bigr)\bigr\|_2^2\Bigr].
\end{equation}
We call this algorithm Modified AMPO (MAMPO). Note that in MAMPO, $\Phi_1$ (negative entropy) is still used in the policy projection step, which is different from \algoacronym-$L_2$.

Although in theory we assume that the policy optimization loss is approximately minimized in each iteration, in practice, it may be only feasible to run a few steps of the stochastic gradient method to reduce the loss. 
Therefore, the number of stochastic gradient steps per iteration can be an important hyper-parameter for the algorithm. In experiments, all algorithms are evaluated under both $m=1$ and $m=10$ stochastic gradient step per iteration. Implementation details are given in Appendix \ref{sec:implementation}.

The results are summarized in Fig. \ref{fig:exp_results}. From the plots, we can see that \algoacronym-KL performs about the same as SAC on all tasks, which is expected as we have shown that SAC is a special case of \algoacronym-KL. Meanwhile, they are not sensitive to number of stochastic  gradient steps per iteration. 

On the other hand, AMPO fails to learn anything non-trivial on all tasks no matter it uses $m=1$ or $m=10$ stochastic gradient steps.
Nevertheless, we retain the possibility that our implementation of AMPO may not be the optimal and provide more details of its hyperparameter tuning in Appendix \ref{sec:additional_results}. 
In contrast, MAMPO is able to complete non-trivial learning among three tasks and gets better with more gradient steps, indicating the benefit of \revision{the primal-dual consistency} in~\eqref{equ:loss_modified_ampo}.
However, it is still far inferior to \algoacronym-KL and SAC.
\section{Conclusions}
\label{sec:conclusions}

%In summary, 
\algoacronym is a novel duality framework for incorporating general function approximation into policy mirror descent methods.
Besides the mirror map in policy projection, 
it uses the dual mirror map for measuring the function approximation error.
%and for policy projection.
%As a consequence, 
We establish linear and sublinear convergence rates of \algoacronym under different step size rules and show that 
%(linear with geometrically increasing step sizes and sub-linear with a constant step size), 
it incorporates state-of-the-art algorithms like SAC as a special case, immediately providing them with strong convergence guarantees. 
%and achieves comparable empirical performance. 

%In summary, we propose a mirror-descent-based policy optimization framework called \algofullname (\algoacronym). \algoacronym not only enjoys linear convergence under general function approximation, but also naturally incorporates state-of-the-art practical algorithm like SAC as a special case and achieves comparable empirical performance. Furthermore, we rigorously extend its convergence analysis to domains with continuous state and action spaces, which altogether bridges the theory and practice.

%Finally, several questions still remain open. For example, can we design new efficient practical algorithms based on different mirror maps for continuous domains? How should we theoretically quantify the empirical superiority of \algoacronym over AMPO? Answering these questions can serve as promising future directions.

For future directions, \algoacronym paves the way for exploring new variants of PMD methods based on different mirror maps, e.g., with the negative Tsallis entropy. 
Another interesting question to investigate is how to characterize the effects of using inconsistent mirror maps in AMPO.

\bibliography{References}

\begin{thebibliography}{59}
\providecommand{\natexlab}[1]{#1}
\providecommand{\url}[1]{\texttt{#1}}
\expandafter\ifx\csname urlstyle\endcsname\relax
  \providecommand{\doi}[1]{doi: #1}\else
  \providecommand{\doi}{doi: \begingroup \urlstyle{rm}\Url}\fi

\bibitem[Agarwal et~al.(2020)Agarwal, Henaff, Kakade, and Sun]{agarwal2020pc}
Alekh Agarwal, Mikael Henaff, Sham Kakade, and Wen Sun.
\newblock Pc-pg: Policy cover directed exploration for provable policy gradient
  learning.
\newblock \emph{Advances in neural information processing systems},
  33:\penalty0 13399--13412, 2020.

\bibitem[Agarwal et~al.(2021)Agarwal, Kakade, Lee, and
  Mahajan]{agarwal2021theory}
Alekh Agarwal, Sham~M Kakade, Jason~D Lee, and Gaurav Mahajan.
\newblock On the theory of policy gradient methods: Optimality, approximation,
  and distribution shift.
\newblock \emph{J. Mach. Learn. Res.}, 22\penalty0 (98):\penalty0 1--76, 2021.

\bibitem[Alfano et~al.(2024)Alfano, Yuan, and Rebeschini]{alfano2023novel}
Carlo Alfano, Rui Yuan, and Patrick Rebeschini.
\newblock A novel framework for policy mirror descent with general
  parameterization and linear convergence.
\newblock \emph{Advances in Neural Information Processing Systems}, 36, 2024.

\bibitem[Aubin-Frankowski et~al.(2022)Aubin-Frankowski, Korba, and
  L{\'e}ger]{aubin2022mirror}
Pierre-Cyril Aubin-Frankowski, Anna Korba, and Flavien L{\'e}ger.
\newblock Mirror descent with relative smoothness in measure spaces, with
  application to sinkhorn and em.
\newblock \emph{arXiv preprint arXiv:2206.08873}, 2022.

\bibitem[Bauschke and Borwein(1997)]{Bauschke1997}
Heinz~G. Bauschke and Jonathan~M. Borwein.
\newblock Legendre functions and the method of random bregman projections.
\newblock \emph{Journal of Convex Analysis}, 4\penalty0 (1):\penalty0 27--67,
  1997.

\bibitem[Beck and Teboulle(2003)]{beck2003mirror}
Amir Beck and Marc Teboulle.
\newblock Mirror descent and nonlinear projected subgradient methods for convex
  optimization.
\newblock \emph{Operations Research Letters}, 31\penalty0 (3):\penalty0
  167--175, 2003.

\bibitem[Bertsekas(2009)]{bertsekas2009convex}
Dimitri Bertsekas.
\newblock \emph{Convex optimization theory}, volume~1.
\newblock Athena Scientific, 2009.

\bibitem[Bertsekas(2015)]{bertsekas2015dynamic}
Dimitri~P Bertsekas.
\newblock Dynamic programming and optimal control 4th edition, volume ii.
\newblock \emph{Athena Scientific}, 2015.

\bibitem[Bhandari and Russo(2021)]{bhandari2021linear}
Jalaj Bhandari and Daniel Russo.
\newblock On the linear convergence of policy gradient methods for finite mdps.
\newblock In \emph{International Conference on Artificial Intelligence and
  Statistics}, pages 2386--2394. PMLR, 2021.

\bibitem[Bubeck(2015)]{bubeck2015book}
S\'ebastien Bubeck.
\newblock \emph{Convex Optimization: Algorithms and Complexity}.
\newblock Number 8:3-4 in Foundations and Trends in Machine Learning. now
  Publishers Inc., 2015.

\bibitem[Bubeck et~al.(2012)Bubeck, Cesa-Bianchi, et~al.]{bubeck2012regret}
S{\'e}bastien Bubeck, Nicolo Cesa-Bianchi, et~al.
\newblock Regret analysis of stochastic and nonstochastic multi-armed bandit
  problems.
\newblock \emph{Foundations and Trends{\textregistered} in Machine Learning},
  5\penalty0 (1):\penalty0 1--122, 2012.

\bibitem[Cai et~al.(2020)Cai, Yang, Jin, and Wang]{cai2020provably}
Qi~Cai, Zhuoran Yang, Chi Jin, and Zhaoran Wang.
\newblock Provably efficient exploration in policy optimization.
\newblock In \emph{International Conference on Machine Learning}, pages
  1283--1294. PMLR, 2020.

\bibitem[Cayci et~al.(2021)Cayci, He, and Srikant]{cayci2021linear}
Semih Cayci, Niao He, and Rayadurgam Srikant.
\newblock Linear convergence of entropy-regularized natural policy gradient
  with linear function approximation.
\newblock \emph{arXiv preprint arXiv:2106.04096}, 2021.

\bibitem[Cen et~al.(2022)Cen, Cheng, Chen, Wei, and Chi]{cen2022fast}
Shicong Cen, Chen Cheng, Yuxin Chen, Yuting Wei, and Yuejie Chi.
\newblock Fast global convergence of natural policy gradient methods with
  entropy regularization.
\newblock \emph{Operations Research}, 70\penalty0 (4):\penalty0 2563--2578,
  2022.

\bibitem[Chen and Maguluri(2022)]{chen2022sample}
Zaiwei Chen and Siva~Theja Maguluri.
\newblock Sample complexity of policy-based methods under off-policy sampling
  and linear function approximation.
\newblock In \emph{International Conference on Artificial Intelligence and
  Statistics}, pages 11195--11214. PMLR, 2022.

\bibitem[Chen et~al.(2022{\natexlab{a}})Chen, Clarke, and
  Maguluri]{chen2022target}
Zaiwei Chen, John~Paul Clarke, and Siva~Theja Maguluri.
\newblock Target network and truncation overcome the deadly triad in $ q
  $-learning.
\newblock \emph{arXiv preprint arXiv:2203.02628}, 2022{\natexlab{a}}.

\bibitem[Chen et~al.(2022{\natexlab{b}})Chen, Khodadadian, and
  Maguluri]{chen2022finite}
Zaiwei Chen, Sajad Khodadadian, and Siva~Theja Maguluri.
\newblock Finite-sample analysis of off-policy natural actor--critic with
  linear function approximation.
\newblock \emph{IEEE Control Systems Letters}, 6:\penalty0 2611--2616,
  2022{\natexlab{b}}.

\bibitem[Chu et~al.(2019)Chu, Blanchet, and Glynn]{chu2019probability}
Casey Chu, Jose Blanchet, and Peter Glynn.
\newblock Probability functional descent: A unifying perspective on gans,
  variational inference, and reinforcement learning.
\newblock In \emph{International Conference on Machine Learning}, pages
  1213--1222. PMLR, 2019.

\bibitem[Fujimoto et~al.(2018)Fujimoto, Hoof, and
  Meger]{fujimoto2018addressing}
Scott Fujimoto, Herke Hoof, and David Meger.
\newblock Addressing function approximation error in actor-critic methods.
\newblock In \emph{International conference on machine learning}, pages
  1587--1596. PMLR, 2018.

\bibitem[Grudzien et~al.(2022)Grudzien, De~Witt, and
  Foerster]{grudzien2022mirror}
Jakub Grudzien, Christian A~Schroeder De~Witt, and Jakob Foerster.
\newblock Mirror learning: A unifying framework of policy optimisation.
\newblock In \emph{International Conference on Machine Learning}, pages
  7825--7844. PMLR, 2022.

\bibitem[Haarnoja et~al.(2018{\natexlab{a}})Haarnoja, Zhou, Abbeel, and
  Levine]{haarnoja2018soft2}
Tuomas Haarnoja, Aurick Zhou, Pieter Abbeel, and Sergey Levine.
\newblock Soft actor-critic: Off-policy maximum entropy deep reinforcement
  learning with a stochastic actor.
\newblock In \emph{International conference on machine learning}, pages
  1861--1870. PMLR, 2018{\natexlab{a}}.

\bibitem[Haarnoja et~al.(2018{\natexlab{b}})Haarnoja, Zhou, Hartikainen,
  Tucker, Ha, Tan, Kumar, Zhu, Gupta, Abbeel, et~al.]{haarnoja2018soft}
Tuomas Haarnoja, Aurick Zhou, Kristian Hartikainen, George Tucker, Sehoon Ha,
  Jie Tan, Vikash Kumar, Henry Zhu, Abhishek Gupta, Pieter Abbeel, et~al.
\newblock Soft actor-critic algorithms and applications.
\newblock \emph{arXiv preprint arXiv:1812.05905}, 2018{\natexlab{b}}.

\bibitem[Johnson et~al.(2023)Johnson, Pike-Burke, and
  Rebeschini]{johnson2023optimal}
Emmeran Johnson, Ciara Pike-Burke, and Patrick Rebeschini.
\newblock Optimal convergence rate for exact policy mirror descent in
  discounted markov decision processes.
\newblock \emph{arXiv preprint arXiv:2302.11381}, 2023.

\bibitem[Kakade(2001)]{kakade2001natural}
Sham~M Kakade.
\newblock A natural policy gradient.
\newblock \emph{Advances in neural information processing systems}, 14, 2001.

\bibitem[Khodadadian et~al.(2021)Khodadadian, Jhunjhunwala, Varma, and
  Maguluri]{khodadadian2021linear}
Sajad Khodadadian, Prakirt~Raj Jhunjhunwala, Sushil~Mahavir Varma, and
  Siva~Theja Maguluri.
\newblock On the linear convergence of natural policy gradient algorithm.
\newblock In \emph{2021 60th IEEE Conference on Decision and Control (CDC)},
  pages 3794--3799. IEEE, 2021.

\bibitem[Kingma and Welling(2013)]{kingma2013auto}
Diederik~P Kingma and Max Welling.
\newblock Auto-encoding variational bayes.
\newblock \emph{arXiv preprint arXiv:1312.6114}, 2013.

\bibitem[Konda and Tsitsiklis(1999)]{konda1999actor}
Vijay Konda and John Tsitsiklis.
\newblock Actor-critic algorithms.
\newblock \emph{Advances in neural information processing systems}, 12, 1999.

\bibitem[Lan(2022)]{lan2022policy}
Guanghui Lan.
\newblock Policy optimization over general state and action spaces.
\newblock \emph{arXiv preprint arXiv:2211.16715}, 2022.

\bibitem[Lan(2023)]{lan2023policy}
Guanghui Lan.
\newblock Policy mirror descent for reinforcement learning: Linear convergence,
  new sampling complexity, and generalized problem classes.
\newblock \emph{Mathematical programming}, 198\penalty0 (1):\penalty0
  1059--1106, 2023.

\bibitem[Lan et~al.(2023)Lan, Li, and Zhao]{lan2023block}
Guanghui Lan, Yan Li, and Tuo Zhao.
\newblock Block policy mirror descent.
\newblock \emph{SIAM Journal on Optimization}, 33\penalty0 (3):\penalty0
  2341--2378, 2023.

\bibitem[Li and Lan(2023)]{li2023policy}
Yan Li and Guanghui Lan.
\newblock Policy mirror descent inherently explores action space.
\newblock \emph{arXiv preprint arXiv:2303.04386}, 2023.

\bibitem[Li et~al.(2022)Li, Lan, and Zhao]{li2022homotopic}
Yan Li, Guanghui Lan, and Tuo Zhao.
\newblock Homotopic policy mirror descent: Policy convergence, implicit
  regularization, and improved sample complexity.
\newblock \emph{arXiv preprint arXiv:2201.09457}, 2022.

\bibitem[Lillicrap et~al.(2015)Lillicrap, Hunt, Pritzel, Heess, Erez, Tassa,
  Silver, and Wierstra]{lillicrap2015continuous}
Timothy~P Lillicrap, Jonathan~J Hunt, Alexander Pritzel, Nicolas Heess, Tom
  Erez, Yuval Tassa, David Silver, and Daan Wierstra.
\newblock Continuous control with deep reinforcement learning.
\newblock \emph{arXiv preprint arXiv:1509.02971}, 2015.

\bibitem[Liu et~al.(2019)Liu, Cai, Yang, and Wang]{liu2019neural}
Boyi Liu, Qi~Cai, Zhuoran Yang, and Zhaoran Wang.
\newblock Neural proximal/trust region policy optimization attains globally
  optimal policy.
\newblock \emph{arXiv preprint arXiv:1906.10306}, 2019.

\bibitem[Mei et~al.(2023)Mei, Dai, Agarwal, Ghavamzadeh, Szepesv{\'a}ri, and
  Schuurmans]{mei2023ordering}
Jincheng Mei, Bo~Dai, Alekh Agarwal, Mohammad Ghavamzadeh, Csaba
  Szepesv{\'a}ri, and Dale Schuurmans.
\newblock Ordering-based conditions for global convergence of policy gradient
  methods.
\newblock In \emph{Thirty-seventh Conference on Neural Information Processing
  Systems}, 2023.

\bibitem[Mnih et~al.(2013)Mnih, Kavukcuoglu, Silver, Graves, Antonoglou,
  Wierstra, and Riedmiller]{mnih2013playing}
Volodymyr Mnih, Koray Kavukcuoglu, David Silver, Alex Graves, Ioannis
  Antonoglou, Daan Wierstra, and Martin Riedmiller.
\newblock Playing atari with deep reinforcement learning.
\newblock \emph{arXiv preprint arXiv:1312.5602}, 2013.

\bibitem[Nemirovski and Yudin(1983)]{nemirovskij1983problem}
Arkadi~Semenovi{\v{c}} Nemirovski and David~Borisovich Yudin.
\newblock Problem complexity and method efficiency in optimization.
\newblock 1983.

\bibitem[Neu et~al.(2017)Neu, Jonsson, and G{\'o}mez]{neu2017unified}
Gergely Neu, Anders Jonsson, and Vicen{\c{c}} G{\'o}mez.
\newblock A unified view of entropy-regularized markov decision processes.
\newblock \emph{arXiv preprint arXiv:1705.07798}, 2017.

\bibitem[Puterman(1994)]{puterman1994markov}
Martin~L Puterman.
\newblock \emph{Markov decision processes: discrete stochastic dynamic
  programming}.
\newblock John Wiley \& Sons, 1994.

\bibitem[Raffin et~al.(2021)Raffin, Hill, Gleave, Kanervisto, Ernestus, and
  Dormann]{raffin2021stable}
Antonin Raffin, Ashley Hill, Adam Gleave, Anssi Kanervisto, Maximilian
  Ernestus, and Noah Dormann.
\newblock Stable-baselines3: Reliable reinforcement learning implementations.
\newblock \emph{The Journal of Machine Learning Research}, 22\penalty0
  (1):\penalty0 12348--12355, 2021.

\bibitem[Rockafellar(1970)]{Rockafellar1970book}
R.~Tyrrell Rockafellar.
\newblock \emph{Convex Analysis}.
\newblock Princeton University Press, 1970.

\bibitem[Rockafellar(1967)]{rockafellar1967conjugates}
Ralph~Tyrrell Rockafellar.
\newblock Conjugates and legendre transforms of convex functions.
\newblock \emph{Canadian Journal of Mathematics}, 19:\penalty0 200--205, 1967.

\bibitem[Schulman et~al.(2015)Schulman, Levine, Abbeel, Jordan, and
  Moritz]{schulman2015trust}
John Schulman, Sergey Levine, Pieter Abbeel, Michael Jordan, and Philipp
  Moritz.
\newblock Trust region policy optimization.
\newblock In \emph{International conference on machine learning}, pages
  1889--1897. PMLR, 2015.

\bibitem[Schulman et~al.(2017)Schulman, Wolski, Dhariwal, Radford, and
  Klimov]{schulman2017proximal}
John Schulman, Filip Wolski, Prafulla Dhariwal, Alec Radford, and Oleg Klimov.
\newblock Proximal policy optimization algorithms.
\newblock \emph{arXiv preprint arXiv:1707.06347}, 2017.

\bibitem[Shani et~al.(2020{\natexlab{a}})Shani, Efroni, and
  Mannor]{shani2020adaptive}
Lior Shani, Yonathan Efroni, and Shie Mannor.
\newblock Adaptive trust region policy optimization: Global convergence and
  faster rates for regularized mdps.
\newblock In \emph{Proceedings of the AAAI Conference on Artificial
  Intelligence}, volume~34, pages 5668--5675, 2020{\natexlab{a}}.

\bibitem[Shani et~al.(2020{\natexlab{b}})Shani, Efroni, Rosenberg, and
  Mannor]{shani2020optimistic}
Lior Shani, Yonathan Efroni, Aviv Rosenberg, and Shie Mannor.
\newblock Optimistic policy optimization with bandit feedback.
\newblock In \emph{International Conference on Machine Learning}, pages
  8604--8613. PMLR, 2020{\natexlab{b}}.

\bibitem[Sutton et~al.(1999)Sutton, McAllester, Singh, and
  Mansour]{sutton1999policy}
Richard~S Sutton, David McAllester, Satinder Singh, and Yishay Mansour.
\newblock Policy gradient methods for reinforcement learning with function
  approximation.
\newblock \emph{Advances in neural information processing systems}, 12, 1999.

\bibitem[Todorov et~al.(2012)Todorov, Erez, and Tassa]{todorov2012mujoco}
Emanuel Todorov, Tom Erez, and Yuval Tassa.
\newblock Mujoco: A physics engine for model-based control.
\newblock In \emph{2012 IEEE/RSJ international conference on intelligent robots
  and systems}, pages 5026--5033. IEEE, 2012.

\bibitem[Tomar et~al.(2020)Tomar, Shani, Efroni, and
  Ghavamzadeh]{tomar2020mirror}
Manan Tomar, Lior Shani, Yonathan Efroni, and Mohammad Ghavamzadeh.
\newblock Mirror descent policy optimization.
\newblock \emph{arXiv preprint arXiv:2005.09814}, 2020.

\bibitem[Vaswani et~al.(2021)Vaswani, Bachem, Totaro, M{\"u}ller, Garg, Geist,
  Machado, Castro, and Roux]{vaswani2021general}
Sharan Vaswani, Olivier Bachem, Simone Totaro, Robert M{\"u}ller, Shivam Garg,
  Matthieu Geist, Marlos~C Machado, Pablo~Samuel Castro, and Nicolas~Le Roux.
\newblock A general class of surrogate functions for stable and efficient
  reinforcement learning.
\newblock \emph{arXiv preprint arXiv:2108.05828}, 2021.

\bibitem[Vieillard et~al.(2020)Vieillard, Kozuno, Scherrer, Pietquin, Munos,
  and Geist]{vieillard2020leverage}
Nino Vieillard, Tadashi Kozuno, Bruno Scherrer, Olivier Pietquin, R{\'e}mi
  Munos, and Matthieu Geist.
\newblock Leverage the average: an analysis of kl regularization in rl.
\newblock \emph{arXiv preprint arXiv:2003.14089}, 2020.

\bibitem[Wang et~al.(2019)Wang, Cai, Yang, and Wang]{wang2019neural}
Lingxiao Wang, Qi~Cai, Zhuoran Yang, and Zhaoran Wang.
\newblock Neural policy gradient methods: Global optimality and rates of
  convergence.
\newblock \emph{arXiv preprint arXiv:1909.01150}, 2019.

\bibitem[Watkins(1989)]{watkins1989learning}
Christopher John Cornish~Hellaby Watkins.
\newblock Learning from delayed rewards.
\newblock 1989.

\bibitem[Williams(1992)]{williams1992simple}
Ronald~J Williams.
\newblock Simple statistical gradient-following algorithms for connectionist
  reinforcement learning.
\newblock \emph{Machine learning}, 8:\penalty0 229--256, 1992.

\bibitem[Xiao(2022)]{xiao2022convergence}
Lin Xiao.
\newblock On the convergence rates of policy gradient methods.
\newblock \emph{Journal of Machine Learning Research}, 23\penalty0
  (282):\penalty0 1--36, 2022.

\bibitem[Xu et~al.(2020)Xu, Wang, and Liang]{xu2020improving}
Tengyu Xu, Zhe Wang, and Yingbin Liang.
\newblock Improving sample complexity bounds for (natural) actor-critic
  algorithms.
\newblock \emph{Advances in Neural Information Processing Systems},
  33:\penalty0 4358--4369, 2020.

\bibitem[Yuan et~al.(2022)Yuan, Du, Gower, Lazaric, and Xiao]{yuan2022linear}
Rui Yuan, Simon~S Du, Robert~M Gower, Alessandro Lazaric, and Lin Xiao.
\newblock Linear convergence of natural policy gradient methods with log-linear
  policies.
\newblock \emph{arXiv preprint arXiv:2210.01400}, 2022.

\bibitem[Zanette et~al.(2021)Zanette, Cheng, and
  Agarwal]{zanette2021cautiously}
Andrea Zanette, Ching-An Cheng, and Alekh Agarwal.
\newblock Cautiously optimistic policy optimization and exploration with linear
  function approximation.
\newblock In \emph{Conference on Learning Theory}, pages 4473--4525. PMLR,
  2021.

\bibitem[Zhan et~al.(2023)Zhan, Cen, Huang, Chen, Lee, and Chi]{zhan2023policy}
Wenhao Zhan, Shicong Cen, Baihe Huang, Yuxin Chen, Jason~D Lee, and Yuejie Chi.
\newblock Policy mirror descent for regularized reinforcement learning: A
  generalized framework with linear convergence.
\newblock \emph{SIAM Journal on Optimization}, 33\penalty0 (2):\penalty0
  1061--1091, 2023.

\end{thebibliography}
\bibliographystyle{plainnat}

%%%%%%%%%%%%%%%%%%%%%%%%%%%%%%%%%%%%%%%%%%%%%%%%%%%%%%%%%%%%

\newpage
\appendix

\section{Related Work}
\label{sec:detailed_related_work}

\paragraph{PG and PMD in tabular MDPs.} Although the proposal of policy gradient theorem and natural policy gradient (NPG) can be traced back to around 2000s or even before \citep{williams1992simple, konda1999actor, sutton1999policy, kakade2001natural}, the study of its convergence to the global optimum only started in recent years. On the other hand, mirror descent algorithm \citep{nemirovskij1983problem} has been extensively studied for a long time as an online learning algorithm \citet{bubeck2012regret}. To connect these two, \citet{neu2017unified} first shows that NPG can be viewed as a special case of policy mirror descent (PMD) and most of the following convergence analyses are based on this viewpoint. For tabular MDPs, \citet{shani2020adaptive} shows that unregularized NPG with a softmax policy has a $O(1/\sqrt{K})$ convergence rate. \citet{agarwal2021theory, vieillard2020leverage, xu2020improving} then improve it to the $O(1/K)$ convergence rate under different settings. After that, \citet{khodadadian2021linear, bhandari2021linear, xiao2022convergence} prove the linear convergence rate for the NPG method. Very recently, \citet{johnson2023optimal} shows that a linear convergence rate is optimal for NPG in tabular MDPs and \citet{mei2023ordering} provides a new perspective by proving a necessary and sufficient ordering-based condition for NPG convergence in bandit setting.

% {\color{red} Add \citet{johnson2023optimal}}

\paragraph{PG and PMD in regularized MDPs.} Another parallel line of work analyzes applying NPG method to maximum entropy reinforcement learning. \citet{cayci2021linear, cen2022fast} show that NPG with softmax policies can converge linearly in entropy-regularized MDPs while \citet{lan2023policy} also shows general PMD method converges linearly. Then, the linear convergence of PMD is extended to MDPs with general convex regularizers by \citet{zhan2023policy}. Meanwhile, \citet{li2022homotopic} and \citet{lan2023block} also propose other variants of PMD methods that converge linearly in entropy-regularized MDPs.

\paragraph{PG and PMD with function approximation.} % For log-linear policies,
\citet{agarwal2021theory} shows Q-NPG with log-linear policies achieves $O(1/\sqrt{K})$ convergence rate while \citet{cayci2021linear} and \citet{yuan2022linear} show that NPG with log-linear policies can converge linearly in entropy-regularized MDPs and unregularized MDPs. Meanwhile, \citet{chen2022finite} and \citet{chen2022sample} show similar $O(1/K)$ and linear convergence result under different assumptions, respectively. For more general function approximation setting, \citet{wang2019neural} shows that NPG with two-layer neural network has $O(1/\sqrt{K})$ convergence rate and \citet{liu2019neural} shows that NPG with multi-layer neural network achieves $O(1/\sqrt{K})$ convergence rate. Recently, \citet{alfano2023novel} shows PMD method with general function approximation can converge linearly. The main difference between \citet{alfano2023novel} and our work lies on how we define approximation, as discussed in Section \ref{sec:compare_ampo}.

% \paragraph{PG and PMD in continuous MDPs.} For continuous MDPs, recent works studying policy optimization mostly focus on control problems with specific structure  \citep{Hu2023POSurvey}. In particular, \citet{fazel2018global} shows that NPG can converge to the global optimum in LQR problem and related results for LQG problem are discussed in \citet{zheng2023LQG}. For continuous-space MDPs, \citet{pirotta2015policy} show a monotonic improvement of vanilla PG method in Lipschitz MDPs while \citet{bedi2022hidden} treat the value function as a general non-convex objective and shows convergence to stationary points. For MDPs with both continuous space and time, \citet{lee2021policy} studies a continuous version of policy iteration; \citet{munos2006policy} proposes a continuous-time policy gradient theorem while \citet{zhao2023policy} develops a continuous-time TRPO algorithm and studies its monotonic improvement property. \citet{lan2022policy} is the only known work that studies convergence of PMD method in general continuous-space MDPs and achieves $O(1/K)$ convergence rate. 
% Compare to it, our result has a stronger linear convergence, and our algorithm incorporates parameterized policies.

\paragraph{Applications of PG.} Together with the rise of deep Q-learning \citep{mnih2013playing}, PG methods have also inspired many successful practical algorithms for real-world control task, including DDPG in \citet{lillicrap2015continuous}, TRPO in \citet{schulman2015trust}, PPO in \citet{schulman2017proximal} and SAC in \citet{haarnoja2018soft, haarnoja2018soft2}. Recently, \citet{tomar2020mirror} and \citet{vaswani2021general} propose general policy optimization algorithms based on mirror descent that are similar to ours. However, both of them treat policy parameterization as a black box and neither provides a convergence rate analysis.

\paragraph{Other related work.} The capability of policy gradient methods to do exploration in MDPs is also studied in \citet{cai2020provably, agarwal2020pc, shani2020optimistic, zanette2021cautiously}. \citet{grudzien2022mirror} proposes an abstract framework called mirror learning for both tabular and continuous-space MDPs that includes mirror descent as a special case. It provides an asymptotic convergence analysis but does not consider any function approximation setting. Finally, for optimization in functional space, \citet{chu2019probability} provides a framework setup that unifies variational inference and reinforcement learning. More recently, \citet{aubin2022mirror} studies mirror descent in general functional space and provides a rigorous convergence rate analysis. However, it only focuses on the primal space. 
\section{Legendre Function and Relaxations}
\label{sec:legendre}
%Consider a normed vector space $\X$, possibly of infinite dimension. 
%Let~$\Phi:\X\to\R\cup\{+\infty\}$ be a proper, closed convex function 
%an extended-real-valued function%on~$\X$ 
%$\Phi:\X\to\R\cup\{+\infty\}$ 
Let~$\X$ be a normed vector space, possibly of infinite dimension,
and $\Phi:\X\to\R\cup\{+\infty\}$ a proper, closed convex function 
with %domain 
$\dom\Phi=\{x\in\X \,|\, \Phi(x)<+\infty\}$. 
%We define $\mc{D}$ as the interior of $\dom\Phi$, i.e., $\mc{D}=\intr(\dom\Phi)$.
%We call $\Phi$ a convex function of Legendre-type if $\Phi$ is strictly convex and essentially smooth on~$\mc{D}$. Here essential smoothness means that $\Phi$ is differentiable on~$\mc{D}$, which needs to be nonempty, and $\|\nabla\Phi(x)\|\to+\infty$ as $x$ approaches the boundary of~$\mc{D}$. (For a precise definition, see Definition~\ref{def:legendre_infinite} in Appendix~\ref{sec:continuous_formulation}).
\begin{definition}\label{def:legendre}
%A closed proper convex function $\Phi:\mc{C}\mapsto\R$ is of \textit{Legendre type} if
The function $\Phi$ is of \textit{Legendre type} if
% \vspace{-1ex}
\begin{enumerate}[(a)]
\item 
The interior of $\dom\Phi$, 
%$\mc{D}\triangleq\intr(\dom\Phi)$, is nonempty;
%$\mc{D}$ is nonempty;
denoted by $\mc{D}$, is nonempty;
\item $\Phi$ is differentiable and strictly convex on $\mc{D}$;
\item For any sequence $\Bp{x_n}\subset\mc{D}$ which converges to a boundary point of $\mc{D}$, it holds that $\lim_{n\rightarrow\infty}\Norm{\nabla\Phi(x_n)}=\infty$.
%\item $\lim_{n\rightarrow\infty}\Norm{\nabla\Phi(x_n)}=\infty$ for any sequence $\{x_n\}\subset\mc{D}$ that converges to a boundary point of $\dom\Phi$.
\end{enumerate}
\end{definition}

%The (Legendre) conjugate of $\Phi$ is defined on $\X^*$, the dual vector space of~$\X$. Specifically, for any $x^\star\in\X^\star$, the value of the conjugate function is defined as
Let $\X^*$ be the dual vector space of~$\X$. 
The (Legendre) conjugate of $\Phi$ is defined as follows: 
for any $x^\star\in\X^\star$, 
%the value of the conjugate function is defined as
\begin{equation}
    \label{equ:conjugate}
    \Phi^*(x^*)=\sup_{x\in\dom\Phi}\Bp{\inner{x, x^*}-\Phi(x)}.
\end{equation}
%As before, %in the primal space, 
%We define 
Similarly,
$\dom\Phi^*=\{x^*\in\X^* \,|\, \Phi^*(x^*)<+\infty\}$ 
and $\mc{D}^*=\intr(\dom\Phi^*)$.
If $\Phi$ is of Legendre type, 
then its gradient $\nabla\Phi$ is one-to-one from $\mc{D}$ to $\mc{D}^*$ and $\nabla\Phi^*=(\nabla\Phi)^{-1}$; in other words, for any $x\in\mc{D}$ and $x^*\in\mc{D}^*$,
\begin{equation}\label{eqn:conj-inv}
%\nabla\Phi^*=(\nabla\Phi)^{-1}.
\nabla\Phi^*(\nabla\Phi(x))=x, \qquad 
\nabla\Phi(\nabla\Phi^*(x^*))=x^*.
\end{equation}
See \citet[Theorem~26.5]{Rockafellar1970book} for further details.

However, if $\Phi$ is not of Legendre type, then~\eqref{eqn:conj-inv} may not hold. In particular, this is the case if the $\dom\Phi$ is the simplex $\Delta = \{x\in\R^n_+ \,|\,\sum_i x_i=1\}$,
% \begin{equation}\label{eqn:simplex}
% % \textstyle
% \Delta = \{x\in\R^n_+ \,|\,\sum_i x_i=1\},
% \end{equation}
which has an empty interior. 
In fact, such functions are not even differentiable.
To see this, let $\Phi(x)=\phi(x)+\delta(x|\Delta)$ where~$\phi$ is convex and differentiable over $\R^n$, and  
$\delta(\cdot|\Delta)$ is the indicator function of~$\Delta$, i.e., 
$\delta(x|\Delta)=0$ if $x\in\Delta$ and $+\infty$ otherwise.
Then $\Phi$ is not a differentiable function. 
However, it is subdifferentiable with subdifferential 
\begin{equation}\label{eqn:subdiff}
\partial\Phi(x) = \{\nabla\phi(x) + c\ones \,|\,c\in\R\},
\end{equation}
where $\ones=\matenv{1 & \dots & 1}^\top$.
%$\ones$ denotes the vector of all ones.
Given the importance of simplex in studying MDPs, we present the following relaxation of~\eqref{eqn:conj-inv}, which is crucial for our main results.

\begin{restatable}{lemma}{affinelem}
    \label{lem:affine}
    Suppose $\Phi(x) = \phi(x) + \delta(x|\mc{L})$ where $\phi$ is a convex function of Legendre type and $\mc{L}$ is an affine subspace. Assume that $\intr(\dom\phi)\cap\mc{L}\neq\emptyset$. Then we have
    %for any $x\in\dom\Phi$, 
    \[
    \nabla\Phi^*(\nabla\Phi(x))=x, \qquad 
    %\forall\,x\in\dom\Phi.
    \forall\,x\in\intr(\dom\phi)\cap\mc{L}.
    \]
    And for any $x^*\in\intr (\dom\Phi^*)$ and any $x,y\in\dom\Phi$, 
    \[
    \bigl\langle\nabla\Phi(\nabla\Phi^*(x^*)), x-y\bigr\rangle
    =\langle x^*, x-y\rangle ,
    \]
    where $\nabla\Phi(x)$ denotes any subgradient in $\partial\Phi(x)$.
\end{restatable}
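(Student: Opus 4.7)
The plan is to reduce the lemma to classical Legendre duality for $\phi$ by absorbing the indicator $\delta(\cdot|\mc{L})$ into $\Phi^*$ via infimal convolution. First, write the affine subspace as $\mc{L}=x_0+V$ for a linear subspace $V$ with orthogonal complement $V^\perp$. A direct computation gives $\delta(\cdot|\mc{L})^*(w)=\langle w,x_0\rangle+\delta(w|V^\perp)$, and the transversality hypothesis $\intr(\dom\phi)\cap\mc{L}\neq\emptyset$ legitimizes the conjugate-of-sum formula \citep[Thm.~16.4]{Rockafellar1970book}, yielding
\[
\Phi^*(x^*)=\inf_{w\in V^\perp}\bigl\{\phi^*(x^*-w)+\langle w,x_0\rangle\bigr\}.
\]
On the primal side, the subdifferential calculus gives $\partial\Phi(x)=\nabla\phi(x)+V^\perp$ at every $x\in\intr(\dom\phi)\cap\mc{L}$, so every admissible $\nabla\Phi(x)$ differs from the single-valued $\nabla\phi(x)$ by an element of $V^\perp$. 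The whole proof hinges on this geometric dichotomy: subgradient ambiguity lives in $V^\perp$, tangent directions within $\dom\Phi$ live in $V$, and $V\perp V^\perp$.

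For the first identity, I fix $x\in\intr(\dom\phi)\cap\mc{L}$ and pick an arbitrary $g=\nabla\phi(x)+v\in\partial\Phi(x)$ with $v\in V^\perp$, then verify that $w^\star=v$ attains the inner minimization defining $\Phi^*(g)$. The first-order condition over $w\in V^\perp$ reads $\nabla\phi^*(g-w)-x_0\in(V^\perp)^\perp=V$, and for $w=v$ it collapses to $\nabla\phi^*(\nabla\phi(x))-x_0=x-x_0\in V$, which holds since $x\in\mc{L}$. Strict convexity of $\phi^*$ on $\intr(\dom\phi^*)$ (a consequence of $\phi$ being Legendre, via Rockafellar Thm.~26.5) makes $w^\star$ unique, and the envelope theorem then gives $\nabla\Phi^*(g)=\nabla\phi^*(g-v)=\nabla\phi^*(\nabla\phi(x))=x$. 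For the second identity, set $z=\nabla\Phi^*(x^*)$; the same envelope computation shows $z=\nabla\phi^*(x^*-w^\star(x^*))$ for some $w^\star(x^*)\in V^\perp$ whose optimality condition forces $z-x_0\in V$, so $z\in\intr(\dom\phi)\cap\mc{L}$, and Fenchel--Young equality at $(x^*,z)$ yields $x^*\in\partial\Phi(z)=\nabla\phi(z)+V^\perp$. Hence any chosen subgradient $\nabla\Phi(z)\in\partial\Phi(z)$ satisfies $x^*-\nabla\Phi(z)\in V^\perp$; since $x,y\in\dom\Phi\subset\mc{L}$ forces $x-y\in V$, we conclude $\langle x^*-\nabla\Phi(z),\,x-y\rangle=0$, which is the claim.

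The main obstacle is that, because $\Phi$ is neither differentiable nor of Legendre type, one cannot invoke the standard identity $\nabla\Phi^*=(\nabla\Phi)^{-1}$ at all; instead, one must certify that the $V^\perp$-ambiguity in each subgradient $\nabla\Phi(\cdot)$ is harmless for both statements. The infimal-convolution picture makes this quantitative: the ambiguity lives precisely along the direction in which $\Phi^*$ is merely translated by the linear term $\langle\cdot,x_0\rangle$, and is annihilated by any tangent difference inside $\dom\Phi$. Once the decomposition $\X=V\oplus V^\perp$ is made explicit, both identities reduce to the classical Legendre statement for $\phi$ restricted to $\mc{L}$, with the Fenchel sum rule and envelope theorem doing the bookkeeping.
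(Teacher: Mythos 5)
Your proof is correct, and it takes a genuinely different route from the paper's. The paper stays on the primal side: it characterizes $\nabla\Phi^*(x^*)$ as the unique maximizer of $\inner{z, x^*}-\phi(z)$ over $\mc{L}\cap\dom\phi$, extracts the optimality condition through the normal-cone identity $\mc{N}_{\mc{L}\cap\dom\phi}(x')=\mc{V}^{\perp}+\mc{N}_{\dom\phi}(x')$, rules out boundary maximizers because a Legendre function has empty subdifferential on the boundary of its domain, and, for the first identity, concludes $x=x'$ from $\inner{\nabla\phi(x)-\nabla\phi(x'), x-x'}=0$ together with strict convexity of $\phi$. You instead dualize at the outset, writing $\Phi^*$ as the attained infimal convolution $\Phi^*(x^*)=\inf_{w\in\mc{V}^{\perp}}\bigl\{\phi^*(x^*-w)+\inner{w,x_0}\bigr\}$ via \citet[Theorem~16.4]{Rockafellar1970book}, and reduce both identities to classical Legendre duality for $\phi$. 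Both arguments hinge on the same geometric dichotomy (subgradient ambiguity in $\mc{V}^{\perp}$, tangent directions in $\mc{V}$) and the same essential-smoothness input, but the bookkeeping is genuinely different: your route additionally yields an explicit formula for $\Phi^*$ --- in particular that $\Phi^*$ is affine along $\mc{V}^{\perp}$-translates, which is exactly why the subgradient ambiguity is harmless --- and it makes the differentiability of $\Phi^*$ explicit, a fact the paper uses only implicitly when it treats $\nabla\Phi^*(x^*)$ as a single point; the paper's route is more elementary, needing neither attainment in the dual problem nor any envelope argument. Two of your steps deserve tightening, though both are completable within your framework: (i) the ``envelope theorem'' invocation is cleanest via the exact subdifferential rule for an attained infimal convolution, $\partial\Phi^*(x^*)=\partial f(w^\star)\cap\partial\phi^*(x^*-w^\star)$ with $f(w)=\inner{w,x_0}+\delta(w|\mc{V}^{\perp})$; essential smoothness of $\phi^*$ then pins $\partial\Phi^*(x^*)$ to the singleton $\{\nabla\phi^*(x^*-w^\star)\}$ and, in your second claim, forces $x^*-w^\star\in\intr(\dom\phi^*)$ so that $z\in\intr(\dom\phi)$ --- note that uniqueness of $w^\star$, which you argue from strict convexity, is never actually needed for either identity; (ii) for the first identity you assert rather than prove that $\Phi^*$ is differentiable at $g=\nabla\Phi(x)$; this follows either from the same singleton computation (nonemptiness holds since $x\in\partial f(v)=\mc{L}$), or by checking $g\in\intr(\dom\Phi^*)$ using $\dom\Phi^*\supseteq\dom\phi^*+\mc{V}^{\perp}$ and $\nabla\phi(x)\in\intr(\dom\phi^*)$. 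Neither point threatens the argument.
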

\begin{proof}
    Let $\mc{L}=x_0+\mc{V}$ where $\mc{V}$ is a subspace, and denote $\mc{V}^{\perp}$ its orthogonal complement. First, it is commonly known that the subdifferential of an indicator function is a normal cone \citep{bertsekas2009convex}. Thus, we have $\partial\delta(x\mid\mc{L})=\mc{N}_{\mc{L}}(x)\overset{\text{def}}{=}\Bp{g'\mid \inner{g', v+x_0-x}\leq 0, \forall v\in\mc{V}}$. That is, for any $g'\in \mc{N}_{\mc{L}}(x)$, we have $\inner{g', v}\leq \inner{g', x-x_0}$ for any $v\in\mc{V}$. Since $\mc{V}$ is a subspace, for any $v\in\mc{V}$, we have $\alpha v\in\mc{V}$ for any $\alpha\in\R$. Therefore, we must have $\inner{g', v}=0$ for any $v\in\mc{V}$ and $g'\in \mc{N}_{\mc{L}}(x)$. That is, we have $\mc{N}_{\mc{L}}(x)=\mc{V}^{\perp}$. (The reverse side is straightforward.)
    % First we note that the normal cone of $\mc{L}$ at any $x\in\mc{L}$ is $\mc{N}_{\mc{L}}(x)=\mc{V}^{\perp}$ as shown in the proof of Lemma \ref{lem:Breg_well_defined}.
    
    Suppose $x\in\intr(\dom\phi)\cap\mc{L}$. Then, according to the subdifferential calculus rule, we must have
    $\nabla\Phi(x)=\nabla\phi(x) + \xi$ for some $\xi\in\mc{N}_{\mc{L}}(x)=\mc{V}^{\perp}$.
    Let $x'\triangleq\nabla\Phi^*(\nabla\Phi(x))$.
    By definition of $\Phi^*$ and strict convexity of $\phi$, we have
    \[
    x'= \argmax_{z\in\mc{L}\cap\dom\phi}\left\{\langle z, \nabla\phi(x)+\xi\rangle - \phi(z)\right\}.
    \]
    The optimality condition of the above problem is
    \[
    \nabla\phi(x)+\xi - \nabla\phi(x') \in \mc{N}_{\mc{L}\cap\dom\phi}(x')=\mc{V}^{\perp} + \mc{N}_{\dom\phi}(x'),
    \]
    where the last equality above holds because $\mc{N}_{\mc{L}}(x)=\mc{V}^{\perp}$. Note that $x'=\nabla\Phi^*(\nabla\Phi(x))$ implies $\nabla\Phi(x)\in\partial\Phi(x')=\partial\phi(x')+\partial\delta(x'|\mc{L})$. As shown in \citet{rockafellar1967conjugates}, $\partial\phi(x)=\emptyset$ for any $x\in\mathrm{bd}\ \dom\phi$ for a Legendre type function $\phi$. Therefore, we must have $x'\in\intr(\dom\phi)$, which then implies $\mc{N}_{\dom\phi}(x')=\Bp{\bm{0}}$. Thus, we have
    \[
    \nabla\phi(x)+\xi - \nabla\phi(x') \in \mc{V}^{\perp}
    \]
    
    Since $\xi\in\mc{V}^{\perp}$, we conclude that $\nabla\phi(x)-\nabla\phi(x')\in\mc{V}^{\perp}$.
    On the other hand, we have $x,x'\in\mc{L}$, which implies that $x-x'\in\mc{V}$.
    Therefore,
    \[
    \left\langle \nabla\phi(x)-\nabla\phi(x'),\,x-x'\right\rangle = 0.
    \]
    Since $\phi$ is strictly convex, we must have $x=x'$, thus proving $\nabla\Phi^*(\nabla\Phi(x))=x$.
    
    To prove the second statement, let $x'\triangleq\nabla\Phi^*(x^*)$, i.e.,
    \[
    x' = \argmax_{z\in\mc{L}\cap\dom\phi} \left\{\langle z, x^*\rangle-\phi(z)\right\}.
    \]
    By similar reasoning, we have $x'\in\intr(\dom\phi)$. Thus, the optimality condition is 
    $x^* - \nabla\phi(x')\in \mc{V}^{\perp}$, meaning
    $\nabla\phi(x')=x^* + \xi$ for some $\xi\in\mc{V}^{\perp}$.
    Meanwhile,
    \[
    \nabla\Phi(\nabla\Phi^*(x^*)) = \nabla\Phi(x') =\nabla\phi(x') + \xi'
    =x^* + \xi + \xi',
    \]
    where $\xi'\in\mc{V}^{\perp}$.
    Since $\xi,\xi'\in\mc{V}^{\perp}$ and $x-y\in\mc{V}$, we have
    \[
    \left\langle\nabla\Phi(\nabla\Phi^*(x^*)), x-y\right\rangle
    =\langle x^*, x-y\rangle .
    \]
    This finishes the proof.
\end{proof}

% \begin{lemma} \label{lem:affine}
% Suppose $\Phi(x) = \phi(x) + \delta(x|\mc{L})$ where $\phi$ is a convex function of Legendre type and $\mc{L}$ is an affine subspace. Assume that $\intr(\dom\phi)\cap\mc{L}\neq\emptyset$. Then we have
% %for any $x\in\dom\Phi$, 
% \[
% \nabla\Phi^*(\nabla\Phi(x))=x, \qquad 
% %\forall\,x\in\dom\Phi.
% \forall\,x\in\intr(\dom\phi)\cap\mc{L}.
% \]
% Moreover, for any $x^*\in\dom\Phi^*$ and any $x,y\in\dom\Phi$, 
% \[
% \bigl\langle\nabla\Phi(\nabla\Phi^*(x^*)), x-y\bigr\rangle
% =\langle x^*, x-y\rangle ,
% \]
% where $\nabla\Phi(x)$ denotes any subgradient in $\partial\Phi(x)$.
% \end{lemma}

Notice that if $\dom \phi=\R^n_+$ and $\mc{L}=\{x\in\R^n|\ones^T x=1\}$, then $\dom\Phi=\dom \phi \cap \mc{L} = \Delta$.
This is how we will invoke Lemma~\ref{lem:affine} with $\phi$ being the negative entropy function. We call $\Phi(x)$ defined in Lemma \ref{lem:affine} as \textit{relaxed Legendre-type function}.

Furthermore, we have the following corollary so that the Bregman divergence in Eq. \eqref{equ:bregman} is also well-defined for the relaxed Legendre-type function.
\begin{corollary}
    \label{coro:breg_well_defined}
    In the setting of Lemma \ref{lem:affine}, for any $x, y, z\in\dom\Phi$ and $g\in\partial\Phi(z)$, we have
    $$\inner{g, x-y}=\inner{\nabla\phi(z), x-y},$$
    which makes expression $\inner{\nabla\Phi(z), x-y}$ well-defined for $x, y\in\dom\Phi$.
\end{corollary}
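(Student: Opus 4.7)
The plan is to leverage the Moreau--Rockafellar subdifferential sum rule for $\Phi=\phi+\delta(\cdot\mid\mc{L})$ together with the orthogonality between the normal cone $\mc{N}_{\mc{L}}(z)$ and directions tangent to $\mc{L}$. The guiding observation is that, writing $\mc{L}=x_0+\mc{V}$, any two elements of $\partial\Phi(z)$ differ by a vector in $\mc{V}^{\perp}$, whereas the increment $x-y$ lies in $\mc{V}$; hence the inner product $\inner{g,x-y}$ is insensitive to the particular choice of $g\in\partial\Phi(z)$, and one may name the common value $\inner{\nabla\Phi(z),x-y}$.

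First I would invoke the Moreau--Rockafellar sum rule. The constraint qualification $\intr(\dom\phi)\cap\mc{L}\neq\emptyset$ assumed in Lemma~\ref{lem:affine} is precisely what is needed, so $\partial\Phi(z)=\partial\phi(z)+\partial\delta(z\mid\mc{L})$. I would then argue that the mere existence of $g\in\partial\Phi(z)$ forces $z\in\intr(\dom\phi)$: because $\phi$ is of Legendre type, $\partial\phi(z)=\emptyset$ whenever $z\in\mathrm{bd}(\dom\phi)$ (a fact already cited within the proof of Lemma~\ref{lem:affine}), which by the sum rule would make $\partial\Phi(z)$ empty as well. Consequently $z\in\intr(\dom\phi)\cap\mc{L}$, where $\phi$ is differentiable and $\partial\phi(z)=\{\nabla\phi(z)\}$; this also makes the right-hand side $\inner{\nabla\phi(z),x-y}$ well defined.

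Next I would exploit the structure of the normal cone identified in the proof of Lemma~\ref{lem:affine}, namely $\partial\delta(z\mid\mc{L})=\mc{N}_{\mc{L}}(z)=\mc{V}^{\perp}$. Combined with the previous step this yields $g=\nabla\phi(z)+v$ for some $v\in\mc{V}^{\perp}$. Since $x,y\in\dom\Phi\subseteq\mc{L}$, we have $x-y\in\mc{V}$, so $\inner{v,x-y}=0$, and the identity $\inner{g,x-y}=\inner{\nabla\phi(z),x-y}$ follows in one line. Because the right-hand side is independent of the chosen subgradient, the notation $\inner{\nabla\Phi(z),x-y}$ is unambiguous for all $x,y\in\dom\Phi$.

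The main obstacle I anticipate is purely in bookkeeping rather than analysis: one must carefully justify that $\nabla\phi(z)$ in the statement is meaningful, which requires ruling out boundary points of $\dom\phi$ via the Legendre property and the emptiness of $\partial\Phi(z)$ there. Once that reduction is in place, the conclusion is a one-line orthogonality computation, and no further delicate estimates are needed.
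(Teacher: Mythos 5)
Your proof is correct and follows essentially the same route as the paper's: decompose $g=\nabla\phi(z)+g'$ with $g'\in\partial\delta(z\mid\mc{L})=\mc{V}^{\perp}$ (as established in the proof of Lemma~\ref{lem:affine}) and use $x-y\in\mc{V}$ to annihilate the normal-cone component. Your extra step ruling out $z\in\mathrm{bd}(\dom\phi)$ via the emptiness of $\partial\phi$ there is a careful refinement that the paper's one-line proof leaves implicit, but it does not alter the argument.
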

\begin{proof}
    Again let $\mc{L}=x_0+\mc{V}$. As shown in the proof of Lemma \ref{lem:affine}, we have $\partial\Phi(z)=\nabla\phi(z)+\partial\delta(z\mid\mc{L})$ and $\partial\delta(z\mid\mc{L})=\mc{V}^{\perp}$. Then, since $\dom\Phi\subseteq\mc{L}$, we have $x-y\in\mc{V}$, which means to have $\inner{g', x-y}=0$ for any $g'\in \partial\delta(z\mid\mc{L})$. Therefore, we have
    $$\inner{g, x-y}=\inner{\nabla\phi(z), x-y} + \inner{g', x-y}=\inner{\nabla\phi(z), x-y}.$$
\end{proof}

The following corollary gives a dual relationship between $\Phi$'s and $\Phi^*$'s Bregman divergences.
\begin{corollary}
    \label{coro:dual_bregman}
    In the setting of Lemma \ref{lem:affine}, for $x^*, y^*\in\mathrm{int}\Sp{\dom\Phi^*}$, we have
    $$D_{\Phi^*}(x^*, y^*)=D_{\Phi}(\nabla\Phi^*(y^*), \nabla\Phi^*(x^*)).$$
\end{corollary}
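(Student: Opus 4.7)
The plan is to reduce the desired identity to the classical Fenchel--Young based calculation, but using Lemma~\ref{lem:affine} and Corollary~\ref{coro:breg_well_defined} to cope with the fact that $\nabla\Phi$ is multi-valued in the relaxed Legendre setting. Set $u=\nabla\Phi^*(y^*)$ and $v=\nabla\Phi^*(x^*)$. Inspecting the proof of Lemma~\ref{lem:affine}, the maximizer in the definition of $\Phi^*$ always lies in $\intr(\dom\phi)\cap\mc{L}\subseteq\dom\Phi$, so $u,v$ are honest points in $\dom\Phi$ at which $\phi$ is differentiable.

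Next I would record a Fenchel--Young identity in this relaxed setting: whenever $x^*\in\intr(\dom\Phi^*)$ and $x=\nabla\Phi^*(x^*)$, one has $\Phi(x)+\Phi^*(x^*)=\langle x,x^*\rangle$. This follows directly from the definition of $\Phi^*$ in \eqref{equ:conjugate}, because $x=\nabla\Phi^*(x^*)$ is by construction the (unique) attainer of the supremum. Applying this to both pairs $(u,y^*)$ and $(v,x^*)$ gives $\Phi(u)=\langle u,y^*\rangle-\Phi^*(y^*)$ and $\Phi(v)=\langle v,x^*\rangle-\Phi^*(x^*)$.

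Then I would expand the right-hand side:
\[
D_\Phi(u,v)=\Phi(u)-\Phi(v)-\bigl\langle\nabla\Phi(v),\,u-v\bigr\rangle.
\]
Here the inner product $\langle\nabla\Phi(v),u-v\rangle$ is well-defined by Corollary~\ref{coro:breg_well_defined}, and the second statement of Lemma~\ref{lem:affine} identifies it with $\langle x^*,u-v\rangle$ (since $v=\nabla\Phi^*(x^*)$ and $u,v\in\dom\Phi$). Substituting the two Fenchel--Young expressions and this inner-product replacement, a short algebraic simplification collapses the expression to $\Phi^*(x^*)-\Phi^*(y^*)-\langle u,x^*-y^*\rangle$, which is exactly $D_{\Phi^*}(x^*,y^*)$ since $u=\nabla\Phi^*(y^*)$.

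The only step that requires any care is the Fenchel--Young equality at $(u,y^*)$: in the classical Legendre case it is immediate from $\nabla\Phi\circ\nabla\Phi^*=\mathrm{id}$, but here $\nabla\Phi$ is set-valued so one must argue from the definition of $\Phi^*$ and the attainment of its supremum, which is precisely what the optimality analysis in the proof of Lemma~\ref{lem:affine} already guarantees. Everything else is bookkeeping, and no new machinery beyond Lemma~\ref{lem:affine} and Corollary~\ref{coro:breg_well_defined} is needed.
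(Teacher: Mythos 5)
Your proposal is correct and follows essentially the same route as the paper's own proof: both expand the two Bregman divergences by definition, use the Fenchel--Young equality $\Phi(\nabla\Phi^*(x^*))+\Phi^*(x^*)=\langle x^*,\nabla\Phi^*(x^*)\rangle$ at the attained supremum (the paper justifies this via \citet[Proposition~5.4.3]{bertsekas2009convex}, you argue it directly from attainment in the definition of $\Phi^*$), and invoke the second statement of Lemma~\ref{lem:affine} to replace the set-valued term $\bigl\langle\nabla\Phi(\nabla\Phi^*(x^*)),\,u-v\bigr\rangle$ by $\langle x^*,u-v\rangle$. The only cosmetic difference is bookkeeping: you transform $D_\Phi(u,v)$ into $D_{\Phi^*}(x^*,y^*)$, whereas the paper shows their difference collapses to zero.
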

\begin{proof}
    By definition of the Bregman divergence in Eq. \eqref{equ:bregman}, we have
    \begin{align*}
        &D_{\Phi^*}(x^*, y^*)-D_{\Phi}(\nabla\Phi^*(y^*), \nabla\Phi^*(x^*))\\
        =& \Phi^*(x^*) - \Phi^*(y^*) - \inner{\nabla\Phi^*(y^*), x^* - y^*}\\
        &\qquad - \Mp{\Phi(\nabla\Phi^*(y^*)) - \Phi(\nabla\Phi^*(x^*)) - \inner{\nabla\Phi(\nabla\Phi^*(x^*)), \nabla\Phi^*(y^*) - \nabla\Phi^*(x^*)}}\\
        =& \Mp{\Phi^*(x^*) + \Phi(\nabla\Phi^*(x^*))} - \Mp{\Phi^*(y^*) + \Phi(\nabla\Phi^*(y^*))}\\
        &\qquad - \inner{\nabla\Phi^*(y^*), x^* - y^*} + \inner{x^*, \nabla\Phi^*(y^*) - \nabla\Phi^*(x^*)}\tag{By Lemma \ref{lem:affine} and $\nabla\Phi^*(y^*), \nabla\Phi^*(x^*)\in\dom\Phi$.}\\
        =& \inner{x^*, \nabla\Phi^*(x^*)} - \inner{y^*, \nabla\Phi^*(y^*)} - \inner{\nabla\Phi^*(y^*), x^* - y^*} + \inner{x^*, \nabla\Phi^*(y^*) - \nabla\Phi^*(x^*)}\tag{By \citet[Proposition~5.4.3]{bertsekas2009convex}.}\\
        =& \inner{\nabla\Phi^*(x^*), x^* - x^*} + \inner{\nabla\Phi^*(y^*), -y^* + y^* - x^* + x^*}\\
        =& 0.
    \end{align*}
\end{proof}
\section{Analysis of \algoacronym-$L_2$}
\label{sec:analysis-L2}
The analysis of \algoacronym-KL requires a slightly modified assumption (\textsf{A1'}) compared with \algoacronym-KL.
\begin{enumerate}
\item[(\textsf{A1'})] Under the same setting as (\textsf{A1}), we instead have
%\fontsize{9.3}{9.3}
%$$\E^{\bk}\Mp{D_{\KL}\Sp{\pi^{\bkone}_s\left\|\frac{\pi_s^{\bk}\exp\Sp{-\eta_k\widehat{Q}^{\bk}_s}}{Z^{\bk}_s}\right.}}\leq \eta_k\epsilon_{\actor},$$
%\normalsize
    $$\E_{s\sim d^{(k)}_\rho} \!
                \Bigl[\bigl\|f^{\bkone}_s\!-\bigl(\pi^{\bk}_s\! + \eta_k\widehat{Q}^{\bk}_s\bigr)\bigr\|_2^2\Bigr]\leq 2\eta_k^2\epsilon_{\actor}.$$
\end{enumerate}

The scaling coefficient $\eta_k^2$ is consistent with Assumption~\textsf{(A1')} 
in \citet{alfano2023novel} as they assume the $L_2$-error for approximating 
% $\eta_k^{-1}\nabla\Phi(\pi^{(k)})-\widehat{Q}^{(k)}$ is bounded by $\epsilon_{\critic}$.
Meanwhile, notice that in (\textsf{A1}) at Eq. \eqref{equ:error_assumption}, the upper bound of the approximation error is $\eta_k\epsilon_{\actor}$ and is different from (\textsf{A1'}). 
%However, even though $\eta_k$ may be geometrically increasing, (\textsf{A1'}) is not a stronger assumption than (\textsf{A1}). 
This is the result of considering the growth rate of the approximation error in~$\eta_k$ for different Bregman divergences. 
Specifically, if we keep $f^{\bkone}$, $f^{\bk}$ and $\widehat{Q}^{\bk}$ fixed, then the $L_2$-error in (\textsf{A1'}) satisfies 
$$\E_{s\sim d^{(k)}_\rho}\Bigl[\bigl\|f^{\bkone}_s-\bigl(\pi^{\bk}_s + \eta_k\widehat{Q}^{\bk}_s\bigr)\bigr\|_2^2\Bigr]\propto\eta_k^2.$$
However, the KL-divergence in (\textsf{A1}) satisfies 
%$$\E^{\bk}\Mp{D_{\KL}\Sp{\pi^{\bkone}_s\left\|\frac{\pi_s^{\bk}\exp\Sp{-\eta_k\widehat{Q}^{\bk}_s}}{Z^{\bk}_s}\right.}}\propto\eta_k.$$ 
\[
    \E_{s\sim d^{(k)}_\rho} \!\! \Mp{D_{\KL}\!\Sp{\pi^{(k+1)}_s \Big\|\, \pi_s^{\bk}\exp\bigl(-\eta_k\widehat{Q}^{\bk}_s\bigr)/Z^{\bk}_s}}
    \propto\eta_k.
\]

Then, we have the following theorem.
\begin{restatable}[Linear Convergence of \algoacronym-$L_2$]{theorem}{convltwo}
    \label{theo:convergence_l2_bapmd}
    Consider Algorithm \ref{alg:compo} with initial policy $\pi^{(0)}$, initial distribution $\rho\in\Delta(\S)$ and $\Phi$ being the squared $L_2$-norm. Suppose Assumptions (\textsf{A1'}) and (\textsf{A2}) hold and the step sizes satisfy $\eta_0>1$ and 
%$\eta_{k+1}\geq\frac{\vartheta_{\rho}}{\vartheta_{\rho}-1}\eta_k$ 
$\eta_{k+1}\geq\left(\vartheta_{\rho}/(\vartheta_{\rho}-1)\right)\eta_k$ 
    for all $k\geq 0$. Then, for any comparator policy $\pi^{\star}$, it holds that
    %\vspace{-1ex}
    \IfTwoColumnElse{
        \begin{align*}
            V^{(K)}_{\rho}-V^\star_\rho
            \leq&\; \biggl(1-\frac{1}{\vartheta_{\rho}}\biggr)^{\!\!K} \!\!
            \left(V^{(0)}_{\rho}-V^\star_\rho + \frac{D^\star_0 / (\vartheta_{\rho}-1)}{(1-\gamma)\eta_0} \right)\\
            \quad &  +\frac{\vartheta_{\rho}^2\sqrt{2\epsilon_{\actor}} + 2\vartheta_{\rho}\epsilon_{\critic}}{1-\gamma}.
        \end{align*}
    }{
        $$V^{(K)}_{\rho}-V^\star_\rho\leq\Sp{1-\frac{1}{\vartheta_{\rho}}}^K\left(V^{(0)}_{\rho}-V^\star_\rho + \frac{D^\star_0 / (\vartheta_{\rho}-1)}{(1-\gamma)\eta_0} \right)  +\frac{\vartheta_{\rho}^2\sqrt{2\epsilon_{\actor}} + 2\vartheta_{\rho}\epsilon_{\critic}}{1-\gamma}.$$
    }
     where $D_0^\star=\E_{s\sim d^\star_{\rho}}\bigl[D_{\Phi}(\pi_s^\star, \pi_s^{(0)})\bigr]$.
\end{restatable}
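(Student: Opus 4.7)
The plan is to carry out the Euclidean counterpart of the proof of Theorem~\ref{theo:convergence_kl_bapmd}. The structural skeleton is identical: combine a performance-difference identity with a one-step three-point inequality coming from the mirror update, telescope over $k$ using the geometrically increasing step sizes, and absorb approximation errors via the distribution-mismatch Assumption~(\textsf{A2}). The key simplification is that with $\Phi(x)=\tfrac{1}{2}\Norm{x}_2^2$ we have $D_\Phi(x,y)=\tfrac{1}{2}\Norm{x-y}_2^2$, so neither a Pinsker-type inequality nor the policy-ratio factor $C_\rho$ appears in the final bound, which is why Assumption~(\textsf{A3}) can be dropped entirely.

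First, I would derive a per-state one-step inequality. Since $\pi^\bkone_s$ is the Euclidean projection of $f^\bkone_s$ onto $\Delta(\A)$, the standard projection inequality gives $\tfrac{1}{2}\Norm{\pi^\star_s-\pi^\bkone_s}_2^2 \leq \tfrac{1}{2}\Norm{\pi^\star_s - f^\bkone_s}_2^2 - \tfrac{1}{2}\Norm{\pi^\bkone_s - f^\bkone_s}_2^2$ for any $\pi^\star_s\in\Delta(\A)$. Expanding the right-hand side around the ideal target $\pi^\bk_s-\eta_k\widehat{Q}^\bk_s$ and applying the Euclidean three-point identity yields an inequality of the form $\eta_k\inner{\widehat{Q}^\bk_s,\, \pi^\bkone_s - \pi^\star_s} \leq D_\Phi(\pi^\star_s,\pi^\bk_s) - D_\Phi(\pi^\star_s,\pi^\bkone_s) + R_k(s)$, where the residual $R_k(s)$ is a bilinear functional of $f^\bkone_s - (\pi^\bk_s - \eta_k\widehat{Q}^\bk_s)$.

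Second, I would average over $s\sim d^\star_\rho$ and invoke the performance-difference lemma to turn the left-hand side into a value-function gap. Assumption~(\textsf{A2}) transfers the expectation from $d^\star_\rho$ to $d^\bk_\rho$ at the price of $\vartheta_\rho$, which permits (\textsf{A1}) and (\textsf{A1'}) to control the two sources of error: the critic error contributes the $2\vartheta_\rho\epsilon_\critic$ floor, while Cauchy--Schwarz combined with Jensen yields $\E_{s\sim d^\bk_\rho}[\Norm{f^\bkone_s - \pi^\bk_s + \eta_k\widehat{Q}^\bk_s}_2] \leq \eta_k\sqrt{2\epsilon_\actor}$, producing the $\vartheta_\rho^2\sqrt{2\epsilon_\actor}$ term, crucially without the $(1+C_\rho)\epsilon_\actor$ contribution that Pinsker produces in the KL case. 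Assembling these pieces into the two-step recursion used in~\citet{xiao2022convergence,alfano2023novel} and taking $\eta_\bkone\geq\bigl(\vartheta_\rho/(\vartheta_\rho-1)\bigr)\eta_k$ makes the Bregman terms telescope with contraction factor $1-1/\vartheta_\rho$, yielding the stated linear rate up to the additive error floor.

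The main obstacle I anticipate is the usual bookkeeping of distribution changes: the actor error in~(\textsf{A1'}) is measured under $d^\bk_\rho$, the performance-difference lemma evaluates integrands under $d^\star_\rho$, and combining consecutive Bregman terms requires moving between $d^\bk_\rho$ and $d^\bkone_\rho$. One must verify that the four ratios controlled by~(\textsf{A2}) are exactly those that appear and that the contraction constant $1-1/\vartheta_\rho$ survives the cascade. A secondary subtlety is that the Euclidean projection onto the simplex can send iterates to the boundary (unlike the softmax-style update in DAPO-KL), but this causes no trouble here since $D_\Phi$ is finite everywhere on $\Delta(\A)$ and no log-ratio factor needs to be bounded.
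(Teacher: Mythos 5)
Your proposal is correct and follows essentially the same route as the paper: the Euclidean projection inequality plus three-point expansion you describe is exactly the paper's approximate Pythagorean theorem (Lemma~\ref{lem:modified_pytha}, with the Cauchy--Schwarz bound of Lemma~\ref{lem:l2_pytha}), your Jensen treatment of the residual reproduces $\psi^{\Phi}(x)=\sqrt{2x}$ together with the cancellation of the $\eta_k^2$ scaling in (\textsf{A1'}), and the two-step recursion with geometrically growing step sizes is the unified argument of Theorem~\ref{theo:unified_theorem}, of which Theorem~\ref{theo:convergence_l2_bapmd} is an immediate corollary. You also correctly identify that Assumption~(\textsf{A3}) and the factor $C_\rho$ enter only through the KL case, matching the paper's treatment.
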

The proof of Theorem~\ref{theo:convergence_l2_bapmd} is given in Appendix \ref{sec:proof_general}. It retains the convergence rate of \citet{alfano2023novel} albeit with some different techniques. This is expected since in the $L_2$ case, \algoacronym-$L_2$ is the same as \AMPO. 
\section{Convergence Analysis of \algoacronym}
\label{sec:proof_general}

The analysis starts by proving an approximate version of the Pythagorean theorem, which controls the error in three-point identity by the corresponding Bregman divergence and will serve as the key tool of our analysis.

\subsection{Approximate Pythagorean Theorem}
We begin with a general upper bound and then, we will derive its extensions under specific choices of mirror maps.

\begin{lemma}[Approximate Pythagorean Theorem]
    \label{lem:modified_pytha}
    Let $\Phi:\C\mapsto\R$ be a proper closed convex mirror map, $\D\subseteq\C$ be a closed convex set and $v, c\in\C$ be two points. Suppose $u^\star=\argmin_{u\in\D}D_{\Phi}(u, v)$. Then, for any $u\in\D$, we have
    \IfTwoColumnElse{
        \begin{align*}
            &D_{\Phi}(u, u^\star) + D_{\Phi}(u^\star, c) - D_{\Phi}(u, c)\\
                \leq& \inner{\nabla\Phi(v)-\nabla\Phi(c), u^\star-u}.
        \end{align*}
    }{
        $$D_{\Phi}(u, u^\star) + D_{\Phi}(u^\star, c) - D_{\Phi}(u, c)\leq \inner{\nabla\Phi(v)-\nabla\Phi(c), u^\star-u}.$$
    }
\end{lemma}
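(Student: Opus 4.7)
The plan is to derive this approximate Pythagorean theorem from the classical three-point identity for Bregman divergences, together with the first-order optimality condition for $u^\star$.

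First I would expand the three Bregman divergences using their definition $D_\Phi(a,b)=\Phi(a)-\Phi(b)-\inner{\nabla\Phi(b),a-b}$ and cancel the $\Phi$-terms. A direct computation yields the standard three-point identity
$$D_{\Phi}(u, u^\star) + D_{\Phi}(u^\star, c) - D_{\Phi}(u, c) = \inner{\nabla\Phi(c) - \nabla\Phi(u^\star),\, u - u^\star}.$$
Next I would add and subtract $\nabla\Phi(v)$ inside the inner product to split the right-hand side as
$$\inner{\nabla\Phi(c) - \nabla\Phi(v),\, u - u^\star} + \inner{\nabla\Phi(v) - \nabla\Phi(u^\star),\, u - u^\star}.$$
The first term is exactly $\inner{\nabla\Phi(v)-\nabla\Phi(c),\, u^\star-u}$, which is the desired upper bound. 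So the task reduces to showing the second term is nonpositive.

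For that, I would invoke the first-order optimality condition for $u^\star=\argmin_{u\in\D}D_\Phi(u,v)$ on the closed convex set $\D$. Since the gradient of $u\mapsto D_\Phi(u,v)$ at $u^\star$ is $\nabla\Phi(u^\star)-\nabla\Phi(v)$, optimality on~$\D$ gives $\inner{\nabla\Phi(u^\star)-\nabla\Phi(v),\, u-u^\star}\geq 0$ for every $u\in\D$, i.e., $\inner{\nabla\Phi(v)-\nabla\Phi(u^\star),\, u-u^\star}\leq 0$. Combining this with the split above yields the claim.

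The only delicate point, and likely the main obstacle, is that in our intended applications $\Phi$ need not be of Legendre type (cf. Example~\ref{ex:ent-simplex}), so $\nabla\Phi$ should be interpreted as some element of $\partial\Phi$. I would appeal to Corollary~\ref{coro:breg_well_defined} to argue that every inner product of the form $\inner{\nabla\Phi(z),\, x-y}$ appearing in the three-point identity depends only on the Legendre part $\nabla\phi(z)$ when $x,y\in\dom\Phi$, so the identity and subsequent algebra are unambiguous. The optimality inequality then needs to be stated using the same choice of subgradient; one checks it via the subdifferential calculus $\partial D_\Phi(\cdot,v)(u^\star)=\partial\Phi(u^\star)-\nabla\Phi(v)$ together with the normal cone of $\D$, noting again that in the pairing against $u-u^\star\in\mathrm{span}(\dom\Phi-\dom\Phi)$ only the Legendre part of the subgradient survives. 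Apart from this bookkeeping, the proof is a one-line combination of the three-point identity with the variational inequality characterizing $u^\star$.
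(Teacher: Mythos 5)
Your proposal is correct and follows essentially the same route as the paper's proof: expand to the three-point identity, add and subtract $\nabla\Phi(v)$, and eliminate the residual term $\inner{\nabla\Phi(v)-\nabla\Phi(u^\star), u-u^\star}$ via the first-order optimality (variational inequality) of the Bregman projection, which is precisely the content of the Lemma~4.1 of \citet{bubeck2012regret} that the paper cites at that step. Your extra bookkeeping on interpreting $\nabla\Phi$ as a subgradient via Corollary~\ref{coro:breg_well_defined} is a sound addition that matches how the paper itself resolves the non-Legendre subtlety when it later invokes this lemma (through Lemma~\ref{lem:affine} in the proof of Lemma~\ref{lem:base_relation}).
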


\begin{proof}
    Using the definition of Bregman divergence in Eq. \eqref{equ:bregman}, we have
    \begin{align}
        D_{\Phi}(u, u^\star) + D_{\Phi}(u^\star, c) - D_{\Phi}(u, c)=&\inner{\nabla \Phi(u^\star), u-u^\star} - \inner{\nabla\Phi(c), u^\star-c} +\inner{\nabla\Phi(c), u-c}\nonumber\\
        =&\inner{\nabla\Phi(u^\star) - \nabla\Phi(c), u^\star - u }\label{equ:three_point_identity}\\
        =&\underbrace{\inner{\nabla\Phi(u^\star) - \nabla\Phi(v), u^\star - u}}_{\leq 0 \text{ by Lemma 4.1 in \citet{bubeck2012regret}}} + \inner{\nabla\Phi(v) - \nabla\Phi(c), u^\star-u}\nonumber\\
        \leq& \inner{\nabla\Phi(v) - \nabla\Phi(c), u^\star-u}.\nonumber
        % \leq& L\sqrt{D_{\Phi}(v, c)}.\tag{By given assumption.}
    \end{align}
    % The proof is then complete.
\end{proof}

\subsubsection{Extension under Squared $L_2$-Norm}

\begin{lemma}
    \label{lem:l2_pytha}
    Under the condition of Lemma \ref{lem:modified_pytha}, if we take $\Phi$ to be the squared $L_2$-norm (see Example \ref{ex:squared-2norm}) and $\mc{D}=\Delta(\A)$, then for any $u\in\mc{D}$, we have
    $$D_{\Phi}(u, u^\star) + D_{\Phi}(u^\star, c) - D_{\Phi}(u, c)\leq \sqrt{2D_{\Phi}(v, c)}.$$
\end{lemma}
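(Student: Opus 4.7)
The plan is to specialize Lemma \ref{lem:modified_pytha} to the squared $L_2$ setting and then extract a bound in terms of $\sqrt{2D_{\Phi}(v,c)}$ via Cauchy--Schwarz together with simplex geometry.

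First I would invoke Lemma \ref{lem:modified_pytha}. With $\Phi(x)=\tfrac{1}{2}\|x\|_2^2$ we have $\nabla\Phi(x)=x$ and $D_{\Phi}(x,y)=\tfrac{1}{2}\|x-y\|_2^2$, so the conclusion of that lemma reduces to
$$D_{\Phi}(u,u^\star)+D_{\Phi}(u^\star,c)-D_{\Phi}(u,c)\;\leq\;\inner{v-c,\;u^\star-u}.$$
From this expression the strategy is transparent: the right-hand side must be split into a $\|v-c\|_2$ factor, which equals exactly $\sqrt{2D_{\Phi}(v,c)}$, and a $\|u^\star-u\|_2$ factor that is absorbed into a constant using $u,u^\star\in\Delta(\A)$.

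The second step is Cauchy--Schwarz: $\inner{v-c,\,u^\star-u}\leq \|v-c\|_2\,\|u^\star-u\|_2=\sqrt{2D_{\Phi}(v,c)}\,\|u^\star-u\|_2$. To convert this into the stated bound I would use the simplex constraint: since $u,u^\star\geq 0$ and $\sum_a u_a=\sum_a u^\star_a=1$, each coordinate satisfies $|u_a-u^\star_a|\leq 1$ and $\|u^\star-u\|_1\leq 2$, so $\|u^\star-u\|_2^2\leq\|u^\star-u\|_1\leq 2$. Plugging this in delivers the claimed inequality up to the indicated constant.

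The mechanical parts are Steps~1 and~2 (direct plug-in into Lemma \ref{lem:modified_pytha} and one invocation of Cauchy--Schwarz); the delicate piece is Step~3, the geometric bound on $\|u^\star-u\|_2$, because Lemma \ref{lem:modified_pytha} is oblivious to the ambient set $\D$ and the simplex structure has to enter somewhere. If the precise constant in the statement requires a sharper bookkeeping, I would first use the projection's first-order optimality $\inner{u-u^\star,\,u^\star-v}\geq 0$ for $u\in\Delta(\A)$ to replace $\inner{v-c,u^\star-u}$ by the sharper $\inner{u^\star-c,u^\star-u}$ plus a nonpositive remainder, and then apply nonexpansiveness of the Euclidean projection onto $\Delta(\A)$ to bound $\|u^\star-c\|_2$ in terms of $\|v-c\|_2$. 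This is the only step of the argument that relies on the specific geometry of $\Delta(\A)$, and I expect it to be the only part that does not follow by direct computation.
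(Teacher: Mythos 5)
Your main line is essentially the paper's own proof: specialize Lemma~\ref{lem:modified_pytha} via $\nabla\Phi(x)=x$, apply Cauchy--Schwarz to get $\inner{v-c,\,u^\star-u}\le\Norm{v-c}_2\Norm{u^\star-u}_2=\sqrt{2D_{\Phi}(v,c)}\,\Norm{u^\star-u}_2$, and finish with simplex geometry. The one place you diverge is the final constant, and there you are the careful one: the paper asserts $\Norm{u^\star-u}_2\le 1$ ``since $u^\star,u\in\Delta(\A)$,'' which is false in general --- the Euclidean diameter of $\Delta(\A)$ is $\sqrt2$, attained at distinct vertices --- whereas your bound $\Norm{u^\star-u}_2^2\le\Norm{u^\star-u}_1\le2$ is correct and yields $2\sqrt{D_{\Phi}(v,c)}$. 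In fact the lemma as printed fails: with $|\A|=2$, take $u=(0,1)$, $v=(2,-1)$ (so $u^\star=\proj^{\Phi}_{\Delta(\A)}(v)=(1,0)$) and $c=(-9,10)$; then the left side equals $1+100-81=20$, while $\sqrt{2D_{\Phi}(v,c)}=\Norm{v-c}_2=11\sqrt2\approx15.6$. Your weakened bound $2\sqrt{D_{\Phi}(v,c)}=22$ does hold, so ``up to the indicated constant'' is the best achievable outcome, and the discrepancy is immaterial downstream: it merely turns $\psi^{\Phi}(x)=\sqrt{2x}$ into $2\sqrt{x}$ in Theorem~\ref{theo:convergence_l2_bapmd}.

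Your proposed sharpening to recover the printed constant, consistently with the counterexample, cannot work, and it contains two concrete errors. First, the sign of the remainder is backwards: first-order optimality $\inner{u-u^\star,\,u^\star-v}\ge0$ gives $\inner{v-u^\star,\,u^\star-u}\ge0$, so in the decomposition $\inner{v-c,\,u^\star-u}=\inner{u^\star-c,\,u^\star-u}+\inner{v-u^\star,\,u^\star-u}$ the remainder is \emph{nonnegative}, not nonpositive; you cannot upper-bound $\inner{v-c,\,u^\star-u}$ by $\inner{u^\star-c,\,u^\star-u}$. (Working with $\inner{u^\star-c,\,u^\star-u}$ directly is nonetheless legitimate, since by the three-point identity~\eqref{equ:three_point_identity} it \emph{equals} the left side of the lemma --- that is exactly the quantity Lemma~\ref{lem:modified_pytha} starts from before applying this same optimality condition in the other direction.) Second, nonexpansiveness of the Euclidean projection bounds $\Norm{u^\star-\proj_{\Delta(\A)}(c)}_2$ by $\Norm{v-c}_2$, not $\Norm{u^\star-c}_2$; here $c\in\C=\R^{|\A|}$ is generally \emph{not} in $\Delta(\A)$ (in the application $c=\pi^{(k)}_s-\eta_k\widehat{Q}^{(k)}_s$), and the inequality $\Norm{u^\star-c}_2\le\Norm{v-c}_2$ can fail outright: for $v=(2,2)$ one has $u^\star=(\tfrac12,\tfrac12)$, and with $c=(3,3)$, $\Norm{u^\star-c}_2=\tfrac52\sqrt2>\sqrt2=\Norm{v-c}_2$. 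So you should keep your honest version with constant $2\sqrt{D_{\Phi}(v,c)}$ and drop the repair; the statement itself, not your argument, is what needs the extra factor of $\sqrt2$.
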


\begin{proof}
    By Lemma \ref{lem:modified_pytha}, we only need to bound $\inner{\nabla\Phi(v) - \nabla\Phi(c), u^\star-u}$. Then, since $\nabla\Phi(x)=x$, we have
    \begin{align*}
        \inner{\nabla\Phi(v)-\nabla\Phi(c), u^\star-u}\leq \Norm{v - c}_2\Norm{u^\star-u}_2\leq \sqrt{2 D_{\Phi}(v, c)},
    \end{align*}
    where $\Norm{u^\star-u}_2\leq 1$ since $u^\star, u\in\Delta(\A)$.
\end{proof}

\subsubsection{Extension under Negative Entropy}

\begin{lemma}
    \label{lem:kl_pytha}
    Under the condition of Lemma \ref{lem:modified_pytha}, if we take $\Phi$ to be the negative entropy restricted on $\Delta(\A)$ (see Example \ref{ex:ent-simplex}) and assume $\C=\mc{D}=\Delta(\A)$, then for any $u\in\mc{D}$, we have
    % Let $\Phi:\Delta(\A)\mapsto\R$ be the negative entropy restricted on $\Delta(\A)$ (see Example \ref{ex:ent-simplex}), $v, c\in\Delta(\A)$ be two points. Suppose $u^\star=\argmin_{u\in\Delta(\A)}D_{\Phi}(u, v)$\footnote{This projection is redundant here, but we keep the same notation to have an easy comparison with Lemma \ref{lem:l2_pytha}.}. Then, for any $u\in\Delta(\A)$, we have
    \IfTwoColumnElse{
        \begin{align*}
            &D_{\Phi}(u, u^\star) + D_{\Phi}(u^\star, c) - D_{\Phi}(u, c)\\
            \leq& \Sp{1+\Norm{\frac{u}{v}}_{\infty}}\Sp{D_{\Phi}(v, c)+\sqrt{2D_{\Phi}(v, c)}}.
        \end{align*}
    }{
        $$D_{\Phi}(u, u^\star) + D_{\Phi}(u^\star, c) - D_{\Phi}(u, c)\leq \Sp{1+\Norm{\frac{u}{v}}_{\infty}}\Sp{D_{\Phi}(v, c)+\sqrt{2D_{\Phi}(v, c)}}.$$
    }
\end{lemma}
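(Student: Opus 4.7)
My key observation is that since $\C = \D = \Delta(\A)$ and $v \in \C$, the Bregman projection $u^\star = \argmin_{u\in\D} D_\Phi(u, v)$ collapses to $v$ itself (as $D_\Phi(v,v) = 0$). Substituting this into Lemma~\ref{lem:modified_pytha}, and invoking Corollary~\ref{coro:breg_well_defined} to legitimately identify $\nabla\Phi$ with $\log$ in the resulting inner product despite subgradient ambiguity, the LHS collapses to the single quantity $\inner{\log(v/c),\, v - u}$. From here my plan is to peel off a clean $D_\Phi(v,c)$ term using the simplex KL formula~\eqref{eqn:KL-simplex}: since $v\in\Delta(\A)$, $\inner{\log(v/c),v}=D_\Phi(v,c)$, so
\begin{equation*}
    \inner{\log(v/c),\, v - u} \;=\; D_\Phi(v, c) \;+\; \sum_{a} u_a \log(c_a / v_a),
\end{equation*}
and the remaining task is to upper bound the residual sum in terms of $D_\Phi(v,c)$ and $\Norm{u/v}_\infty$.

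For the residual, I would use the elementary inequality $\log(1 + x) \leq x$ with $x = (c_a - v_a)/v_a$, which reduces $\sum_a u_a\log(c_a/v_a)$ to $\sum_a \tfrac{u_a}{v_a}(c_a - v_a)$. I then split this sum according to the sign of $c_a - v_a$: indices with $c_a \leq v_a$ contribute non-positively and can simply be discarded, while indices with $c_a > v_a$ are each bounded by $\Norm{u/v}_\infty (c_a - v_a)$. Summing and using the standard total-variation identity $\sum_{a:\, c_a > v_a}(c_a - v_a) = \tfrac{1}{2}\Norm{c-v}_1$ for probability distributions, followed by Pinsker's inequality $\Norm{c-v}_1 \leq \sqrt{2 D_\Phi(v,c)}$, produces a residual bound of $\tfrac{1}{2}\Norm{u/v}_\infty \sqrt{2 D_\Phi(v,c)}$. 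Combining this with the $D_\Phi(v,c)$ term and loosely coarsening the coefficients (using $\tfrac{1}{2}\leq 1+\Norm{u/v}_\infty$ and tacking on the non-negative slack $\Norm{u/v}_\infty D_\Phi(v,c) + \sqrt{2D_\Phi(v,c)}$) matches the stated bound $(1 + \Norm{u/v}_\infty)(D_\Phi(v, c) + \sqrt{2 D_\Phi(v, c)})$.

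The main obstacle I anticipate is essentially bookkeeping: Pinsker must be applied with $v$ as the first KL argument so that the resulting $\sqrt{D_\Phi(v,c)}$ matches the orientation of the KL on the right-hand side of the target inequality, and the negative-sign terms in the sign split must be discarded rather than bounded uniformly by $\Norm{u/v}_\infty$, which would perversely push the bound in the wrong direction. The non-differentiability of $\Phi$ at points of $\Delta(\A)$ is a second subtlety, but it is entirely absorbed by the subgradient-based extension provided by Lemma~\ref{lem:affine} and Corollary~\ref{coro:breg_well_defined}, so no additional technical work is required there.
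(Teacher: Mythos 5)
Your proof is correct, and it shares the paper's skeleton: you observe that $\C=\D=\Delta(\A)$ forces $u^\star=v$, reduce the left-hand side via Lemma~\ref{lem:modified_pytha} (with the subgradient ambiguity legitimately absorbed by Corollary~\ref{coro:breg_well_defined}) to $\inner{\log(v/c),\,v-u}$, and peel off $D_\Phi(v,c)=D_{\KL}(v\,\|\,c)$ exactly as the paper does. You diverge where the real work happens, namely in bounding the residual. The paper first passes to absolute values, $-\inner{\log(v/c),u}\leq \Norm{u/v}_{\infty}\inner{\abs{\log(v/c)},v}$, and then invokes a separate auxiliary result (its Lemma~\ref{lem:absolute_kl_bound}) showing $\inner{\abs{\log(p/q)},p}\leq D_{\KL}(p\,\|\,q)+\sqrt{2D_{\KL}(p\,\|\,q)}$; that lemma's proof uses precisely your ingredients ($\log x\leq x-1$, the total-variation identity, Pinsker), but applied after the sign information has been discarded. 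You instead estimate the signed residual $\sum_a u_a\log(c_a/v_a)$ directly: apply $\log x\leq x-1$ coordinatewise, drop the nonpositive terms where $c_a\leq v_a$, and bound the rest by $\Norm{u/v}_{\infty}\cdot\frac{1}{2}\Norm{c-v}_1\leq\frac{1}{2}\Norm{u/v}_{\infty}\sqrt{2D_\Phi(v,c)}$. This buys a strictly sharper intermediate bound, $D_\Phi(v,c)+\frac{1}{2}\Norm{u/v}_{\infty}\sqrt{2D_\Phi(v,c)}$: the absolute-value detour forces the paper to carry an extra $\Norm{u/v}_{\infty}D_\Phi(v,c)$ term (hence the $\Sp{1+\Norm{u/v}_{\infty}}$ prefactor on both summands), which your argument avoids entirely before you deliberately coarsen to match the stated inequality. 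Your flagged pitfalls are also the right ones: Pinsker must be oriented as $\Norm{c-v}_1\leq\sqrt{2D_{\KL}(v\,\|\,c)}$ to match the KL orientation of the target, and uniformly replacing $u_a/v_a$ by $\Norm{u/v}_{\infty}$ on terms with $c_a<v_a$ would flip the inequality, so discarding them is necessary. The only caveat worth recording is the degenerate-support bookkeeping: if $v_a=0<u_a$ the right-hand side is infinite and the claim is vacuous, and if $c_a=0<u_a$ your per-coordinate inequality holds in the extended sense $-\infty\leq\,\cdot\,$; the paper handles the same edge cases with the same implicit conventions.
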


\begin{proof}
    By Lemma \ref{lem:modified_pytha}, we only need to bound $\inner{\nabla\Phi(v) - \nabla\Phi(c), u^\star-u}$. Then, since $\C=\mc{D}=\Delta(\A)$, we have $v=u^\star$. Therefore, we have
    \begin{align*}
        \inner{\nabla\Phi(v)-\nabla\Phi(c), u^\star-u}=& \inner{\nabla\Phi(v)-\nabla\Phi(c), v-u}\\
        =&\inner{\log\frac{v}{c}, v-u}\tag{Since $\Phi$ is the negative Shannon entropy}\\
        =& D_{\KL}(v\|c) - \inner{\log\frac{v}{c}, u}\\
        \leq& D_{\KL}(v\|c) + \Norm{\frac{u}{v}}_{\infty}\inner{\abs{\log\frac{v}{c}}, v}\\
        \leq& D_{\KL}(v\|c) + \Norm{\frac{u}{v}}_{\infty}\Sp{D_{\KL}(v\| c)+\sqrt{2 D_{\KL}(v\| c)}}.\tag{By Lemma \ref{lem:absolute_kl_bound}}\\
        \leq& \Sp{1 + \Norm{\frac{u}{v}}_{\infty}}\Sp{D_{\Phi}(v, c)+\sqrt{2 D_{\Phi}(v, c)}}
    \end{align*}
\end{proof}

\begin{lemma}
    \label{lem:absolute_kl_bound}
    For any distributions $p, q\in\Delta(\A)$ such that $p$ is absolutely continuous with respect to $q$, we have $\inner{\abs{\log\frac{p}{q}}, p}\leq D_{\KL}(p\| q)+\sqrt{2D_{\KL}(p\| q)}$.
\end{lemma}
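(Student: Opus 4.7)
The plan is to split the inner product according to the sign of $\log(p/q)$, match up the positive part with the KL divergence, and then bound the resulting negative-part residual using the classical $\log x \leq x-1$ inequality followed by Pinsker's inequality.

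More concretely, first I would define the index sets $S_+ = \{i : p_i \geq q_i\}$ and $S_- = \{i : p_i < q_i\}$ so that
\[
\inner{\abs{\log(p/q)}, p} = \sum_{i \in S_+} p_i \log(p_i/q_i) - \sum_{i \in S_-} p_i \log(p_i/q_i),
\]
while
\[
D_{\KL}(p\|q) = \sum_{i \in S_+} p_i \log(p_i/q_i) + \sum_{i \in S_-} p_i \log(p_i/q_i).
\]
Subtracting these two identities gives the key cancellation
\[
\inner{\abs{\log(p/q)}, p} - D_{\KL}(p\|q) = 2 \sum_{i \in S_-} p_i \log(q_i/p_i),
\]
which reduces the problem to bounding the right-hand side by $\sqrt{2 D_{\KL}(p\|q)}$.

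Next, since for every $i \in S_-$ the ratio $q_i/p_i$ exceeds $1$, I would apply $\log x \leq x - 1$ to obtain
\[
\sum_{i \in S_-} p_i \log(q_i/p_i) \leq \sum_{i \in S_-} (q_i - p_i).
\]
Because $p$ and $q$ both sum to one, the mass excess on $S_-$ equals the mass excess of $p$ over $q$ on $S_+$, and each equals $\tfrac{1}{2}\|p-q\|_1$. Thus the residual is at most $\|p-q\|_1$.

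Finally, I would invoke Pinsker's inequality, $\|p-q\|_1 \leq \sqrt{2 D_{\KL}(p\|q)}$, to close the bound and recover the claimed inequality. The whole argument is short; the only nontrivial step is recognizing the sign-split cancellation in the first paragraph, and the remainder is a routine application of the standard logarithm inequality together with Pinsker. Absolute continuity of $p$ with respect to $q$ ensures all the logarithms encountered are finite on the support of $p$, so no boundary issues arise.
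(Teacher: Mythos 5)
Your proof is correct and takes essentially the same route as the paper's: the identical sign-split of $\log(p/q)$, the same cancellation reducing the problem to the residual $2\sum_{i\in S_-} p_i\log(q_i/p_i)$, then the bound $\log x \le x-1$ followed by Pinsker's inequality. The only cosmetic difference is that you express the intermediate quantity as $\tfrac{1}{2}\|p-q\|_1$ while the paper writes it as the total variation distance.
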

\begin{proof}
    Without loss of generality, assume $\mathrm{supp}(p)=\A$. Now, we define $\A^{+}=\Bp{a\in\A\mid p_a\geq q_a}$ and $\A^{-}=\Bp{a\in\A\mid p_a< q_a}$. Then, when $p, q$ are discrete distributions, we have
    \begin{align*}
        \inner{\abs{\log\frac{p}{q}}, p}=&\sum_{a\in\A^{+}}p_a\log\frac{p_a}{q_a} + \sum_{a\in\A^{-}}p_a\log\frac{q_a}{p_a}\\
        =& \sum_{a\in\A^{+}}p_a\log\frac{p_a}{q_a} - \sum_{a\in\A^{-}}p_a\log\frac{q_a}{p_a} + \sum_{a\in\A^{-}}p_a\log\frac{q_a}{p_a} + \sum_{a\in\A^{-}}p_a\log\frac{q_a}{p_a}\\
        =& D_{\KL}(p\| q) + 2\sum_{a\in\A^{-}}p_a\log\frac{q_a}{p_a}\\
        \leq& D_{\KL}(p\| q) + 2\sum_{a\in\A^{-}}p_a\Sp{\frac{q_a}{p_a}-1}\tag{Since $\log x\leq x-1$ for any $x>0$.}\\
        =& D_{\KL}(p\| q) + 2\sum_{a\in\A^{-}}\Sp{q_a-p_a}\\
        =& D_{\KL}(p\| q) + 2\Norm{q-p}_{\mathrm{TV}}\tag{By definition of total variation distance.}\\
        \leq& D_{\KL}(p\| q) + \sqrt{2D_{\KL}\Sp{p\| q}}.\tag{By Pinsker's inequality.}
    \end{align*}
\end{proof}

\subsection{Proof of Theorem \ref{theo:convergence_l2_bapmd} and Theorem \ref{theo:convergence_kl_bapmd}}
We first recall the Assumption (\textsf{A1}), (\textsf{A1'}), (\textsf{A2}) and (\textsf{A3}) listed in Section \ref{sec:convergence}. Here, we prove Theorem \ref{theo:convergence_l2_bapmd} and Theorem \ref{theo:convergence_kl_bapmd} in a slightly more general version in which the training data distributions can be different from $d^{\bk}_{\rho}$ as long as they satisfy the assumptions. This slight extension makes our result applicable to the offline training setting where $\nu^{\bk}\in\Delta(\S)$ is the reply buffer distribution at $k$-th iteration. Taking $\nu^{\bk}=d^{\bk}_{\rho}$ recovers the online training setting.
% \begin{enumerate}
%     \item[(\textsf{A1})] With initial distribution $\rho\in\Psr(\S)$, there exist constants $\epsilon_{\critic}, \epsilon_{\actor}>0$ such that for any $k$, it holds
%     \begin{align*}
%         &\E_{s\sim d^{\bk}_{\rho}}\Mp{\Norm{\widehat{Q}^{\bk}_s-Q^{\bk}_s}_{\infty}}\leq\epsilon_{\critic},\\
%         &\E_{s\sim d^{\bk}_{\rho}}\Mp{\frac{1}{2}\Norm{f^{\bkone}_s-\Sp{\pi^{\bk}_s + \eta_k\widehat{Q}^{\bk}_s}}_2^2}\leq \eta_k^2\epsilon_{\actor},
%     \end{align*}

%     \item[(\textsf{A1'})] Under the same setting as (\textsf{A1}), we instead have
%     $$\E_{s\sim d^{\bk}_{\rho}}\Mp{D_{\KL}\Sp{\pi^{\bkone}_s\left\|\frac{\pi_s^{\bk}\exp\Sp{-\eta_k\widehat{Q}^{\bk}_s}}{Z^{\bk}_s}\right.}}\leq \eta_k\epsilon_{\actor},$$

%     \item[(\textsf{A2})] With initial distribution $\rho$, there exists constant $\vartheta_{\rho}\geq 1$ such that for any $k$, it holds
%     $$\Norm{\frac{d^{\star}_{\rho}}{d^{\bk}_{\rho}}}_{\infty}\overset{\mathrm{def}}{=}\max_{s\in\S}\frac{d^{\star}_{\rho, s}}{d^{\bk}_{\rho, s}}\leq\vartheta_{\rho},\qquad \Norm{\frac{d^{\bk}_{\rho}}{d^{\bkone}_{\rho}}}_{\infty}\leq\vartheta_{\rho}.$$

%     \item[(\textsf{A3})] There exists constant $C_{\rho}>0$ such that for any $k$, it holds
%     $$\max_{s\in\mathrm{supp}(d^{\bk}_\rho)}\Bp{\Norm{\frac{\pi^\star_s}{\pi^{\bkone}_s}}_{\infty}, \Norm{\frac{\pi^{\bk}_s}{\pi^{\bkone}_s}}_{\infty}}\leq C_{\rho}.$$
% \end{enumerate}

\begin{enumerate}
    \item[(\textsf{A1})] With initial distribution $\rho\in\Delta(\S)$ and reply buffer distribution $\nu^{\bk}\in\Delta(\S)$, there exist constants $\epsilon_{\critic}, \epsilon_{\actor}>0$ such that for any $k$, it holds
    \begin{align*}
        &\E_{s\sim \nu^{\bk}}\Mp{\Norm{\widehat{Q}^{\bk}_s-Q^{\bk}_s}_{\infty}}\leq\epsilon_{\critic},\\
        % &\E_{s\sim \nu^{\bk}}\Mp{D_{\KL}\Sp{\pi^{\bkone}_s\left\|\frac{\pi_s^{\bk}\exp\Sp{-\eta_k\widehat{Q}^{\bk}_s}}{Z^{\bk}_s}\right.}}\leq \eta_k\epsilon_{\actor},
        &\E_{s\sim \nu^{\bk}}\Mp{D_{\KL}\Sp{\pi^{\bkone}_s\left\|\frac{\pi_s^{\bk}\exp\Sp{-\eta_k\widehat{Q}^{\bk}_s}}{Z^{\bk}_s}\right.}}\leq \eta_k\epsilon_{\actor},
    \end{align*}

    \item[(\textsf{A1'})] Under the same setting as (\textsf{A1}), we instead have
    $$\E_{s\sim \nu^{\bk}}\Mp{\frac{1}{2}\Norm{f^{\bkone}_s-\Sp{\pi^{\bk}_s + \eta_k\widehat{Q}^{\bk}_s}}_2^2}\leq \eta_k^2\epsilon_{\actor}.$$

    \item[(\textsf{A2})] With initial distribution $\rho$ and replay buffer distribution $\nu^{\bk}\in\Delta(\S)$, there exists constant $\vartheta_{\rho}\geq 1$ such that for any $k$, it holds
    $$\max \Bp{\Norm{\frac{d^{\star}_{\rho}}{d^{\bkone}_{\rho}}}_{\infty}, \Norm{\frac{d^{\bkone}_{\rho}}{\nu^{\bk}}}_{\infty}, \Norm{\frac{d^{\bkone}_{d^\star_\rho}}{\nu^{\bk}}}_{\infty}, \Norm{\frac{d^{\bkone}_{d^\star_\rho}}{d^\star_\rho}}_{\infty}}\leq\vartheta_\rho.$$

    \item[(\textsf{A3})] There exists constant $C_{\rho}>0$ such that for any $k$, it holds
    $$\max_{s\in\mathrm{supp}(\nu^{\bk})}\Bp{\Norm{\frac{\pi^\star_s}{\pi^{\bkone}_s}}_{\infty}, \Norm{\frac{\pi^{\bk}_s}{\pi^{\bkone}_s}}_{\infty}}\leq C_{\rho}.$$
\end{enumerate}

To present the proof in an unified way, we further define $C_{\rho, s}=\max\Bp{\Norm{\frac{\pi^\star_s}{\pi^{\bkone}_s}}_{\infty}, \Norm{\frac{\pi^{\bk}_s}{\pi^{\bkone}_s}}_{\infty}}$ for some state $s\in\S$ and $\psi_s^{\Phi}:\R_{+}\mapsto\R_{+}$ as
\begin{equation}
    \label{equ:func_phi_s}
    \psi_s^{\Phi}(x)=\begin{cases}
    \sqrt{2x}, &\text{if $\Phi$ is squared $L_2$-norm},\\
    \Sp{1+C_{\rho, s}}\Sp{x+\sqrt{2x}},& \text{if $\Phi$ is the negative entropy on $\Delta(\A)$}.
\end{cases}
\end{equation}
% $$\psi_s^{\Phi}(x)=\begin{cases}
%     \sqrt{2x}, &\text{if $\Phi$ is squared $L_2$-norm},\\
%     \Sp{1+C_{\rho, s}}\Sp{x+\sqrt{2x}},& \text{if $\Phi$ is the negative entropy on $\Delta(\A)$}.
% \end{cases}$$
Then, applying Lemma \ref{lem:l2_pytha} and \ref{lem:kl_pytha} to Algorithm \ref{alg:compo} will result the following key lemma.

\begin{lemma}
	\label{lem:base_relation}
	Consider running Algorithm \ref{alg:compo}. Then, for policy $\pi=\pi^{\bk}$ or $\pi=\pi^\star$, for any $s\in\S$, if $\Phi$ is either squared $L_2$-norm or negative entropy on $\Delta(\A)$, we have
        \begin{align*}
            &\eta_k\inner{\widehat{Q}^{\bk}_s, \pi^{\bkone}_s - \pi_s} + D_{\Phi}\Sp{\pi_s, \pi^{\bkone}_s} + D_{\Phi}\Sp{\pi_s^{\bkone}, \pi^{\bk}_s} - D_{\Phi}\Sp{\pi_s, \pi^{\bk}_s}\\
            \leq&\psi^{\Phi}_s\Sp{D_{\Phi}\Sp{\nabla\Phi^*(f^{\bkone}_s), \nabla\Phi^*\Sp{\nabla\Phi(\pi^{\bk}_s) - \eta_k\widehat{Q}^{\bk}_s}}}.
        \end{align*}
        % \fontsize{10}{10}
        % $$\eta_k\inner{\widehat{Q}^{\bk}_s, \pi^{\bkone}_s - \pi_s} + D_{\Phi}\Sp{\pi_s, \pi^{\bkone}_s} - D_{\Phi}\Sp{\pi_s, \pi^{\bk}_s} \leq\psi^{\Phi}_s\Sp{D_{\Phi}\Sp{\nabla\Phi^*(f^{\bkone}_s), \nabla\Phi^*\Sp{\nabla\Phi(\pi^{\bk}_s) - \eta_k\widehat{Q}^{\bk}_s}}}.$$
        % \normalsize
\end{lemma}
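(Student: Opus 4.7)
The plan is to reduce the left-hand side to the three-term Bregman expression $D_{\Phi}(\pi_s, \pi^{\bkone}_s) + D_{\Phi}(\pi^{\bkone}_s, c) - D_{\Phi}(\pi_s, c)$, where $c \triangleq \nabla\Phi^*\bigl(\nabla\Phi(\pi^{\bk}_s) - \eta_k\widehat{Q}^{\bk}_s\bigr)$ and $v \triangleq \nabla\Phi^*(f^{\bkone}_s)$, and then invoke Lemma \ref{lem:l2_pytha} or Lemma \ref{lem:kl_pytha} with the substitution $u = \pi_s$, $u^\star = \pi^{\bkone}_s$.

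To carry out the reduction, I would first use Lemma \ref{lem:affine} (trivial in the $L_2$ case) to obtain $\inner{\nabla\Phi(c), x - y} = \inner{\nabla\Phi(\pi^{\bk}_s) - \eta_k\widehat{Q}^{\bk}_s, x - y}$ for all $x, y \in \dom\Phi$, and hence $\inner{\nabla\Phi(\pi^{\bk}_s) - \nabla\Phi(c), x - y} = \eta_k\inner{\widehat{Q}^{\bk}_s, x - y}$. Then I would apply the standard three-point identity
\[
D_{\Phi}(x, z) - D_{\Phi}(x, y) - D_{\Phi}(y, z) = \inner{\nabla\Phi(y) - \nabla\Phi(z), x - y}
\]
with $y = \pi^{\bk}_s$ and $z = c$, first at $x = \pi_s$ and then at $x = \pi^{\bkone}_s$, and take the difference. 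This yields $\eta_k\inner{\widehat{Q}^{\bk}_s, \pi^{\bkone}_s - \pi_s}$ on one side and four Bregman terms on the other. Combining with the remaining $D_\Phi$ terms on the LHS of the lemma and tracking the cancellations produces exactly the targeted three-term expression.

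With the reduction in hand, I would invoke Algorithm \ref{alg:compo} Line \ref{line:projection}, which guarantees $\pi^{\bkone}_s = \proj^{\Phi}_{\Delta(\A)}(v) = \argmin_{u \in \Delta(\A)} D_{\Phi}(u, v)$. In the $L_2$ case, Lemma \ref{lem:l2_pytha} then bounds the three-term expression by $\sqrt{2\, D_{\Phi}(v, c)}$, which is precisely $\psi^{\Phi}_s(D_{\Phi}(v, c))$. In the KL case, the softmax form of $\nabla\Phi^*$ already places $v$ in $\Delta(\A)$, so the projection is the identity and $v = \pi^{\bkone}_s$; Lemma \ref{lem:kl_pytha} then supplies the factor $1 + \Norm{u/v}_\infty = 1 + \Norm{\pi_s/\pi^{\bkone}_s}_\infty$, which is bounded by $1 + C_{\rho,s}$ by Assumption (\textsf{A3}) because $\pi_s \in \{\pi^{\bk}_s, \pi^\star_s\}$. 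This matches the definition of $\psi^{\Phi}_s$ in \eqref{equ:func_phi_s}.

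The main obstacle, to my mind, is the KL case, where $\Phi$ is neither of Legendre type nor differentiable, so every occurrence of $\nabla\Phi$ must be interpreted as a subgradient and the identity $\nabla\Phi^*\circ\nabla\Phi = \mathrm{id}$ fails on the simplex. I plan to lean on Corollary \ref{coro:breg_well_defined} to ensure that $\inner{\nabla\Phi(z), x - y}$ is well-defined independent of the choice of subgradient whenever $x, y \in \dom\Phi$, and on Lemma \ref{lem:affine} to justify the replacement $\inner{\nabla\Phi(\nabla\Phi^*(x^*)), x - y} = \inner{x^*, x - y}$ used in the very first step of the reduction. With these tools in place, the algebra goes through uniformly in both cases.
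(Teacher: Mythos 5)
Your proposal is correct and takes essentially the same route as the paper's proof: both rest on the projection step (Line~\ref{line:projection}) together with Lemma~\ref{lem:l2_pytha}/Lemma~\ref{lem:kl_pytha} applied at $u=\pi_s$, $u^\star=\pi^{\bkone}_s$, $v=\nabla\Phi^*(f^{\bkone}_s)$, $c=\nabla\Phi^*\bigl(\nabla\Phi(\pi^{\bk}_s)-\eta_k\widehat{Q}^{\bk}_s\bigr)$, with the three-point identity and Lemma~\ref{lem:affine} (plus Corollary~\ref{coro:breg_well_defined} in the KL case) converting the three-term Bregman expression into the lemma's left-hand side, your only difference being the cosmetic one of applying the identity twice and subtracting rather than once on the full expression. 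One minor attribution quibble: the bound $\Norm{\pi_s/\pi^{\bkone}_s}_\infty\leq C_{\rho,s}$ follows directly from the definition of $C_{\rho,s}$ in Eq.~\eqref{equ:func_phi_s}, with Assumption (\textsf{A3}) needed only later, in the proof of Theorem~\ref{theo:unified_theorem}, to bound $C_{\rho,s}$ by $C_\rho$.
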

\begin{proof}
	Fix some $s\in\S$. Since line \ref{line:projection} of Algorithm \ref{alg:compo} states that
	$$\pi^{\bkone}_s\in\argmin_{\pi'_s\in\Delta(\mc{A})}D_\Phi\Sp{\pi'_s, \nabla \Phi^*(f_s^{(k+1)})},$$
	Then, We can apply Lemma \ref{lem:l2_pytha} or \ref{lem:kl_pytha} with $\D=\Delta(\A)$, $u=\pi_s$, $u^\star=\pi^{\bkone}_s$, $v=\nabla\Phi^*(f_{s}^{\bkone})$ and $c=\nabla\Phi^*\Sp{\nabla\Phi(\pi_s^{\bk})-\eta_k\widehat{Q}^{\bk}_s}$, which gives us
	\begin{align*}
		& D_{\Phi}\Sp{\pi^{\bkone}_s, \nabla\Phi^*\Sp{\nabla\Phi(\pi_s^{\bk})-\eta_k\widehat{Q}^{\bk}_s}} - D_{\Phi}\Sp{\pi_s, \nabla\Phi^*\Sp{\nabla\Phi(\pi_s^{\bk})-\eta_k\widehat{Q}^{\bk}_s}}\\
		& + D_{\Phi}\Sp{\pi_s, \pi^{\bkone}_s} \leq \psi_s^{\Phi}\Sp{D_{\Phi}\Sp{\nabla\Phi^*(f^{\bkone}_s), \nabla\Phi^*\Sp{\nabla\Phi(\pi^{\bk}_s) - \eta_k\widehat{Q}^{\bk}_s}}}.
	\end{align*}
	By using the identity in Eq. \eqref{equ:three_point_identity}, for the left-hand side of the above inequality, we have
	\begin{align*}
		\text{LHS}=&\inner{\nabla\Phi(\pi^{\bkone}_s) - \nabla\Phi\Sp{\nabla\Phi^*\Sp{\nabla\Phi(\pi_s^{\bk})-\eta_k\widehat{Q}^{\bk}_s}}, \pi^{\bkone}_s - \pi_s}\\
            =&\inner{\nabla\Phi(\pi^{\bkone}_s) - \Sp{\nabla\Phi(\pi_s^{\bk})-\eta_k\widehat{Q}^{\bk}_s}, \pi^{\bkone}_s - \pi_s}\tag{By Lemma \ref{lem:affine}.}\\
		=& \eta_k\inner{\widehat{Q}^{\bk}_s, \pi^{\bkone}_s - \pi_s} + \inner{\nabla\Phi(\pi^{\bkone}_s) - \nabla\Phi(\pi_s^{\bk}), \pi^{\bkone}_s - \pi_s}\\
		=& \eta_k\inner{\widehat{Q}^{\bk}_s, \pi^{\bkone}_s - \pi_s} + D_{\Phi}\Sp{\pi_s, \pi^{\bkone}_s} + D_{\Phi}\Sp{\pi_s^{\bkone}, \pi^{\bk}_s} - D_{\Phi}\Sp{\pi_s, \pi^{\bk}_s}\tag{By using the identity in Eq. \eqref{equ:three_point_identity} again on the second term above.}
		% \geq&\eta_k\inner{\widehat{Q}^{\bk}_s, \pi^{\bkone}_s - \pi_s} + D_{\Phi}\Sp{\pi_s, \pi^{\bkone}_s} - D_{\Phi}\Sp{\pi_s, \pi^{\bk}_s}.\tag{Since $D_{\Phi}\Sp{\pi_s^{\bkone}, \pi^{\bk}_s}\geq 0$ as a Bregman divergence.}
	\end{align*}
	The proof is then complete by plugging this inequality back.
\end{proof}

Notice that the conditions in Assumption (\textsf{A1}) and (\textsf{A1'}) can unifiedly written as
$$\E_{s\sim \nu^{\bk}}\Mp{D_{\Phi}\Sp{\nabla\Phi^*(f^{\bkone}_s), \nabla\Phi^*\Sp{\nabla\Phi(\pi^{\bk}_s) - \eta_k\widehat{Q}^{\bk}_s}}}\leq \eta_k^{\omega^{\Phi}}\epsilon_{\actor},$$
where we define $\omega^{\Phi}$ as
$$\omega^{\Phi}=\begin{cases}
    2, &\text{if $\Phi$ is squared $L_2$-norm},\\
    1,& \text{if $\Phi$ is the negative entropy on }\Delta(\A).
\end{cases}$$
Therefore, we can summarize both Theorem \ref{theo:convergence_l2_bapmd} and \ref{theo:convergence_kl_bapmd} into the following theorem and present its proof.
\begin{theorem}
    \label{theo:unified_theorem}
    Consider Algorithm \ref{alg:compo} with initial policy $\pi^{(0)}$, initial distribution $\rho\in\Delta(\S)$ and $\Phi$ being either the squared $L_2$-norm or negative entropy on $\Delta(\A)$. Let Assumption (\textsf{A1}), (\textsf{A1'}), (\textsf{A2}), (\textsf{A3}) hold and suppose the learning rates satisfy $\eta_0 \geq 1$ and $\eta_{k+1}\geq\frac{\vartheta_{\rho}}{\vartheta_{\rho}-1}\eta_k$ for any $k\in[K]$. Then, for any comparator policy $\pi^{\star}$, with $D_0^\star=\E_{s\sim d^\star_{\rho}}\Mp{D_{\Phi}(\pi_s^\star, \pi_s^{(0)})}$, it holds that
    $$V^{(K)}_{\rho}-V^\star_\rho\leq\Sp{1-\frac{1}{\vartheta_{\rho}}}^K\left(V^{(0)}_{\rho}-V^\star_\rho + \frac{D^\star_0 / (\vartheta_{\rho}-1)}{(1-\gamma)\eta_0} \right)  +\frac{\vartheta_{\rho}^2\psi^{\Phi}(\epsilon_{\actor}) + 2\vartheta_{\rho}\epsilon_{\critic}}{1-\gamma},$$
    where we define $\psi^{\Phi}:\R_{+}\mapsto\R_{+}$ as
    \begin{equation}
        \label{equ:func_phi}
        \psi^{\Phi}(x)=\begin{cases}
        \sqrt{2x}, &\text{if $\Phi$ is $L_2$-norm square},\\
        \Sp{1+C_{\rho}}\Sp{x+\sqrt{2x}},& \text{if $\Phi$ is the negative entropy}.
    \end{cases}
    \end{equation}
\end{theorem}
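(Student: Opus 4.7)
The plan is to follow the one-step-progress plus geometric-step-size template of \citet{xiao2022convergence} and \citet{yuan2022linear}, using Lemma~\ref{lem:base_relation} as the approximate three-point identity that replaces the exact Pythagorean property of the tabular case. The goal is to construct a mixed potential involving the value gap $V^{\bk}_\rho-V^\star_\rho$ and the Bregman term $D^\star_k=\E_{s\sim d^\star_\rho}[D_\Phi(\pi^\star_s,\pi^{\bk}_s)]$ (weighted inversely by $\eta_k$) that contracts at rate $1-1/\vartheta_\rho$ per iteration, and then telescope.

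I would first apply Lemma~\ref{lem:base_relation} pointwise in $s$ with comparator $\pi=\pi^\star$, weight and sum against the state distribution that (\textsf{A2}) allows to be converted both to $\nu^{\bk}$ (so (\textsf{A1}) controls the $\psi^\Phi_s$-residual) and to $d^\star_\rho$ (so the Bregman differences match $D^\star_k-D^\star_{k+1}$). Jensen's inequality on the concave $\sqrt{\cdot}$ part of $\psi^\Phi_s$, together with (\textsf{A3}) upgrading the state-wise $C_{\rho,s}$ to the uniform $C_\rho$, reduces this residual to $O\bigl(\vartheta_\rho\psi^\Phi(\epsilon_{\actor})\bigr)$; the condition $\eta_0\geq 1$ is used here to absorb the $\eta_k$-factors so that the a~priori scaling $\psi^\Phi(\eta_k^{\omega^\Phi}\epsilon_{\actor})/\eta_k$ collapses to $\psi^\Phi(\epsilon_{\actor})$. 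The output of this step is a bound on $\E_{s\sim d^\star_\rho}[\inner{\widehat{Q}^{\bk}_s,\pi^{\bkone}_s-\pi^\star_s}]$ in terms of $(D^\star_k-D^\star_{k+1})/\eta_k$ plus errors.

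Next I would invoke the performance-difference identity $V^{\bkone}_\rho-V^\star_\rho=\tfrac{1}{1-\gamma}\E_{s\sim d^\star_\rho}[\inner{Q^{\bkone}_s,\pi^{\bkone}_s-\pi^\star_s}]$ to convert the inner product into the value gap, swapping $\widehat{Q}^{\bk}\!\leftrightarrow Q^{\bk}\!\leftrightarrow Q^{\bkone}$ via (\textsf{A1}) (incurring $\epsilon_{\critic}$) and via the policy-progress term $D_\Phi(\pi^{\bkone}_s,\pi^{\bk}_s)$ that Lemma~\ref{lem:base_relation} also produces on the left-hand side (incurring another $\psi^\Phi$ contribution). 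The contraction factor $1-1/\vartheta_\rho$ enters through the ratio $\|d^\star_\rho/d^{\bkone}_\rho\|_\infty\leq\vartheta_\rho$ in (\textsf{A2}) when relating $V^{\bk}_\rho-V^\star_\rho$ to the Bregman bookkeeping under $d^\star_\rho$. The step-size rule $\eta_{k+1}\geq\tfrac{\vartheta_\rho}{\vartheta_\rho-1}\eta_k$ is equivalent to $1/\eta_{k+1}\leq(1-1/\vartheta_\rho)/\eta_k$, which is exactly what makes the $D^\star_{k+1}/\eta_k$ weight contract at the same rate as the value gap; iterating and summing the geometric tail $\sum_k(1-1/\vartheta_\rho)^{K-1-k}\leq\vartheta_\rho$ against the per-step error then yields the claimed $(1-1/\vartheta_\rho)^K$ decay multiplied onto $V^{(0)}_\rho-V^\star_\rho+D^\star_0/\bigl((1-\gamma)(\vartheta_\rho-1)\eta_0\bigr)$, plus the additive floor $\bigl(\vartheta_\rho^2\psi^\Phi(\epsilon_{\actor})+2\vartheta_\rho\epsilon_{\critic}\bigr)/(1-\gamma)$.

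The main technical obstacle will be the distribution bookkeeping: all four ratios in (\textsf{A2}) are required to align the distribution under which (\textsf{A1}) controls the actor error ($\nu^{\bk}$), the distributions generated by the performance-difference lemma (in particular $d^{\bkone}_{d^\star_\rho}$), and the target $d^\star_\rho$-averaged divergence $D^\star_k$. Additionally, the state-dependence of $\psi^\Phi_s$ in the KL case means that the uniform-$C_\rho$ upgrade via (\textsf{A3}) and the Jensen bound on $\sqrt{\cdot}$ must be composed in the correct order so that a single invocation of (\textsf{A1}) suffices to control the quantity $\E_{s\sim\nu^{\bk}}\bigl[D_\Phi\bigl(\nabla\Phi^*(f^{\bkone}_s),\,\cdot\,\bigr)\bigr]$ that appears inside $\psi^\Phi_s$.
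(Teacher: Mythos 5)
Your skeleton coincides with the paper's proof of Theorem~\ref{theo:unified_theorem} almost everywhere: Lemma~\ref{lem:base_relation} with comparator $\pi=\pi^\star$, expectation under $d^\star_\rho$ so the Bregman terms telescope into $D^\star_k-D^\star_{k+1}$, conversion of the nonnegative $\psi^\Phi_s$-residual to $\nu^{\bk}$ through two applications of (\textsf{A2}) (whence the $\vartheta_\rho^2$), Jensen's inequality with (\textsf{A3}) upgrading $C_{\rho,s}$ to $C_\rho$, the collapse $\psi^\Phi(\eta_k^{\omega^\Phi}\epsilon_{\actor})/\eta_k\le\psi^\Phi(\epsilon_{\actor})$ from $\eta_k\ge 1$, and the potential $\delta_k+D^\star_k/\bigl((1-\gamma)\eta_k(\vartheta_\rho-1)\bigr)$ contracted by the step-size rule and summed geometrically. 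The gap is in the middle, where you convert the regret-type inner product into value gaps.

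Concretely, you propose to bound $\E_{s\sim d^\star_\rho}\Mp{\inner{\widehat{Q}^{\bk}_s,\pi^{\bkone}_s-\pi^\star_s}}$ via the identity $V^{\bkone}_\rho-V^\star_\rho=\frac{1}{1-\gamma}\E_{s\sim d^\star_\rho}\Mp{\inner{Q^{\bkone}_s,\pi^{\bkone}_s-\pi^\star_s}}$, ``swapping $\widehat{Q}^{\bk}\leftrightarrow Q^{\bk}\leftrightarrow Q^{\bkone}$ via (\textsf{A1})''. But (\textsf{A1}) only controls $\widehat{Q}^{\bk}-Q^{\bk}$; nothing in the assumptions controls $Q^{\bkone}-Q^{\bk}$. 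Your fallback of bounding this drift by the policy-progress term $D_\Phi(\pi^{\bkone}_s,\pi^{\bk}_s)$ fails twice over: a bound on $\Norm{Q^{\bkone}-Q^{\bk}}_\infty$ needs the policy movement uniformly over \emph{all} states, whereas the divergences are controlled only in expectation under $\nu^{\bk}$; and the progress term is not small anyway --- with geometrically growing $\eta_k$ the per-iteration movement stays $\Theta(1)$, so the swap injects an $O\bigl(1/(1-\gamma)^2\bigr)$ additive error per step (or, if you keep $D_\Phi(\pi^{\bkone}_s,\pi^{\bk}_s)$ on the left and cancel by completing a square, an error floor growing like $\eta_k$), destroying the stated bound. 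The paper sidesteps this entirely: only $Q^{\bk}$ ever appears. It adds and subtracts $\eta_k\inner{\widehat{Q}^{\bk}_s,\pi^{\bk}_s}$ and applies Lemma~\ref{lem:performance_diff} twice --- under $d^{\bkone}_\rho$ to turn $\inner{Q^{\bk}_s,\pi^{\bkone}_s-\pi^{\bk}_s}$ into $(1-\gamma)(V^{\bkone}_\rho-V^{\bk}_\rho)$, and under $d^\star_\rho$ to turn $\inner{Q^{\bk}_s,\pi^{\bk}_s-\pi^\star_s}$ into $(1-\gamma)(V^{\bk}_\rho-V^\star_\rho)$. The change of measure $d^\star_\rho\to d^{\bkone}_\rho$ needed for the first piece is licensed by a \emph{second} application of Lemma~\ref{lem:base_relation}, with $\pi=\pi^{\bk}$, which establishes the pointwise sign $\Delta^{\bk}_s\le 0$ in Eq.~\eqref{equ:mono_improvement}; an $L_\infty$ ratio from (\textsf{A2}) cannot move an expectation between distributions unless the sign of the integrand is known, and your plan never applies the lemma with $\pi=\pi^{\bk}$. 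That two-fold application is also what manufactures $\vartheta_\rho(\delta_{k+1}-\delta_k)+\delta_k$ in Eq.~\eqref{equ:recurrence_relation}, the actual source of the $1-1/\vartheta_\rho$ contraction; to recover the theorem, replace your $Q$-swap step with this split-and-sign argument.
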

\begin{proof}
    \textbf{\textit{Step 1.}} Fix some $s\in\S$ and $k<K$. First, by Lemma \ref{lem:base_relation} with $\pi_s=\pi^{\bk}_s$, we have
    \begin{align*}
        &\eta_k\inner{\widehat{Q}^{\bk}_s, \pi^{\bkone}_s - \pi^{\bk}_s} + D_{\Phi}\Sp{\pi^{\bk}_s, \pi^{\bkone}_s}\\
        \leq& \psi_s^{\Phi}\Sp{D_{\Phi}\Sp{\nabla\Phi^*(f^{\bkone}_s), \nabla\Phi^*\Sp{\nabla\Phi(\pi^{\bk}_s) - \eta_k\widehat{Q}^{\bk}_s}}},
    \end{align*}
    where we dropped the term $D_{\Phi}\Sp{\pi_s^{\bkone}, \pi^{\bk}_s}$ since it is always non-negative.
    
    Since $D_{\Phi}\Sp{\pi^{\bk}_s, \pi^{\bkone}_s}\geq 0$ as a Bregman divergence, we have
    \fontsize{9.5}{9.5}
    \begin{equation}
        \label{equ:mono_improvement}
        \Delta_s^{\bk}\overset{\text{def}}{=}\eta_k\inner{\widehat{Q}^{\bk}_s, \pi^{\bkone}_s - \pi^{\bk}_s} - \psi_s^{\Phi}\Sp{D_{\Phi}\Sp{\nabla\Phi^*(f^{\bkone}_s), \nabla\Phi^*\Sp{\nabla\Phi(\pi^{\bk}_s) - \eta_k\widehat{Q}^{\bk}_s}}}\leq 0.
    \end{equation}
    \normalsize
    Then, by using Lemma \ref{lem:base_relation} with $\pi_s=\pi^\star_s$, the comparator policy, and similarly dropping $D_{\Phi}\Sp{\pi_s^{\bkone}, \pi^{\bk}_s}$, we have
    \begin{align*}
        &\eta_k\inner{\widehat{Q}^{\bk}_s, \pi^{\bkone}_s - \pi^{\star}_s} + D_{\Phi}\Sp{\pi^{\star}_s, \pi^{\bkone}_s} - D_{\Phi}\Sp{\pi^{\star}_s, \pi^{\bk}_s}\\
    \leq& \psi_s^{\Phi}\Sp{D_{\Phi}\Sp{\nabla\Phi^*(f^{\bkone}_s), \nabla\Phi^*\Sp{\nabla\Phi(\pi^{\bk}_s) - \eta_k\widehat{Q}^{\bk}_s}}}.
    \end{align*}
    By adding and subtracting $\eta_k\inner{\widehat{Q}^{\bk}_s, \pi^{\bk}_s}$ together with some rearrangement, we have
    \begin{align*}
        &\eta_k\inner{\widehat{Q}^{\bk}_s, \pi^{\bkone}_s - \pi^{\bk}_s} - \psi_s^{\Phi}\Sp{D_{\Phi}\Sp{\nabla\Phi^*(f^{\bkone}_s), \nabla\Phi^*\Sp{\nabla\Phi(\pi^{\bk}_s) - \eta_k\widehat{Q}^{\bk}_s}}}\\
     &\qquad+ \eta_k\inner{\widehat{Q}^{\bk}_s, \pi^{\bk}_s - \pi^{\star}_s} \leq D_{\Phi}\Sp{\pi^{\star}_s, \pi^{\bk}_s} - D_{\Phi}\Sp{\pi^{\star}_s, \pi^{\bkone}_s}.
    \end{align*}
    Taking expectation on both sides with respect to distribution $d^{\star}_\rho$, we have
    \begin{equation}
        \label{equ:expect_first_time}
        \begin{split}
    	&\E_{s\sim d^{\star}_{\rho}}\Mp{\eta_k\inner{\widehat{Q}^{\bk}_s, \pi^{\bkone}_s - \pi^{\bk}_s} - \psi_s^{\Phi}\Sp{D_{\Phi}\Sp{\nabla\Phi^*(f^{\bkone}_s), \nabla\Phi^*\Sp{\nabla\Phi(\pi^{\bk}_s) - \eta_k\widehat{Q}^{\bk}_s}}}}\\
    	& \qquad +\eta_k \E_{s\sim d^\star_{\rho}}\Mp{\inner{\widehat{Q}^{\bk}_s, \pi^{\bk}_s - \pi^{\star}_s}} \leq D^\star_{k} - D^{\star}_{k+1},
        \end{split}
    \end{equation}
    where $D^{\star}_k=\E_{s\sim d^\star_{\rho}}\Mp{D_{\Phi}\Sp{\pi^\star_s, \pi^{\bk}_s}}$. 
    
    Then, for the first expectation above, we have
%    \fontsize{9.5}{9.5}
    \begin{align*}
        &\E_{s\sim d^\star_{\rho}}\Mp{\Delta_s^{\bk}}\\
        \geq& \Norm{\frac{d^{\star}_{\rho}}{d^{\bkone}_{\rho}}}_{\infty}\E_{s\sim d^{\bkone}_{\rho}}\Mp{\Delta_s^{\bk}}\tag{By Eq. \eqref{equ:mono_improvement}.}\\
        \geq& \vartheta_{\rho} \E_{s\sim d^{\bkone}_{\rho}}\Mp{\Delta_s^{\bk}}\tag{By Assumption (\textsf{A2}).}\\
        =& \eta_k \vartheta_{\rho} \E_{s\sim d^{\bkone}_{\rho}}\Mp{\inner{\widehat{Q}^{\bk}_s, \pi^{\bkone}_s - \pi^{\bk}_s}} \\
            &\quad - \vartheta_{\rho}^2\E_{s\sim \nu^{\bk}}\Mp{\psi_s^{\Phi}\Sp{D_{\Phi}\Sp{\nabla\Phi^*(f^{\bkone}_s), \nabla\Phi^*\Sp{\nabla\Phi(\pi^{\bk}_s) - \eta_k\widehat{Q}^{\bk}_s}}}}\tag{By Assumption (\textsf{A2}).}\\
        \geq& \eta_k \vartheta_{\rho} \E_{s\sim d^{\bkone}_{\rho}}\Mp{\inner{Q^{\bk}_s, \pi^{\bkone}_s - \pi^{\bk}_s}} + \eta_k\vartheta_{\rho} \E_{s\sim d^{\bkone}_{\rho}}\Mp{\inner{\widehat{Q}^{\bk}_s - Q^{\bk}_s, \pi^{\bkone}_s - \pi^{\bk}_s}}\\
    	&\quad - \vartheta_{\rho}^2\psi^{\Phi}\Sp{\E_{s\sim \nu^{\bk}}\Mp{D_{\Phi}\Sp{\nabla\Phi^*(f^{\bkone}_s), \nabla\Phi^*\Sp{\nabla\Phi(\pi^{\bk}_s) - \eta_k\widehat{Q}^{\bk}_s}}}}\tag{By Assumption (\textsf{A3}), Jensen's inequality and concavity of $\psi_s^{\Phi}$.}\\
        \overset{\text{(i)}}{\geq}& \eta_k \vartheta_{\rho} \E_{s\sim d^{\bkone}_{\rho}}\Mp{\inner{Q^{\bk}_s, \pi^{\bkone}_s - \pi^{\bk}_s}} - \eta_k\vartheta_{\rho}\epsilon_{\critic} - \vartheta_{\rho}^2\psi^{\Phi}\Sp{\eta_k^{\omega^{\Phi}}\epsilon_{\actor}}\tag{By Assumption (\textsf{A1}), (\textsf{A1'}) and monotonicity of $\psi^{\Phi}$.}\\
        =&\Sp{1-\gamma}\eta_k \vartheta_{\rho}\Sp{V_{\rho}^{\bkone} - V_{\rho}^{\bk}} - \eta_k\vartheta_{\rho}\epsilon_{\critic} - \vartheta_{\rho}^2\psi^{\Phi}\Sp{\eta_k^{\omega^{\Phi}}\epsilon_{\actor}}\tag{By Lemma \ref{lem:performance_diff}.}
    \end{align*}
    The inequality (i) above holds also because by H\"{o}lder's inequality, we have 
    $$\inner{\widehat{Q}^{\bk}_s - Q^{\bk}_s, \pi^{\bkone}_s - \pi^{\bk}_s}\leq\Norm{\widehat{Q}^{\bk}_s - Q^{\bk}_s}_{\infty}\Norm{\pi^{\bkone}_s - \pi^{\bk}_s}_1\leq \Norm{\widehat{Q}^{\bk}_s - Q^{\bk}_s}_{\infty}.$$
    
    Similarly, for the second expectation in Eq. \eqref{equ:expect_first_time}, we have
    % \fontsize{9}{9}
    \begin{align*}
        &\eta_k \E_{s\sim d^\star_{\rho}}\Mp{\inner{\widehat{Q}^{\bk}_s, \pi^{\bk}_s - \pi^{\star}_s}}\\
        =& \eta_k \E_{s\sim d^\star_{\rho}}\Mp{\inner{Q^{\bk}_s, \pi^{\bk}_s - \pi^{\star}_s}} + \eta_k \E_{s\sim d^\star_{\rho}}\Mp{\inner{\widehat{Q}^{\bk}_s - Q^{\bk}_s, \pi^{\bk}_s - \pi^\star_s}}\\
        \geq& (1-\gamma)\eta_k\Sp{V^{\bk}_{\rho} - V^{\star}_{\rho}} - \eta_k\vartheta_{\rho}\epsilon_{\critic}.
    \end{align*}
    % \normalsize
    
    By plugging the results above back into Eq. \eqref{equ:expect_first_time} and defining $\delta_k\overset{\text{def}}{=}V^{\bk}_{\rho} - V^{\star}_{\rho}$, we have
    \begin{equation}
        \label{equ:recurrence_relation}
        \vartheta_{\rho}\Sp{\delta_{k+1} - \delta_k} + \delta_k \leq \frac{1}{(1-\gamma)\eta_k}D^\star_k - \frac{1}{(1-\gamma)\eta_k}D^\star_{k+1} + \frac{\vartheta_{\rho}^2\psi^{\Phi}\Sp{\eta_k^{\omega^{\Phi}}\epsilon_{\actor}}}{(1-\gamma)\eta_k} + \frac{2\vartheta_{\rho}\epsilon_{\critic}}{1-\gamma}.
    \end{equation}
    
    \textbf{\textit{Step 2.}} Now, dividing both sides of Eq. \eqref{equ:recurrence_relation} by $\vartheta_{\rho}$ together with some rearrangement, we can have
    $$\delta_{k+1}+\frac{D^\star_{k+1}}{(1-\gamma)\eta_k\vartheta_{\rho}}\leq\Sp{1-\frac{1}{\vartheta_{\rho}}}\Sp{\delta_k+\frac{D^\star_k}{(1-\gamma)\eta_k\Sp{\vartheta_{\rho}-1}}} + \frac{\vartheta_{\rho}\psi^{\Phi}\Sp{\eta_k^{\omega^{\Phi}}\epsilon_{\actor}}}{(1-\gamma)\eta_k} +  \frac{2\vartheta_{\rho}\epsilon_{\critic}}{(1-\gamma)\vartheta_{\rho}}.$$
    Since the learning rates satisfy $\eta_{k+1}(\vartheta_{\rho}-1)\geq\eta_k\vartheta_{\rho}$, we have
    % \fontsize{10}{10}
    \begin{align*}
        &\delta_{k+1}+\frac{D^\star_{k+1}}{(1-\gamma)\eta_{k+1}(\vartheta_{\rho}-1)}\\
        \leq&\Sp{1-\frac{1}{\vartheta_{\rho}}}\Sp{\delta_k+\frac{D^\star_k}{(1-\gamma)\eta_k\Sp{\vartheta_{\rho}-1}}} + \frac{\vartheta_{\rho}\psi^{\Phi}\Sp{\eta_k^{\omega^{\Phi}}\epsilon_{\actor}}}{(1-\gamma)\eta_k} +  \frac{2\vartheta_{\rho}\epsilon_{\critic}}{(1-\gamma)\vartheta_{\rho}}\\
        \leq& \Sp{1-\frac{1}{\vartheta_{\rho}}}\Sp{\delta_k+\frac{D^\star_k}{(1-\gamma)\eta_k\Sp{\vartheta_{\rho}-1}}} + \frac{\vartheta_{\rho}\psi^{\Phi}\Sp{\epsilon_{\actor}}}{(1-\gamma)} +  \frac{2\vartheta_{\rho}\epsilon_{\critic}}{(1-\gamma)\vartheta_{\rho}},
    \end{align*}
    % \normalsize
    where the second inequality above holds because we can straightforwardly verify that $\frac{\psi^{\Phi}(\eta_k^{\omega^{\Phi}}\epsilon_{\actor})}{\eta_k}\leq\psi^{\Phi}(\epsilon_{\actor})$ for either choice of $\Phi$. Then, applying the above relation recursively, we have
    % \begin{equation}
    %     \label{equ:recursion_result}
    %     \begin{split}
    %         \delta_{K} + &\frac{D^\star_K}{(1-\gamma)\eta_K(\vartheta_{\rho}-1)}\leq\Sp{1-\frac{1}{\vartheta_{\rho}}}^K\Sp{\delta_0+\frac{D^\star_0}{(1-\gamma)\eta_0(\vartheta_{\rho}-1)}}\\
    %         &\qquad + \frac{\vartheta_{\rho}\psi\Sp{\epsilon_{\actor}}}{1-\gamma}\sum_{k=0}^{K-1}\Sp{1-\frac{1}{\vartheta_{\rho}}}^k\frac{1}{\eta_{K-k}} + \frac{(1+\vartheta_{\rho})\epsilon_{\critic}}{(1-\gamma)\vartheta_{\rho}}\sum_{k=0}^{K-1}\Sp{1-\frac{1}{\vartheta_{\rho}}}^k.
    %     \end{split}
    % \end{equation}
    % Notice that the learning rates satisfy
    % $$\eta_k\geq\frac{\vartheta_{\rho}}{\vartheta_{\rho}-1}\eta_{k-1}=\frac{1}{1-\frac{\eta_{k-1}}{\vartheta_{\rho}}}\geq\frac{\eta_0}{\Sp{1-\frac{1}{\vartheta_{\rho}}}^k}.$$
    % $$\implies \sum_{k=0}^{K-1}\Sp{1-\frac{1}{\vartheta_{\rho}}}^k\frac{1}{\eta_{K-k}}\leq \sum_{k=0}^{K-1}\Sp{1-\frac{1}{\vartheta_{\rho}}}^k\frac{\Sp{1-\frac{1}{\vartheta_{\rho}}}^{K-k}}{\eta_0}=\Sp{1-\frac{1}{\vartheta_{\rho}}}^K\frac{K}{\eta_0}.$$
    \begin{equation}
        \label{equ:recursion_result}
        \begin{split}
            \delta_{K} + &\frac{D^\star_K}{(1-\gamma)\eta_K(\vartheta_{\rho}-1)}\leq\Sp{1-\frac{1}{\vartheta_{\rho}}}^K\Sp{\delta_0+\frac{D^\star_0}{(1-\gamma)\eta_0(\vartheta_{\rho}-1)}}\\
            &\qquad + \frac{\vartheta_{\rho}\psi^{\Phi}\Sp{\epsilon_{\actor}}}{1-\gamma}\sum_{k=0}^{K-1}\Sp{1-\frac{1}{\vartheta_{\rho}}}^k + \frac{2\vartheta_{\rho}\epsilon_{\critic}}{(1-\gamma)\vartheta_{\rho}}\sum_{k=0}^{K-1}\Sp{1-\frac{1}{\vartheta_{\rho}}}^k.
        \end{split}
    \end{equation}
    We can notice that
    $$\sum_{k=0}^{K-1}\Sp{1-\frac{1}{\vartheta_{\rho}}}^k\leq\frac{1}{1-\Sp{1-\frac{1}{\vartheta_{\rho}}}}=\vartheta_{\rho}.$$
    Therefore, dropping the term with $D^\star_K$ in Eq. \eqref{equ:recursion_result}, we can finally have
    $$V^{(K)}_{\rho}-V^\star_\rho\leq\Sp{1-\frac{1}{\vartheta_{\rho}}}^K\left(V^{(0)}_{\rho}-V^\star_\rho + \frac{D^\star_0 / (\vartheta_{\rho}-1)}{(1-\gamma)\eta_0} \right)  +\frac{\vartheta_{\rho}^2\psi^{\Phi}(\epsilon_{\actor}) + 2\vartheta_{\rho}\epsilon_{\critic}}{1-\gamma}.$$
\end{proof}

Then, Theorem \ref{theo:convergence_l2_bapmd} and \ref{theo:convergence_kl_bapmd} are immediate consequences of Theorem \ref{theo:unified_theorem}.

\convltwo*

\begin{proof}
    Apply $\psi^{\Phi}(x)=\sqrt{2x}$ to Theorem \ref{theo:unified_theorem}.
\end{proof}

\convkl*

\begin{proof}
    Apply $\psi^{\Phi}(x)=(1+C_{\rho})\Sp{x+\sqrt{2x}}$ to Theorem \ref{theo:unified_theorem}.
\end{proof}

\subsection{Technical Lemmas}

\begin{lemma}
    \label{lem:dis_mismatch_bound}
    If we take $\rho=\textsf{Unif}(\S)$, then for any $k$ and any policy $\pi$, it holds that $\Norm{\frac{d^{\pi}_{\rho}}{d^{\bk}_{\rho}}}_{\infty}\leq\frac{\abs{\S}}{1-\gamma}$.
\end{lemma}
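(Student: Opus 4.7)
The plan is to combine two elementary bounds: a pointwise lower bound on the denominator $d^{(k)}_{\rho,s}$ that comes from truncating the series~\eqref{equ:def_visitation}, and a trivial pointwise upper bound on the numerator $d^{\pi}_{\rho,s}$ that comes from it being a probability.

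First, I would invoke the truncation observation already noted right after equation~\eqref{equ:def_visitation}: keeping only the $t=0$ term in the series yields
\[
d^{(k)}_{\rho,s} \;\geq\; (1-\gamma)\,\rho_s
\]
for every policy (in particular for $\pi^{(k)}$) and every state $s$. Specializing to $\rho=\textsf{Unif}(\S)$ gives the uniform-in-$s$ lower bound
\[
d^{(k)}_{\rho,s} \;\geq\; \frac{1-\gamma}{|\S|}.
\]

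Second, since $d^{\pi}_\rho\in\Delta(\S)$ (as verified in Section~\ref{sec:mdp_prelim}), we trivially have $d^{\pi}_{\rho,s}\leq 1$ for every $s\in\S$. Combining these two bounds gives, for every state $s\in\S$,
\[
\frac{d^{\pi}_{\rho,s}}{d^{(k)}_{\rho,s}} \;\leq\; \frac{1}{(1-\gamma)/|\S|} \;=\; \frac{|\S|}{1-\gamma},
\]
and taking the supremum over $s$ yields the claimed bound on $\bigl\|d^{\pi}_\rho/d^{(k)}_\rho\bigr\|_\infty$.

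There is no real obstacle here: the argument is a one-line calculation that only uses the already-stated truncation inequality and the fact that discounted occupancy measures are probability distributions. The only point worth being careful about is that the lower bound $(1-\gamma)\rho_s$ applies uniformly across all policies (in particular to $\pi^{(k)}$), so the same constant $|\S|/(1-\gamma)$ works simultaneously for every $k$ and every numerator policy $\pi$.
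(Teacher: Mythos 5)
Your proof is correct and follows essentially the same route as the paper: both arguments lower-bound the denominator by $(1-\gamma)\rho_s = (1-\gamma)/\abs{\S}$ via truncating the series in Eq.~\eqref{equ:def_visitation} to its $t=0$ term, and upper-bound the numerator by $1$ since $d^{\pi}_{\rho}$ is a probability distribution. Your added remark that the truncation bound holds uniformly over policies (hence over all $k$) is a point the paper leaves implicit, but the substance is identical.
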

\begin{proof}
    By definition of state-visitation distribution in Eq. \eqref{equ:def_visitation}, we can immediately get $d^{\bk}_{\rho, s}\geq (1-\gamma)\rho_s$ for any $s\in\S$ by truncating all terms with $t\geq 1$. Since $\rho=\textsf{Unif}(\S)$, we have $\rho_s=\frac{1}{\abs{\S}}$ for any $s\in\S$. Thus, we have
    $$\Norm{\frac{d^{\pi}_{\rho}}{d^{\bk}_{\rho}}}_{\infty}=\max_{s\in\S}\frac{d^{\pi}_{\rho, s}}{d^{\bk}_{\rho, s}}\leq\frac{1}{(1-\gamma)\rho_s}\leq\frac{\abs{\S}}{1-\gamma}.$$
\end{proof}

\begin{lemma}[Performance Difference Lemma]
	\label{lem:performance_diff}
	For any two policies $\pi, \tilde{\pi}:\S\mapsto\Delta(\A)$ and initial distribution $\rho\in\Delta(\S)$, it holds that
	$$V_{\rho}^{\pi} - V_{\rho}^{\tilde{\pi}} = \frac{1}{1-\gamma}\E_{s\sim d^{\pi}_{\rho}}\Mp{\inner{Q^{\tilde{\pi}}_s, \pi_s - \tilde{\pi}_s}}=\frac{1}{1-\gamma}\E_{s\sim d^{\tilde{\pi}}_{\rho}}\Mp{\inner{Q^{\pi}_s, \pi_s - \tilde{\pi}_s}}.$$
\end{lemma}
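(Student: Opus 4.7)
The plan is to extend the standard (non-regularized) performance difference lemma via a one-step Bellman decomposition combined with infinite-horizon telescoping, with the new ingredient being the bookkeeping of the $\tau\log\pi$ reward-shaping term, which is precisely what will produce the KL divergence correction.

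First I would note the Bellman equation implied by the definition in~\eqref{equ:def_Q_entropy}, namely
$Q^\pi_\tau(s,a) = c(s,a) + \tau\log\pi(a\mid s) + \gamma\,\E_{s'\sim\mc{P}(s,a)}[V^\pi_\tau(s')]$,
together with the elementary identity $V^\pi_\tau(s) = \E_{a\sim\pi_s}[Q^\pi_\tau(s,a)]$. Then, for the first identity, I would write
\begin{align*}
V^\pi_\tau(s) - V^{\tilde\pi}_\tau(s) &= \E_{a\sim\pi_s}[Q^\pi_\tau(s,a)] - \E_{a\sim\tilde\pi_s}[Q^{\tilde\pi}_\tau(s,a)] \\
&= \langle Q^{\tilde\pi}_{\tau,s}, \pi_s - \tilde\pi_s\rangle + \E_{a\sim\pi_s}\bigl[Q^\pi_\tau(s,a) - Q^{\tilde\pi}_\tau(s,a)\bigr],
\end{align*}
and then apply the two Bellman equations to the last term: the cost cancels, the entropy piece becomes $\tau\,\E_{a\sim\pi_s}[\log\pi(a\mid s) - \log\tilde\pi(a\mid s)] = \tau D_{\KL}(\pi_s\|\tilde\pi_s)$, and the continuation gives $\gamma\,\E_{s'}[V^\pi_\tau(s') - V^{\tilde\pi}_\tau(s')]$, where $s'$ is reached from $s$ under policy $\pi$. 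Unrolling this recursion infinitely, taking expectation over $s_0\sim\rho$, and using the definition~\eqref{equ:def_visitation} of $d^\pi_\rho$ to collapse the geometric sum $\sum_t \gamma^t \P^\pi_{s_0\sim\rho}(s_t = s)$ into $\tfrac{1}{1-\gamma}d^\pi_\rho$, yields the first equality.

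For the second identity I would repeat the same decomposition but add and subtract $\E_{a\sim\tilde\pi_s}[Q^\pi_\tau(s,a)]$ instead, producing $\langle Q^\pi_{\tau,s}, \pi_s - \tilde\pi_s\rangle + \E_{a\sim\tilde\pi_s}[Q^\pi_\tau(s,a) - Q^{\tilde\pi}_\tau(s,a)]$. Now the entropy piece becomes $\tau\,\E_{a\sim\tilde\pi_s}[\log\pi - \log\tilde\pi] = -\tau D_{\KL}(\tilde\pi_s\|\pi_s)$, explaining the sign flip and the reversed arguments of the KL. The continuation term now has $s'$ generated from $s$ under $\tilde\pi$, so the telescoping produces expectations under $d^{\tilde\pi}_\rho$ rather than $d^\pi_\rho$.

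The only nontrivial step is the careful treatment of the $t=0$ entropy contribution inside $Q^\pi_\tau(s,a)$: since $a_0$ is fixed in~\eqref{equ:def_Q_entropy}, the term $\tau\log\pi(a\mid s)$ appears outside the sampling, and it is this asymmetry that makes the Bellman identity hold cleanly and lets the KL divergence arise only through the outer $\E_{a\sim\pi_s}$ (resp.\ $\E_{a\sim\tilde\pi_s}$). I would verify this consistency before the telescoping step, as miscounting the $t=0$ entropy term is the main pitfall; everything afterwards is a direct extension of the proof of Lemma~\ref{lem:performance_diff}.
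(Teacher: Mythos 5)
Your argument is mathematically sound, but it is aimed at the wrong statement: what you have written out is a proof of the paper's \emph{modified} performance difference lemma (Lemma~\ref{lem:sac_performance_diff}), with the $\tau\log\pi$ bookkeeping and the resulting $\tau D_{\KL}$ corrections, whereas the stated Lemma~\ref{lem:performance_diff} is the plain unregularized identity --- no $\tau$, no KL terms anywhere. The mismatch also makes your closing sentence circular as written: you defer to ``the proof of Lemma~\ref{lem:performance_diff},'' which is exactly the statement under review. That said, the requested lemma is the $\tau=0$ specialization of what you proved, and your argument survives the specialization intact: with $\tau=0$ the one-step decomposition becomes
\[
V^{\pi}_s - V^{\tilde\pi}_s \;=\; \inner{Q^{\tilde\pi}_s,\,\pi_s-\tilde\pi_s} \;+\; \gamma\,\E_{a\sim\pi_s,\;s'\sim\mc{P}(s,a)}\bigl[V^{\pi}_{s'}-V^{\tilde\pi}_{s'}\bigr],
\]
and its mirror image (adding and subtracting $\E_{a\sim\tilde\pi_s}[Q^{\pi}(s,a)]$ instead) yields $\inner{Q^{\pi}_s,\pi_s-\tilde\pi_s}$ with the continuation under $\tilde\pi$; unrolling and collapsing $\sum_t\gamma^t\,\P^{\pi}_{s_0\sim\rho}(s_t=s)$ into $\frac{1}{1-\gamma}d^{\pi}_{\rho,s}$ (resp.\ $d^{\tilde\pi}_{\rho,s}$) via~\eqref{equ:def_visitation} gives both equalities correctly. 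Your sign checks are also right: $\E_{a\sim\pi_s}[\log\pi-\log\tilde\pi]=D_{\KL}(\pi_s\|\tilde\pi_s)$ versus $\E_{a\sim\tilde\pi_s}[\log\pi-\log\tilde\pi]=-D_{\KL}(\tilde\pi_s\|\pi_s)$, and your observation that the $t=0$ entropy term sits inside $Q^{\pi}_{\tau}(s,a)$ with $a_0$ fixed is precisely what makes $V^{\pi}_{\tau}(s)=\E_{a\sim\pi_s}[Q^{\pi}_{\tau}(s,a)]$ hold.

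On method: the paper gives no proof of Lemma~\ref{lem:performance_diff} at all, citing Lemma~1 of \citet{xiao2022convergence}; that proof, like the paper's own proof of Lemma~\ref{lem:sac_performance_diff}, is the advantage-function telescoping $V^{\tilde\pi}(s_0)=\sum_{t}\gamma^t\bigl(V^{\tilde\pi}(s_t)-\gamma V^{\tilde\pi}(s_{t+1})\bigr)$ evaluated along a $\pi$-trajectory. Your Bellman-recursion-plus-unrolling is the same computation organized recursively rather than as a telescope, so the two routes are interchangeable; yours incidentally establishes the strictly stronger regularized identity used for the SAC analysis. To make your writeup a proof of the statement as posed, either strip the $\tau$ terms throughout, or state explicitly that you prove the regularized version and obtain Lemma~\ref{lem:performance_diff} by setting $\tau=0$ --- and in either case delete the self-referential final appeal to Lemma~\ref{lem:performance_diff}.
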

\begin{proof}
	See Lemma 1 in \citet{xiao2022convergence}.
\end{proof}

\section{Convergence Analysis of SAC}
\label{sec:proof_sac}

In this section, we prove a sublinear convergence rate for SAC under general function approximation by using our framework. It essentially adopts our proof techniques in Theorem \ref{theo:unified_theorem} to an entropy-regularized objective.

We start by presenting a modified version of the performance difference lemma under entropy-regularized reinforcement learning.

\mperdiff*

\begin{proof}
    By definition of the value function, we have
    \begin{align*}
        V_{\tau, \rho}^\pi-V_{\tau, \rho}^{\tilde{\pi}}=& \E_{a_t\sim \pi_{s_t}}\Mp{\sum_{t=0}^{\infty}\gamma^t\Sp{c(s_t, a_t)+\tau\log\pi(a_t\mid s_t)} \mmid s_0\sim\rho} - V_{\tau, \rho}^{\tilde{\pi}}\\
        =& \E_{a_t\sim \pi_{s_t}}\Mp{\sum_{t=0}^{\infty}\gamma^t\Sp{c(s_t, a_t)+\tau\log\pi(a_t\mid s_t)+\gamma V_{\tau}^{\tilde{\pi}}(s_{t+1})-V_{\tau}^{\tilde{\pi}}(s_t)}\mmid s_0\sim\rho}\\
        =& \E_{a_t\sim \pi_{s_t}}\Mp{\sum_{t=0}^{\infty}\gamma^t\Sp{Q_\tau^{\tilde{\pi}}(s_t, a_t)-V_\tau^{\tilde{\pi}}(s_t)+\tau\log\frac{\pi(a_t\mid s_t)}{\tilde{\pi}(a_t\mid s_t)}}\mmid s_0\sim\rho}\\
        =& \E_{a_t\sim \pi_{s_t}}\Mp{\sum_{t=0}^{\infty}\gamma^t\Sp{A_\tau^{\tilde{\pi}}(s_t, a_t) + \tau\log\frac{\pi(a_t\mid s_t)}{\tilde{\pi}(a_t\mid s_t)}}\mmid s_0\sim\rho}\tag{We define $A_\tau^\pi(s, a)=Q_\tau^\pi(s, a)-V_\tau^\pi(s)$.}\\
        =& \frac{1}{1-\gamma}\E_{s\sim d_{\rho}^{\pi}}\Mp{\inner{Q_{\tau, s}^{\tilde{\pi}}, \pi_s-\tilde{\pi}_{s}}+\tau D_{\KL}(\pi_{s}\|\tilde{\pi}_{s})}.
    \end{align*}
    Then, by similarly expanding the term $V_{\tau, \rho}^{\tilde{\pi}}$, we can get
    $$V_{\tau, \rho}^{\pi} - V_{\tau, \rho}^{\tilde{\pi}} = \frac{1}{1-\gamma}\E_{s\sim d^{\tilde{\pi}}_{\rho}}\Mp{\inner{Q^{\pi}_{\tau, s}, \pi_s - \tilde{\pi}_s} - \tau D_{\KL}(\tilde{\pi}_s \| \pi_{s})}.$$
    Thus, the proof is complete.
\end{proof}

Now, we start to prove Theorem \ref{theo:convergence_sac}. Here, for simplicity, we keep using the function $\psi_s$ and $\psi$ defined in Eq. \eqref{equ:func_phi_s} and \eqref{equ:func_phi}. However, we ignore the superscript $\Phi$ since in this concrete example, we only take $\Phi$ to be the negative entropy.

\convsac*

\begin{proof}
    First, it is straightforward to check that Lemma \ref{lem:base_relation} still holds under entropy-regularized reinforcement learning. Then, fix some $s\in\S$ and $k<K$, similar to the proof of Theorem \ref{theo:unified_theorem}, just like Eq. \eqref{equ:mono_improvement}, we also have
    \begin{equation}
        \label{equ:sac_mono_improvement}
        \Delta^{\bk}_{\tau, s}\overset{\text{def}}{=} \eta\inner{\widehat{Q}^{\bk}_{\tau, s}, \pi^{\bkone}_s - \pi^{\bk}_s} - \psi_s\Sp{D_{\KL}\Sp{\pi^{\bkone}_s\ \middle\|\ \pi_s^{\bk}\exp\Sp{-\eta\widehat{Q}^{\bk}_{\tau, s}}/Z^{\bk}_s}} \leq 0.
    \end{equation}
    
    Then, by using Lemma \ref{lem:base_relation} with $\pi_s=\pi^\star_s$, the comparator policy, we have
    \begin{align*}
        &\eta\inner{\widehat{Q}^{\bk}_{\tau, s}, \pi^{\bkone}_s - \pi^{\star}_s} + D_{\KL}\Sp{\pi^{\star}_s\ \middle\|\ \pi^{\bkone}_s} + D_{\KL}\Sp{\pi_s^{\bkone}\ \middle\|\ \pi^{\bk}_s} - D_{\KL}\Sp{\pi^{\star}_s\ \middle\|\ \pi^{\bk}_s}\\
    \leq& \psi_s\Sp{D_{\KL}\Sp{\pi^{\bkone}_s\ \middle\|\ \pi_s^{\bk}\exp\Sp{-\eta\widehat{Q}^{\bk}_{\tau, s}}/Z^{\bk}_s}}.
    \end{align*}
    Notice here the key difference to the proof of Theorem \ref{theo:unified_theorem} is that we do \textbf{not} drop the term $D_{\KL}\Sp{\pi_s^{\bkone}\ \middle\|\ \pi^{\bk}_s}$.

    By some algebraic rearrangement and taking expectation with respect to distribution $d^\star_{\rho}$, we then get
    \begin{equation}
        \label{equ:sac_expect_first_time}
        \begin{split}
    	&\E_{s\sim d^{\star}_{\rho}}\Mp{\eta\inner{\widehat{Q}^{\bk}_{\tau, s}, \pi^{\bkone}_s - \pi^{\bk}_s} - \psi_s\Sp{D_{\KL}\Sp{\pi^{\bkone}_s\ \middle\|\ \pi_s^{\bk}\exp\Sp{-\eta\widehat{Q}^{\bk}_{\tau, s}}/Z^{\bk}_s}}}\\
    	& \qquad +\eta \E_{s\sim d^\star_{\rho}}\Mp{\inner{\widehat{Q}^{\bk}_{\tau, s}, \pi^{\bk}_s - \pi^{\star}_s}} + \E_{s\sim d^\star_{\rho}}\Mp{D_{\KL}\Sp{\pi_s^{\bkone}\ \middle\|\ \pi^{\bk}_s}} \leq D^\star_{k} - D^{\star}_{k+1},
        \end{split}
    \end{equation}
    where $D^{\star}_k=\E_{s\sim d^\star_{\rho}}\Mp{D_{\KL}\Sp{\pi^\star_s\ \middle\|\ \pi^{\bk}_s}}$.

    For the first expectation above, we have
    \begin{align*}
        &\E_{s\sim d^\star_\rho}\Mp{\Delta^{\bk}_{\tau, s}}\\
        \overset{\text{(i)}}{\geq}& \frac{1}{1-\gamma}\E_{s\sim d^{\bkone}_{d^\star_\rho}}\Mp{\Delta^{\bk}_{\tau, s}}\\
        =& \frac{\eta}{1-\gamma}\E_{s\sim d^{\bkone}_{d^\star_\rho}}\Mp{\inner{Q^{\bk}_{\tau, s}, \pi^{\bkone}_s - \pi^{\bk}_s}} + \frac{\eta}{1-\gamma}\E_{s\sim d^{\bkone}_{d^\star_\rho}}\Mp{\inner{\widehat{Q}^{\bk}_{\tau, s} - Q^{\bk}_{\tau, s}, \pi^{\bkone}_s - \pi^{\bk}_s}} \\
        &\qquad - \frac{\vartheta_{\rho}}{1-\gamma}\E_{s\sim\nu^{\bk}}\Mp{\psi_s\Sp{D_{\KL}\Sp{\pi^{\bkone}_s\ \middle\|\ \pi_s^{\bk}\exp\Sp{-\eta\widehat{Q}^{\bk}_{\tau, s}}/Z^{\bk}_s}}}\tag{By Assumption (\textsf{A2})}\\
        \geq& \frac{\eta}{1-\gamma}\E_{s\sim d^{\bkone}_{d^\star_\rho}}\Mp{\inner{Q^{\bk}_{\tau, s}, \pi^{\bkone}_s - \pi^{\bk}_s}} - \frac{\eta\epsilon_{\critic}}{1-\gamma} - \frac{\vartheta_{\rho}}{1-\gamma}\psi(\eta\epsilon_{\actor})\tag{By Assumption (\textsf{A1}), (\textsf{A3}) and concavity of $\psi_s$.}\\
        =&\eta\Sp{V^{\bkone}_{\tau, d^\star_\rho} - V^{\bk}_{\tau, d^\star_\rho}} - \eta\tau\E_{s\sim d^{\bkone}_{d^\star_\rho}}\Mp{D_{\KL}\Sp{\pi^{\bkone}_s\ \middle\|\ \pi^{\bk}_s}} - \frac{\eta \vartheta_\rho \epsilon_{\critic}}{1-\gamma} - \frac{\vartheta_{\rho}\psi(\eta\epsilon_{\actor})}{1-\gamma}\tag{By Lemma \ref{lem:sac_performance_diff}, the modified performance difference lemma.}\\
        \geq& \eta\Sp{V^{\bkone}_{\tau, d^\star_\rho} - V^{\bk}_{\tau, d^\star_\rho}} - \eta\tau\vartheta_\rho\E_{s\sim d^{\star}_{\rho}}\Mp{D_{\KL}\Sp{\pi^{\bkone}_s\ \middle\|\ \pi^{\bk}_s}} - \frac{\eta \vartheta_\rho \epsilon_{\critic}}{1-\gamma} - \frac{\vartheta_{\rho}\psi(\eta\epsilon_{\actor})}{1-\gamma}
    \end{align*}
    Here, the above inequality (i) holds because Eq. \eqref{equ:sac_mono_improvement} holds and we have $d^{\bkone}_{d^\star_\rho, s}\geq (1-\gamma)d^\star_{\rho, s}$ for any $s\in\S$ as introduced in Section \ref{sec:prelim}.

    Then, for the second expectation in Eq. \eqref{equ:sac_expect_first_time}, we can similarly apply Lemma \ref{lem:sac_performance_diff} and obtain
    \begin{align*}
        \eta \E_{s\sim d^\star_{\rho}}\Mp{\inner{\widehat{Q}^{\bk}_{\tau, s}, \pi^{\bk}_s - \pi^{\star}_s}} =& \eta \E_{s\sim d^\star_{\rho}}\Mp{\inner{Q^{\bk}_{\tau, s}, \pi^{\bk}_s - \pi^{\star}_s}} + \eta \E_{s\sim d^\star_{\rho}}\Mp{\inner{\widehat{Q}^{\bk}_{\tau, s} - Q^{\bk}_{\tau, s}, \pi^{\bk}_s - \pi^\star_s}}\\
        \geq& (1-\gamma)\eta\Sp{V^{\bk}_{\tau, \rho} - V^{\star}_{\tau, \rho}} + (1-\gamma)\eta\tau D^\star_k - \eta\vartheta_{\rho}\epsilon_{\critic}.
    \end{align*}

    By plugging these bounds back into Eq. \eqref{equ:sac_expect_first_time}, we then have
    \begin{align*}
        (1-\gamma)\Sp{V^{\bk}_{\tau, \rho} - V^{\star}_{\tau, \rho}}\leq & \frac{D^\star_k}{\eta} - \frac{D^\star_{k+1}}{\eta} + V^{\bk}_{\tau, d^\star_\rho} - V^{\bkone}_{\tau, d^\star_\rho} + \frac{(2-\gamma)\vartheta_\rho\epsilon_{\critic}}{1-\gamma} + \frac{\vartheta_{\rho}\psi(\eta\epsilon_{\actor})}{(1-\gamma)\eta} \\
        &\quad + \Sp{\tau\vartheta_\rho - \frac{1}{\eta}}\E_{s\sim d^\star_\rho}\Mp{D_{\KL}\Sp{\pi^{\bkone}_s\ \middle\|\ \pi^{\bk}_s}} - (1-\gamma)\tau D^\star_k\\
        \leq& \frac{D^\star_k}{\eta} - \frac{D^\star_{k+1}}{\eta} + V^{\bk}_{\tau, d^\star_\rho} - V^{\bkone}_{\tau, d^\star_\rho} + \frac{(2-\gamma)\vartheta_\rho\epsilon_{\critic}}{1-\gamma} + \frac{\vartheta_{\rho}\psi(\eta\epsilon_{\actor})}{(1-\gamma)\eta}\tag{By taking $\eta\leq \frac{1}{\tau\vartheta_\rho}$ and noticing KL divergence is non-negative.}
    \end{align*}
    Finally, by noticing that $\frac{\psi(\eta\epsilon_{\actor})}{\eta}\leq\psi(\epsilon_{\actor})$ and by taking sum from $k=0$ to $K-1$, we can get
    $$\frac{1}{K}\sum_{k=0}^{K-1}\Sp{V^{\bk}_{\tau, \rho} - V^{\star}_{\tau, \rho}}\leq\frac{1}{K}\Sp{\frac{D^\star_0}{(1-\gamma)\eta}+\frac{V^{(0)}_{\tau, d^\star_\rho}}{1-\gamma}} + \frac{\vartheta_{\rho}\psi(\epsilon_{\actor})+ (2-\gamma)\vartheta_\rho\epsilon_{\critic}}{(1-\gamma)^2}.$$
\end{proof}

\section{Implementation Details}
\label{sec:implementation}

\subsection{Algorithm Details}

The implementations of \algoacronym-KL, AMPO and MAMPO are based on modifying the actor loss in SAC while keeping other parts unchanged. Therefore, we will first present the pseudocode of SAC and then give modified actor losses for \algoacronym-KL, AMPO and MAMPO.

\subsubsection{SAC}
The pseudocode of SAC is given in Algorithm \ref{algo:original_sac}.

Here, $J(\tau, \theta)$ and $J_q(\phi_i, \mc{B}, \phi_{\targ, 1}, \phi_{\targ, 2})$ in line \ref{line:update_tau} and \ref{line:update_Q} represent the loss functions to update regularization parameter $\tau$ and q-value networks, respectively. More details of these two loss functions can be found in \citet{haarnoja2018soft}.

\subsubsection{\algoacronym-KL}

To implement \algoacronym-KL, we will basically replace the update rule in \ref{line:update_pi} of Algorithm \ref{algo:original_sac} by \algoacronym-KL's update rule. To do this, we first need to rewrite \algoacronym-KL's update rule in Eq. \eqref{equ:sac_like_loss_regularize} in terms of $q^{\pi}_{\tau}=Q^{\pi}_{\tau}(s, a)+\tau\log\pi(a\mid s)$. In particular, we have
\begin{align*}
    \theta^{\bkone}\in&\argmin_{\theta}\E_{s\sim d^{\bk}_{\rho}}\Mp{D_{\KL}\Sp{\pi^{\theta}_s\ \left\|\  \frac{\pi^{\bk}_s\exp\Sp{\eta_k Q^{\bk}_{\tau, s}}}{Z^{\bk}_s}\right.}}\\
    =&\argmin_{\theta}\E_{s\sim d^{\bk}_{\rho}}\Mp{D_{\KL}\Sp{\pi^{\theta}_s\ \left\|\  \frac{\pi^{\bk}_s\exp\Sp{\eta_k q^{\bk}_{\tau, s}-\eta_k\tau\log\pi^{\bk}_s}}{Z^{\bk}_s}\right.}}\\
    =&\argmin_{\theta} \underset{\substack{s\sim d^{\bk}_{\rho} \\ a\sim\pi^{\theta}_s}}{\E} \Mp{\log\pi^{\theta}(a\mid s) - (1-\eta_k\tau)\log\pi^{\bk}(a\mid s) - \eta_k q^{\bk}_{\tau}(s, a)}\tag{By ignoring normalization constants.}\\
    =&\argmin_{\theta} \underset{\substack{s\sim d^{\bk}_{\rho} \\ a\sim\pi^{\theta}_s}}{\E} \Mp{\tau\log\pi^{\theta}(a\mid s) - (1-\beta)\tau\log\pi^{\bk}(a\mid s) - \beta q^{\bk}_{\tau}(s, a)},
\end{align*}
where $\beta\overset{\mathrm{def}}{=}\eta_k\tau<1$. We can see that the update rule exactly becomes the SAC's update rule when $\beta=1$, which means to have $\eta_k=\frac{1}{\tau}$, consistent with our derivation in Section \ref{sec:equivalence}.

\begin{algorithm}[ht]
    \caption{Soft Actor-Critic (SAC) \citep{haarnoja2018soft}}
    \label{algo:original_sac}
    \begin{algorithmic}[1]
        \STATE \textbf{Input:} Initial policy network parameter $\theta$; initial q-value network parameters $\phi_1$, $\phi_2$; replay buffer $\mc{D}$; learning rates $\lambda_q$, $\lambda_\pi$, $\lambda$; target mixture weight $\omega\in(0, 1)$; initial regularization power $\tau>0$; number of gradient steps per iteration $m$
        \STATE Set $\phi_{\targ, 1}^{(0)}\leftarrow\phi_1$ and $\phi_{\targ, 2}^{(0)}\leftarrow\phi_2$
        \STATE Initialize $k\leftarrow0$
        \WHILE{\textit{not done}}
            \STATE Observe state $s$ and take action $a\sim\pi_\theta(\cdot\mid s)$
            \STATE Observe next state $s'$, reward $r$, done signal $d$ and add $(s, a, s', r, d)$ to buffer $\mc{D}$\\ \hfill // \COMMENT{$d=1$ if $s'$ is a terminal state; otherwise, $d=0$}
            \STATE If $s'$ is a terminal state, reset the environment
            \IF{\textit{it's time to update}}
                \STATE Randomly sample a batch of transitions $\mc{B}=\Bp{(s, a, r, s', d)}\subseteq\mc{D}$
                \STATE Update regularization parameter by $$\tau^{\bkone}\leftarrow \tau^{\bk} - \lambda\nabla_{\tau}J(\tau, \theta^{\bk})$$\label{line:update_tau}
                \STATE Update q-value networks by $$\phi_i^{\bkone}\leftarrow \phi_i^{\bk} -\lambda_q\nabla_{\phi_i}J_q(\phi_i, \mc{B}, \phi_{\targ, 1}^{\bk}, \phi_{\targ, 2}^{\bk}, \theta^{\bk}, \tau^{\bkone}), \quad i=1, 2$$\label{line:update_Q}
                \STATE Set $\theta^{\bkone}_0\leftarrow \theta^{\bk}$
                \FOR{$j=1, \dots, m$}
                    \STATE Update policy network by {\color{blue} $$\theta^{\bkone}_{j}\leftarrow \theta^{\bkone}_{j-1} - \lambda_{\pi}\nabla_\theta \underset{\substack{s\sim\mc{B}\\ a\sim\pi^\theta(\cdot\mid s)}}{\E} \left.\Mp{\tau^{\bkone}\log\pi^\theta(a\mid s)-\min_{i=1, 2}q^{\phi^{\bkone}_i}(s, a)}\right|_{\theta=\theta^{\bkone}_{j-1}}$$}\label{line:update_pi}
                \ENDFOR
                \STATE Set $\theta^{\bkone}\leftarrow\theta^{\bkone}_m$
                \STATE Update target networks with $$\phi_{\targ, i}^{\bkone}\leftarrow \omega\phi_{\targ, i}^{\bk} + (1-\omega)\phi_i^{\bkone},\quad i=1, 2$$
                \STATE Update $k\leftarrow k+1$
            \ENDIF
        \ENDWHILE
    \end{algorithmic}
\end{algorithm}

Therefore, to implement \algoacronym-KL, we replace the update rule in line \ref{line:update_pi} of Algorithm \ref{algo:original_sac} by
% \fontsize{10}{10}
{\color{blue} 
\begin{align*}
    \theta^{\bkone}_{j}\leftarrow \theta^{\bkone}_{j-1} - &\lambda_{\pi}\nabla_\theta \underset{\substack{s\sim\mc{B}\\ a\sim\pi^\theta_s}}{\E} \left[\tau^{\bkone}\log\pi^\theta(a\mid s) - (1-\beta)\tau^{\bkone}\log\pi^{\theta^{\bk}}(a\mid s)\right. \\
    & \qquad\qquad\qquad \left.\left. -\beta \min_{i=1, 2}q^{\phi^{\bkone}_i}(s, a)\right]\right|_{\theta=\theta^{\bkone}_{j-1}},
\end{align*}
}
% \normalsize
where $\beta$ is a user-specified hyperparameter. Note that we use the standard reparameterization trick to compute the above gradient \citep{kingma2013auto}.

\subsubsection{AMPO}

To implement AMPO in \citet{alfano2023novel}, we will need to replace the update rule in line \ref{line:update_pi} of Algorithm \ref{algo:original_sac} by AMPO's loss in Eq. \eqref{equ:loss_apmo} in a more concrete form. That is, we have
\begin{align}
    \theta^{\bkone}\in&\argmin_{\theta}\E_{s\sim d^{\bk}_\rho}\Mp{\Norm{f^{\theta}_s - \Sp{\log\pi^{\bk}_s + \eta_k Q^{\bk}_s}}_2^2}\label{equ:ampo_var_1}\\
    =&\argmin_{\theta}\E_{s\sim d^{\bk}_\rho}\Mp{\Norm{f^{\theta}_s - (1-\eta_k \tau)\log\pi^{\bk}_s - \eta_k q_{\tau, s}^{\bk}}_2^2}.\nonumber
\end{align}
Here, $f^{\theta}$ should be the exponent of a Gaussian distribution. Therefore, if $g^{\theta}:\S\mapsto\R^{2n_{\A}}$ is the policy network, where $n_{\A}$ is the dimension of the action space. Then, we have
\begin{equation}
    \label{equ:gaussian_exponent}
    f^{\theta}(s, a)=-\sum_{i=1}^{n_{\A}}\frac{\Sp{g^{\theta}(s)_i}^2-2a_ig^{\theta}(s)_i}{2\Sp{g^{\theta}(s)_{n_{\A}+i}}^2}.
\end{equation}
Therefore, to implement AMPO, we replace the update rule in line \ref{line:update_pi} of Algorithm \ref{algo:original_sac} by
% \fontsize{10}{10}
{\color{blue}
\begin{align*}
    \theta^{\bkone}_{j}\leftarrow \theta^{\bkone}_{j-1} - &\lambda_{\pi}\nabla_\theta \underset{\substack{s\sim\mc{B}\\ a\sim\mathsf{Unif}(\A)}}{\E} \left[\left(f^{\theta}(s, a) - (1-\eta \tau^{\bkone})\log\pi^{\theta^{\bk}}(a\mid s)\right.\right.\\
    &\qquad\qquad\qquad \left.\left.\left. - \eta \min_{i=1, 2}q^{\phi_i^{\bk}}(s, a)\right)^2\right]\right|_{\theta=\theta^{\bkone}_{j-1}},
\end{align*}
}
% \normalsize
where $\eta$ is the mirror descent learning rate.

\subsubsection{MAMPO}
As discussed in Section \ref{sec:experiments}, MAMPO tries to optimize
$$\theta^{\bkone}\in\argmin_{\theta}\E_{s\sim d^{\bk}_\rho}\Mp{\Norm{f^{\theta}_s - \Sp{\pi^{\bk}_s + \eta_k Q^{\bk}_s}}_2^2},$$
where $f^{\theta}(s, a)$ is defined in Eq. \eqref{equ:gaussian_exponent}. Therefore, to implement MAMPO, we replace the update rule in line \ref{line:update_pi} of Algorithm \ref{algo:original_sac} by
% \fontsize{10}{10}
{\color{blue}
\begin{align*}
    \theta^{\bkone}_{j}\leftarrow \theta^{\bkone}_{j-1} - & \lambda_{\pi}\nabla_\theta \underset{\substack{s\sim\mc{B}\\ a\sim\mathsf{Unif}(\A)}}{\E} \left[\left(f^{\theta}(s, a) - \pi^{\theta^{\bk}}(a\mid s) - \eta \min_{i=1, 2}q^{\phi_i^{\bk}}(s, a)\right.\right.\\
    &\qquad\qquad\qquad\left.\left.\left. + \eta\tau^{\bkone}\log\pi^{\theta^{\bk}}(a\mid s)\right)^2\right]\right|_{\theta=\theta^{\bkone}_{j-1}}.
\end{align*}
}
% \normalsize

\subsection{Hyperparameter Settings}
We use the implementation of SAC from the \textit{Stable Baseline 3} under the MIT license \citep{raffin2021stable}. Then, we implement \algoacronym-KL, AMPO-KL, and MAMPO as modifications of its SAC's implementation. All model trainings were completed on 8 NVIDIA V100 GPUs in cluster.

We use SAC's default hyperparameters on all environments for both SAC and \algoacronym-KL, while AMPO-KL and MAMPO contain some tuning. Full hyperparameter details are provided in Table \ref{tab:hyper}.
\begin{table}[ht]
    \centering
    \caption{Hyperparameters of all algorithms}
    \label{tab:hyper}
    \begin{threeparttable}
        \begin{tabular}{c|cccc}
            \toprule
            Hyperparameter & SAC & \textbf{\algoacronym-KL} & AMPO & MAMPO \\
            \midrule
            Adam learning rate & $3\times 10^{-4}$ & $3\times 10^{-4}$ & $2\times 10^{-5}$ & $3\times 10^{-4}$ \\
            MD learning rate ($\eta$) & NA & NA & 1.0 & 1.0\\
            Entropy regularization ($\tau$) & auto\tnote{*} & auto\tnote{*} & 0 & 0\\
            \midrule
            Number of hidden layers & \multicolumn{4}{c}{2} \\
            Hidden layer size & \multicolumn{4}{c}{256} \\
            Batch size & \multicolumn{4}{c}{256} \\
            Discount factor ($\gamma$) & \multicolumn{4}{c}{0.99} \\
            Target mixture weight ($\omega$) & \multicolumn{4}{c}{0.005} \\
            Replay buffer size & \multicolumn{4}{c}{$1\times 10^6$} \\
            \bottomrule
        \end{tabular}
        \begin{tablenotes}
            \item[*] Being ``auto'' in entropy regularization means to use the update rule at line \ref{line:update_tau} of Algorithm \ref{algo:original_sac} to automatically adjust $\tau$.
        \end{tablenotes}
    \end{threeparttable}
\end{table}

Particularly for hyperparameter $\beta$ in \algoacronym-KL, we take different values for different tasks as shown in Table \ref{tab:beta}.
\begin{table}[ht]
    \centering
    \caption{Values of hyperparameter $\beta$ in \algoacronym-KL for different MuJoCo tasks.}
    \label{tab:beta}
    \begin{tabular}{c|cccc}
        \toprule
         Environments   &  HalfCheetah-v4 & Hopper-v4 & Walker2d-v4 & Ant-v4\\
        \midrule
        $\beta$  &  0.7 & 0.6 & 0.4 & 0.7\\
        \bottomrule
    \end{tabular}
\end{table}

\section{Additional Experiment Results on \AMPO}
\label{sec:additional_results}
% \subsection{More Gradient Steps per Iteration}
% Here, we provide a performance comparison when we run all algorithms with $m=10$ gradient steps per iteration, which is given in Fig. \ref{fig:exp_results_m=10}.

% \begin{figure*}[ht]
%     \centering
%     \includegraphics[width=\linewidth]{figs/main_compare_m=10.pdf}
%     \caption{Comparison under $m=10$ gradient steps per iteration between SAC, \algoacronym-KL, AMPO-KL and MAMPO. Each curve is averaged over 5 different random seeds and the shaded area represents the standard deviation.}
%     \label{fig:exp_results_m=10}
% \end{figure*}

% We can see that SAC and \algoacronym-KL still have similar performance. MAMPO has large improvement over itself with $m=1$ gradient step per iteration, but still quite inferior to SAC and \algoacronym-KL except in Hopper-v4. Finally, AMPO-KL still cannot learn anything non-trivial in all environments.

% \subsubsection{Comparison between MAMPO and AMPO-KL}

% \subsection{Variants and Empirical Tuning of AMPO-KL}
In this section, we first introduce the original version of AMPO proposed in \citet{alfano2023novel}, which is slightly different from what we have in Eq. \eqref{equ:ampo_var_1}. Then, we present a partial record of our efforts in tuning AMPO, which shows the difficulty of using this algorithm in practical scenario. Nevertheless, we retain the possibility that our implementation of AMPO may not be the optimal.

\subsection{Variants of AMPO}
The original version of AMPO proposed in \citet{alfano2023novel} is given as
\begin{equation}
    \label{equ:ampo_var_2}
    \theta^{\bkone}\in\argmin_{\theta}\E_{s\sim d^{\bk}_\rho}\Mp{\Norm{f^{\theta}_s - \Sp{f^{\bk}_s + \eta_k Q^{\bk}_s}}_2^2}.
\end{equation}
While seemingly different from Eq. \eqref{equ:ampo_var_1}, these two are essentially the same from a theoretical perspective. To see this, as discussed in Example \ref{ex:ent-simplex}, when $\Phi$ is the negative entropy restricted on $\Delta(\A)$, we can freely take $\nabla\Phi(\pi)$ to be any vector in $\partial\Phi(\pi)$ while the corresponding Bregman divergence $D_{\Phi}$ is still well-defined. In particular, we have $\partial\Phi(\pi)=\Bp{\log\pi+c\bm{1}\mid c\in\R}$ with $\bm{1}=\matenv{1 & \cdots & 1}^\top$. As a result, since the difference between $f^{\theta}_s$ and $\log\pi^{\bk}_s$ is only an action-independent normalization constant, Eq. \eqref{equ:ampo_var_1} and Eq. \eqref{equ:ampo_var_2} are theoretically equivalent.\footnote{While \citet{alfano2023novel} claims to obtain Eq. \eqref{equ:ampo_var_2} by taking $\Phi$ to be the negative entropy on $\R^{\abs{\A}}_+$, this is not an appropriate argument because such a choice of $\Phi$ will enforce $\nabla\Phi(\pi)=\log\pi + \bm{1}$, excluding the freedom of choosing action-independent constant.}

Nevertheless, Eq. \eqref{equ:ampo_var_1} and Eq. \eqref{equ:ampo_var_2} may still be empirically different since the constant difference can still affect the $L_2$-loss minimization. Therefore, we consider and empirically compare these two different theoretically equivalent variants of AMPO-KL.\footnote{We use the variant in Eq. \eqref{equ:ampo_var_1} in all previous experiments.}

\subsection{Comparison between MAMPO and AMPO-KL}
Here, we provide a comparison between MAMPO and the two variants of AMPO-KL in Fig. \ref{fig:ampo_compare}, where both variants use the same set of hyperparameters as given in Table \ref{tab:hyper}.
\begin{figure*}[ht]
    \centering
    \includegraphics[width=\linewidth]{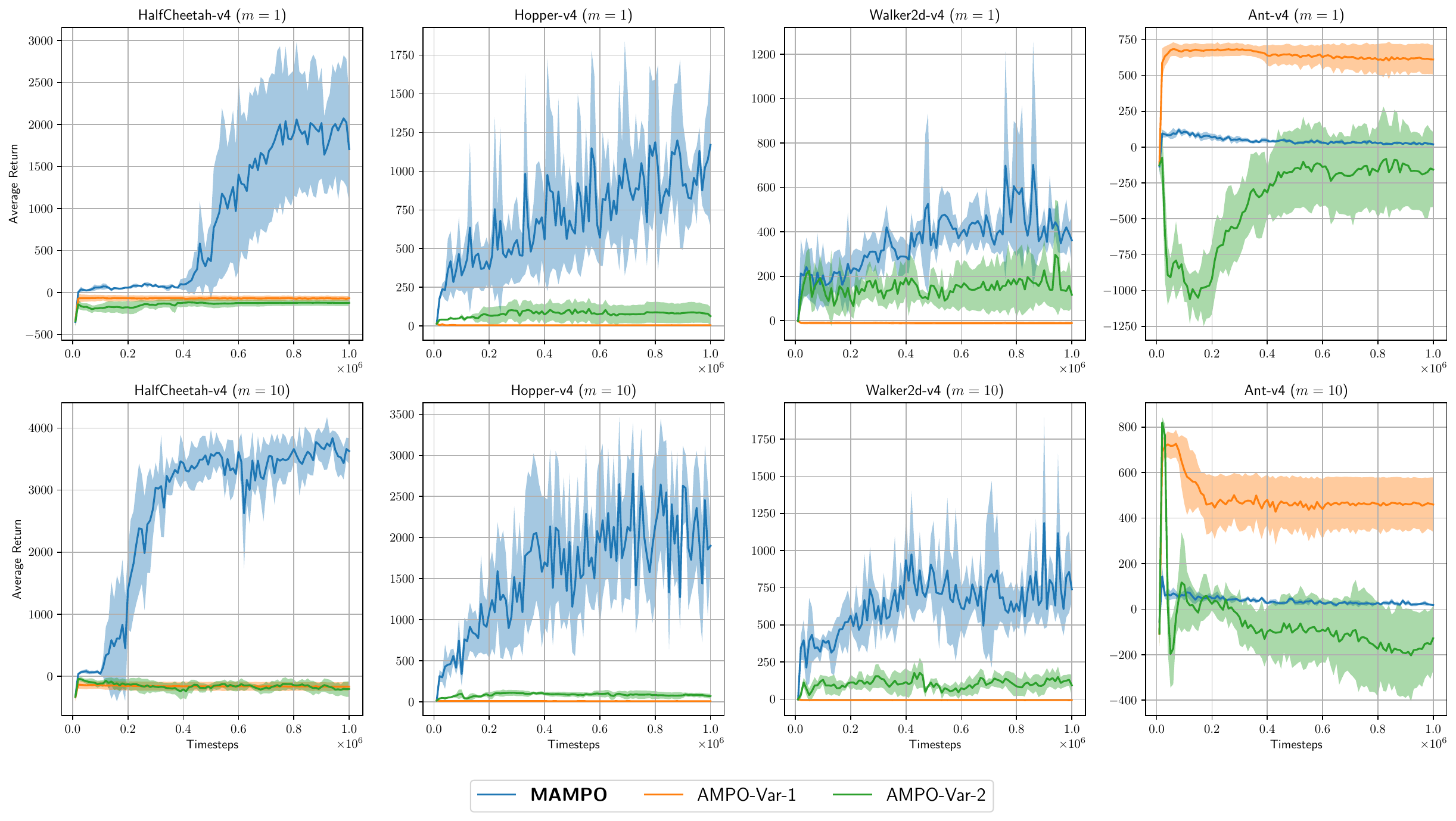}
    \caption{Comparison under $m=1$ and $m=10$ gradient steps per iteration between MAMPO and variants of AMPO-KL. Here, ``AMPO-Var-1'' refers to Eq. \eqref{equ:ampo_var_1} and ``AMPO-Var-2'' refers to Eq. \eqref{equ:ampo_var_2}. Each curve is averaged over 5 different random seeds and the shaded area represents the $95\%$ confidence interval.}
    \label{fig:ampo_compare}
\end{figure*}

We can see that both variants of AMPO-KL almost cannot learn anything non-trivial in all tasks.

\subsection{AMPO-KL under Different Hyperparameters}
Finally, we also provide a performance comparison of variants of AMPO-KL under different hyperparameter settings, where we only show the final-step performance under each setting, given in Table \ref{tab:hsearch1}, \ref{tab:hsearch2}, \ref{tab:hsearch3} and \ref{tab:hsearch4}. Nevertheless, we can easily see that AMPO-KL still cannot learn anything non-trivial under all of these settings.

\begin{table}[ht]
    \centering
    \caption{Final-step performance of AMPO-Var-1 (Eq. \eqref{equ:ampo_var_1}) in HalfCheetah-v4 with entropy regularization ($\tau=1.0$). Each data point is averaged over 3 different random seeds and $\pm$ represents the $95\%$ confidence interval.}
    \label{tab:hsearch1}
    \begin{tabular}{c|c|c|c|c}
        \toprule
         & $\mathrm{lr}=5\times 10^{-6}$ & $\mathrm{lr}=1\times 10^{-5}$ & $\mathrm{lr}=5\times 10^{-5}$ & $\mathrm{lr}=1\times 10^{-4}$ \\
        \midrule
        $\eta_k=0.1$ & $-94.08\pm 37.82$ & $-77.18\pm 16.73$ & $-3.33\pm 0.17$ & $-178.56\pm 41.57$ \\
        \hline
        $\eta_k=1$ & $-79.8\pm 47.96$ & $-61.83\pm 20.85$ & $-4.19\pm 0.49$ & $-14.93\pm 11.05$ \\
        \hline
        $\eta_k=10$ & $-27.83\pm 28.96$ & $-201.02\pm 71.98$ & $-8.37\pm 0.33$ & $-7.58\pm 0.73$ \\
        \hline
        $\eta_k=100$ & $-220.2\pm 124.88$ & $-210.46\pm 168.3$ & $-8.12\pm 0.35$ & $-7.36\pm 0.75$ \\
        \bottomrule
    \end{tabular}
\end{table}

\begin{table}[ht]
    \centering
    \caption{Final-step performance of AMPO-Var-1 (Eq. \eqref{equ:ampo_var_1}) in HalfCheetah-v4 without entropy regularization ($\tau=0$). Each data point is averaged over 3 different random seeds and $\pm$ represents the $95\%$ confidence interval.}
    \label{tab:hsearch2}
    \begin{tabular}{c|c|c|c|c}
        \toprule
         & $\mathrm{lr}=5\times 10^{-6}$ & $\mathrm{lr}=1\times 10^{-5}$ & $\mathrm{lr}=5\times 10^{-5}$ & $\mathrm{lr}=1\times 10^{-4}$ \\
        \midrule
        $\eta_k=0.1$ & $-131.27\pm 62.23$ & $-129.24\pm 51.24$ & $-123.17\pm 63.02$ & $-109.34\pm 84.59$ \\
        \hline
        $\eta_k=1$ & $-93.94\pm 46.46$ & $-95.82\pm 49.19$ & $-83.12\pm 59.42$ & $-97.82\pm 47.88$ \\
        \hline
        $\eta_k=10$ & $-50.57\pm 45.8$ & $-98.75\pm 24.9$ & $-81.51\pm 29.18$ & $-57.17\pm 39.36$ \\
        \hline
        $\eta_k=100$ & $-301.4\pm 128.66$ & $-295.53\pm 63.03$ & $-196.38\pm 136.29$ & $-255.11\pm 143.2$ \\
        \bottomrule
    \end{tabular}
\end{table}

\begin{table}[ht]
    \centering
    \caption{Final-step performance of AMPO-Var-2 (Eq. \eqref{equ:ampo_var_2}) in HalfCheetah-v4 with entropy regularization ($\tau=1.0$). Each data point is averaged over 3 different random seeds and $\pm$ represents the $95\%$ confidence interval.}
    \label{tab:hsearch3}
    \begin{tabular}{c|c|c|c|c}
        \toprule
         & $\mathrm{lr}=5\times 10^{-6}$ & $\mathrm{lr}=1\times 10^{-5}$ & $\mathrm{lr}=5\times 10^{-5}$ & $\mathrm{lr}=1\times 10^{-4}$ \\
        \midrule
        $\eta_k=0.1$ & $-3.56\pm 0.29$ & $-3.48\pm 0.52$ & $-3.39\pm 0.22$ & $-3.59\pm 0.22$ \\
        \hline
        $\eta_k=1$ & $-4.34\pm 0.47$ & $-4.24\pm 0.32$ & $-4.29\pm 0.26$ & $-4.22\pm 0.34$ \\
        \hline
        $\eta_k=10$ & $-8.38\pm 0.23$ & $-8.33\pm 0.12$ & $-8.4\pm 0.29$ & $-8.08\pm 0.57$ \\
        \hline
        $\eta_k=100$ & $41.59\pm 79.77$ & $-8.38\pm 0.1$ & $-8.33\pm 0.56$ & $-8.36\pm 0.52$ \\
        \bottomrule
    \end{tabular}
\end{table}

\begin{table}[ht]
    \centering
    \caption{Final-step performance of AMPO-Var-2 (Eq. \eqref{equ:ampo_var_2}) in HalfCheetah-v4 without entropy regularization ($\tau=0$). Each data point is averaged over 3 different random seeds and $\pm$ represents the $95\%$ confidence interval.}
    \label{tab:hsearch4}
    \begin{tabular}{c|c|c|c|c}
        \toprule
         & $\mathrm{lr}=5\times 10^{-6}$ & $\mathrm{lr}=1\times 10^{-5}$ & $\mathrm{lr}=5\times 10^{-5}$ & $\mathrm{lr}=1\times 10^{-4}$ \\
        \midrule
        $\eta_k=0.1$ & $-120.73\pm 34.97$ & $-178.62\pm 34.49$ & $-174.38\pm 67.63$ & $-144.4\pm 54.12$ \\
        \hline
        $\eta_k=1$ & $-87.45\pm 5.28$ & $-121.66\pm 56.37$ & $-129.73\pm 43.94$ & $-157.94\pm 49.55$ \\
        \hline
        $\eta_k=10$ & $-534.72\pm 205.25$ & $-237.98\pm 127.79$ & $-176.42\pm 235.9$ & $-199.77\pm 12.45$ \\
        \hline
        $\eta_k=100$ & $-341.7\pm 94.17$ & $139.05\pm 118.93$ & $-271.07\pm 149.01$ & $-411.67\pm 56.92$ \\
        \bottomrule
    \end{tabular}
\end{table}

%%%%%%%%%%%%%%%%%%%%%%%%%%%%%%%%%%%%%%%%%%%%%%%%%%%%%%%%%%%%

\end{document}